\documentclass{article}

\PassOptionsToPackage{numbers,square}{natbib}
\usepackage[preprint]{Styles/neurips_2026}

\usepackage[utf8]{inputenc} % allow utf-8 input
\usepackage[T1]{fontenc}    % use 8-bit T1 fonts
\usepackage{url}            % simple URL typesetting
\usepackage{booktabs}       % professional-quality tables
\usepackage{amsfonts}       % blackboard math symbols
\usepackage{nicefrac}       % compact symbols for 1/2, etc.
\usepackage{microtype}      % microtypography
\usepackage{xcolor}         % colors

\title{Sampling-Based Safe Reinforcement Learning}

\author{%
  Luca Vignola\thanks{Corresponding author: \texttt{lvignola@ethz.ch}} \\
  ETH Zurich\\
\And
Bruce D.~Lee\\
  ETH Zurich\\
  \And
Manish Prajapat 
\\
ETH Zurich \\
\AND
Manuel Wendl 
\\
ETH Zurich \\
\And
Melanie Zeilinger
\\
ETH Zurich \\
\And
Andreas Krause
\\
ETH Zurich \\
\And
Yarden As \\
ETH Zurich
}

\usepackage{amsmath,amsthm}
\usepackage{graphicx}
\usepackage[T1]{fontenc}
\usepackage[utf8]{inputenc}
\usepackage[inline]{enumitem}
\usepackage{dsfont}

\usepackage[english]{babel}
\usepackage{amssymb,amsfonts}
\usepackage{bbm}
\usepackage{nicefrac}
\usepackage{xfrac}
\usepackage{subcaption}
\usepackage{booktabs}
\usepackage{multirow}
\usepackage{tikz}
\usepackage{tikz-3dplot}
\usetikzlibrary{3d,arrows.meta,patterns,decorations.markings,calc,positioning}
\usepackage[colorinlistoftodos,bordercolor=purple,backgroundcolor=pink!20,linecolor=pink,textsize=scriptsize]{todonotes}
\usepackage{wrapfig}
\usepackage{microtype}      % microtypography
\usepackage{parskip}
\usepackage{caption}
\usepackage{needspace}
\usepackage{placeins}

\usepackage{thm-restate}
\usepackage{empheq}
\usepackage{color}
\definecolor{cool-blue}{HTML}{1D6996}
\definecolor{cool-purple}{HTML}{5F4690}
\definecolor{cool-teal}{HTML}{38A6A5}
\definecolor{cool-green}{HTML}{0F8554}
\definecolor{dark-green}{rgb}{0.0,0.5,0.0}

\newcommand{\algname}[1]{{\small\sf#1}}
\usepackage[pagebackref]{hyperref}
\hypersetup{
	colorlinks=true,
	linkcolor=cool-blue,
	filecolor=cool-blue,
    citecolor=cool-green,
	urlcolor=cool-blue
}
\renewcommand*{\backrefalt}[4]{%
    \ifcase #1 \footnotesize{(Not cited.)}%
    \or        \footnotesize{(Cited on page~#2)}%
    \else      \footnotesize{(Cited on pages~#2)}%
    \fi}

\usepackage{algorithm}
\usepackage{algpseudocode}
\usepackage[capitalize]{cleveref}
\usepackage[theorems,skins]{tcolorbox}
\usepackage{accents}
\usepackage{stackengine}

\stackMath          % <- makes stack commands use math mode

\newtheorem{definition}{Definition}
\crefname{definition}{definition}{definitions}
\newtheorem{assumption}{Assumption}
\crefname{assumption}{assumption}{assumptions}

\crefname{theorem}{theorem}{theorems}
\newtheorem{corollary}{Corollary}
\crefname{corollary}{corollary}{corollaries}
\newtheorem{lemma}{Lemma}
\crefname{lemma}{lemma}{lemmas}
\theoremstyle{remark}

\crefname{remark}{remark}{remarks}

\newcommand{\calA}{\ensuremath{\mathcal{A}}}

\newcommand{\calD}{\ensuremath{\mathcal{D}}}

\newcommand{\calF}{\ensuremath{\mathcal{F}}}
\newcommand{\calG}{\ensuremath{\mathcal{G}}}
\newcommand{\calH}{\ensuremath{\mathcal{H}}}

\newcommand{\calX}{\ensuremath{\mathcal{X}}}

\newcommand{\calZ}{\ensuremath{\mathcal{Z}}}

\newcommand{\curly}[1]{\mathopen{}\left\{#1\right\}\mathclose{}}
\newcommand{\brac}[1]{\left[#1\right]}

\usepackage[colorinlistoftodos,bordercolor=purple,backgroundcolor=pink!20,linecolor=pink,textsize=scriptsize]{todonotes}

\newcommand{\norm}[1]{\ensuremath{\left\| #1 \right\|}}

\begin{document}

\maketitle

\begin{abstract}
\looseness=-1
Safe exploration remains a fundamental challenge in reinforcement learning (RL), limiting the deployment of RL agents in the real world. 
We propose \emph{Sampling-Based Safe Reinforcement Learning} (\algname{SBSRL}), a model-based RL algorithm that maintains safety throughout the learning process by enforcing constraints jointly across a \emph{finite} set of dynamics samples. 
This formulation approximates an intractable worst-case optimization over uncertain dynamics and enables practical safety guarantees in continuous domains. 
We further introduce an exploration strategy based on constraining epistemic uncertainty, eliminating the need for explicit exploration bonuses.
Under regularity conditions, we derive high-probability guarantees of safety throughout learning and a finite-time sample complexity bound for recovering a near-optimal policy. 
Empirically, \algname{SBSRL} achieves safe and efficient exploration both in simulation and in real robotic hardware, and readily extends to practical deep-ensemble implementations that scale to high-dimensional continuous control problems.
\end{abstract}

\section{Introduction}

\looseness=-1 Reinforcement learning (RL) is a framework for sequential decision-making that has achieved remarkable success across a wide range of application domains~\citep{minh,silver2017mastering,RLrobotics,RLroboticsTang,DegraveFusion,ouyang2022traininglanguagemodelsfollow}.
Despite this progress, training RL policies \emph{during deployment} remains challenging due to two fundamental barriers: \emph{sample efficiency} and \emph{safety}.
Standard RL algorithms typically require a large number of interactions with the environment, which is often impractical or costly in physical systems. 
Moreover, ensuring safety during training is itself a central challenge: because of epistemic uncertainty about the system dynamics, the consequences of any action are uncertain; in this sense, every interaction with the environment is inherently exploratory, and may cause catastrophic failures.
Together, these challenges limit the applicability of RL largely to settings that can be accurately simulated or where high-quality demonstration data is available.

\looseness=-1
A key challenge in \emph{safe exploration}~\citep{amodei2016concrete} is to satisfy constraints despite uncertainty, while simultaneously visiting uncertain regions of the state-action space to gather informative data and improve performance. 
Model-based RL provides a natural framework for addressing this tension by explicitly modeling uncertainty in the learned environment dynamics. Existing approaches \citep{as2025actsafe,wendl2026safe} typically use such uncertainty estimates to pessimistically tighten constraints to guarantee safety throughout learning, and to add uncertainty-driven bonuses to encourage exploration. However, these methods often 
\begin{enumerate*}[label=\textbf{(\roman*)}]
 \item introduce excessive conservatism through their constraint tightening, and 
 \item require careful tuning of exploration bonuses to ensure both convergence and effective exploration.
\end{enumerate*}

\looseness=-1 To address these challenges, we propose \emph{Sampling-Based Safe Reinforcement Learning} (\algname{SBSRL}), a model-based algorithm for safe and sample-efficient exploration.
\algname{SBSRL} learns an uncertainty-aware dynamics model, leveraging its \emph{epistemic} uncertainty to drive exploration while also maintaining safety at all times.
Sampling-based methods provide flexible uncertainty estimates, scale to expressive model classes \citep{chua2018deep} and admit natural parallelization.
\algname{SBSRL} leverages these properties to obtain a tractable approximation to the robust optimization problem at the core of safe exploration: finding a policy that satisfies safety constraints under \emph{worst-case} dynamics among a set of plausible models of the environment (see \Cref{pb:intractable}). Rather than solving this worst-case problem explicitly, \algname{SBSRL} enforces safety on each sample of the dynamics, as we illustrate in \Cref{fig:hardware_flagship}.
In addition, rather than adding an exploration bonus to the objective, \algname{SBSRL} encodes exploration as a constraint that requires the agent to collect sufficient information along its trajectories. As a result, exploration is triggered only when a greedy policy is not informative enough, yielding a simpler and more robust strategy for safe exploration.
\definecolor{obstacle-red}{HTML}{FF0533}
\definecolor{obstacle}{HTML}{38A6A5}
\definecolor{truef}{HTML}{5F4690}
\definecolor{goal}{HTML}{1D6996} %{73AF48}{EDAD08}
\begin{wrapfigure}[29]{r}{0.5\textwidth}
    \centering
    \vspace{0.5cm} 
    \begin{tikzpicture}
        \node[inner sep=0] (img) {\includegraphics[width=\linewidth]{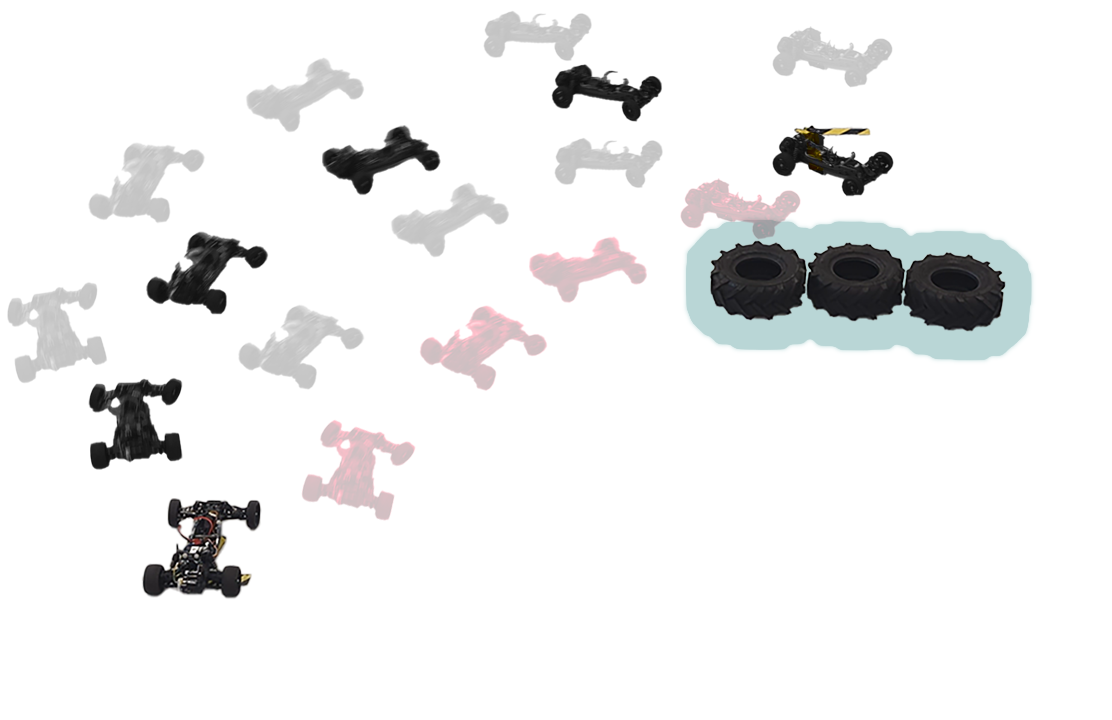}};

        \begin{scope}[shift={(img.north west)},
                      x={(img.north east)},
                      y={(img.south west)}]

            \node[
                fill=white, rounded corners=2pt, inner sep=2pt,
                font=\small, align=left, text=goal
            ] (tightlbl) at (0.93,0.12)
            {Goal};
            \draw[-{Stealth[length=2mm]}, goal, line width=1pt]
                (tightlbl.west) to[out=180,in=45] (0.79,0.18);
            
            \node[
                fill=white, rounded corners=2pt, inner sep=2pt,
                font=\small, align=left, text=obstacle
            ] (tightlbl) at (0.60,0.70)
            {tightened constraint};

            \draw[-{Stealth[length=2mm]}, obstacle, line width=1pt]
                (tightlbl.north) to[out=90,in=230] (0.75,0.50);
% labels for trajectories already visible in the image
            \node[
                fill=white, fill opacity=0.9, text opacity=1,
                rounded corners=2pt, inner sep=1.5pt,
                font=\scriptsize
            ] at (0.13,0.13) {$f_n^1$};

            \node[
                fill=white, fill opacity=0.9, text opacity=1,
                rounded corners=2pt, inner sep=1.5pt,
                font=\scriptsize
            ] at (0.33,0.39) {$f_n^2$};
            
            \node[
                fill=white, fill opacity=0.9, text opacity=1,
                rounded corners=2pt, inner sep=1.5pt,
                font=\scriptsize, text=obstacle-red
            ] at (0.46,0.56) {$f_n^M$};

            \node[
                fill=white, fill opacity=0.9, text opacity=1,
                rounded corners=2pt, inner sep=1.5pt,
                text=truef
            ] at (0.26,0.28) {$f^{\star}$};

        \end{scope}
    \end{tikzpicture}
    \caption{\looseness-1 We illustrate a safe exploration task in which a car with unknown dynamics must learn to reach a goal while avoiding obstacles (tires). At each episode $n$, \algname{SBSRL} maintains a set of $M$ dynamics samples $f_n^1,\dots,f_n^M$ and enforces \textcolor{obstacle}{tightened} safety constraints jointly across their corresponding model-generated rollouts, denoted by transparent cars. 
For instance, the depicted policy would lead to constraint violations under the plausible model \textcolor{obstacle-red}{$f_n^M$}. \algname{SBSRL} updates its policy by simulating it in dynamics $f_n^1,\dots,f_n^M$, thereby ensuring safety under the true system dynamics \textcolor{truef}{$f^\star$}. As more episodes are collected and uncertainty shrinks, the sampling-based approximation becomes progressively less conservative, enabling \algname{SBSRL} to discover higher-reward trajectories and ultimately reach the goal.}
\label{fig:hardware_flagship}
\end{wrapfigure}

\paragraph{Our contribution.}
\begin{itemize}[leftmargin=0.5cm, parsep=0.3em]
    \item We propose \algname{SBSRL}, a novel model-based RL algorithm for safe exploration in continuous state-action spaces. \algname{SBSRL} enforces safety using a finite number of dynamics samples, and promotes exploration via a constraint on the epistemic uncertainty.
    \item We show that when the dynamics are modeled using a Gaussian process, \algname{SBSRL} guarantees safety with high probability. In addition, we provide a sample-complexity bound showing that \algname{SBSRL} obtains near-optimal policies in a finite number of episodes.
    \item \looseness-1 We empirically validate \algname{SBSRL} both in simulation and on hardware. With Gaussian process dynamics models, consistent with our theory,
    \algname{SBSRL} achieves safe and efficient exploration. We also introduce a scalable neural ensemble variant that enforces constraints jointly across multiple dynamics and evaluate it in SafetyGym and RWRL \citep{Ray2019,challengesrealworld}. Finally, we show that these principles carry over to real-world hardware, enabling safe online learning in practice.
\end{itemize}

\section{Related Work}

\paragraph{Safety in CMDPs.}
\looseness=-1 Safety in RL has been modeled in various ways \cite{Garca2015ACS, brunke2022safe, gu2024review}. 
Among these, constrained Markov decision processes (CMDPs) \cite{altman} provide a natural framework, as they enjoy many classical results from planning in MDPs, and can be used to formulate different notions of safety \cite{Wagener2021SafeRL, curi2022safe, NEURIPS2023_5d4cd12e}.
Many works address learning and planning in CMDPs, both in discrete~\cite{vaswani2022nearoptimal,Ding2022ConvergenceAS,noRegretCMDP} and continuous~\cite{cpo,as2022constrained,sootla2022saute,huang2024safedreamer} state-action spaces. However, these algorithms ensure safety only at convergence, and may violate constraints during learning.
This problem of safely learning, i.e., safe exploration, has been tackled in tabular CMDPs ~\cite{moldovan2012safe,turchetta2016safe,Wachi_Sui_Yue_Ono_2018,wachi2020safereinforcementlearning,ding2021provably,bura2022dope}, where strong guarantees are available, but these approaches do not scale well to continuous problems. Extensions to continuous state-action spaces have been explored in the model predictive control (MPC) literature~\citep{Koller2018LearningBasedMP,hewing2019cautious,wabersich2021predictive,manish,prajapat2024towards}. 
However, these techniques rely on online planning, requiring the solution of an optimization problem at each step and thus incurring significant computational cost.

\paragraph{Sampling-based algorithms.}
\looseness=-1 Sampling-based methods are popular in robotics for their efficiency and parallelizability \citep{tobin2017domain,reachabilityanalysis22,reachabilityrobotics}, but they remain largely restricted to unconstrained settings. Notable exceptions include scenario optimization \cite{scenario2006,2018ecc..conf..228U}, which requires sampling from the true data-generating process, conformal prediction \cite{Lindemann2025FormalVA}, which typically yields post-hoc guarantees; and dynamics sampling approaches within MPC \cite{prajapat2024towards,prajapat2025finite}. Closely related to our work, \citet{prajapat2025finite} sample functions from GP dynamics models and enforce constraints jointly across all sampled dynamics within an MPC framework. We apply this principle---to the best of our knowledge, for the first time---to continuous-domain CMDPs. Existing CMDP methods \citep{as2022constrained,as2025actsafe} typically tackle epistemic uncertainty via worst-case optimization over plausible models. However, this optimization is generally intractable and approximated via heuristics, sacrificing the original theoretical guarantees. 
We close this gap by jointly solving the CMDP across all sampled dynamics and establishing theoretical guarantees for the resulting algorithm.

\paragraph{Epistemic exploration and optimality.}
\looseness=-1 In online learning, purely greedy approaches under a nominal model may explore inadequately, motivating optimism-based methods \cite{NIPS2006_c1b70d96,Srinivas_2012,HURCL}. 
In constrained settings, however, optimism alone is not sufficient, since the feasible set may fail to expand enough to include the optimal solution \cite{sui2015safe,hubotter2024transductive}.
In the CMDP setting, \citet{as2025actsafe} propose \algname{ActSafe}, an explore-then-commit strategy \citep{garivier2016explore}, with simple regret optimality guarantees. However, such pure exploration approaches prioritize information gathering and may sacrifice reward performance in the early stages of learning. More recently, \citet{wendl2026safe} develop an algorithm based on intrinsic rewards~\citep{sukhija2025sombrl} that achieves sublinear cumulative regret. While theoretically grounded, intrinsic reward methods are often sensitive in practice to the choice of hyperparameters, potentially leading to over-exploration or premature exploitation. In contrast, our approach encourages exploration through a constraint on the epistemic uncertainty, allowing the policy to remain greedy whenever the constraint is satisfied. Akin to our work \citet{prajapat2025safe} adopts an exploration constraint formulation, however in an MPC framework, whereas we establish simple regret guarantees in the CMDP setting.

\section{Problem Setting}\label{section:problem_setting}
\subsection{Constrained Markov Decision Processes}
\looseness=-1
We consider a discrete-time finite-horizon CMDP defined by the tuple $\langle\mathcal X, \mathcal A, T, p, r, c, d, \rho_0\rangle$. The sets $\mathcal{X} \subseteq \mathbb R^{d_x}$ and $\mathcal{A} \subseteq \mathbb R^{d_a}$ denote the state and action spaces, respectively. The episode horizon is $T$. The reward and cost functions are assumed to be bounded and given by $r: \mathbb{R}^{d_x} \times \mathbb{R}^{d_a} \to [0, R_{\max}]$ and $c: \mathbb{R}^{d_x} \times \mathbb{R}^{d_a} \to [0, C_{\max}]$, respectively. 
The initial state distribution is denoted as $\rho_0$ and the transition probability $p(\cdot \mid x,a)$ is induced by the stochastic dynamics
\begin{align}
    \label{eq: dyn}
    x_{t+1} = f^\star(x_t, a_t) + w_t, \quad t = 0, \dots, T-1, 
\end{align}
where $f^\star$ is an unknown function and $\{w_t\}_{t = 0}^{T-1}$ is a sequence of noise variables taking values in $\mathcal{W} \subseteq \mathbb{R}^{d_x}$.
The agent selects actions according to a policy $\pi:\calX\to\calA$ from a class $\Pi$. While finite-horizon MDPs require time-dependent policies~\citep{puterman2014markov}, we omit the dependence on $t$ for notational brevity.

Given a policy $\pi$ and a transition function $f$, the noisy dynamics \eqref{eq: dyn} induces a trajectory distribution from which we define 
the expected reward return $J_r(\pi, f)$ and cost return $J_c(\pi, f)$, where the return of an arbitrary function $g: \calX \times \calA \to \mathbb{R}$ is given by 
\begin{align}
    \label{eq: return}
    J_g(\pi, f) \coloneq \mathbb{E}_{\tau_\pi^f}\brac{\sum_{t=0}^{T-1} g(x_t, a_t)}.
\end{align}
Here, the subscript ${\tau_\pi^f}$ in the expectation denotes a trajectory rolled out according to the policy $a_t = \pi(x_t)$ and the dynamics $x_{t+1} = f(x_t, a_t) + w_t$. The expectation is taken over the initial state $x_0\sim \rho_0$ and the process noise sequence $\{w_t\}_{t=0}^{T-1}$. The goal of the learning agent is to find a policy that maximizes the expected reward return subject to the expected cost remaining below a threshold $d$, both evaluated under the true, unknown dynamics $f^\star$: 
\begin{equation}
\label{eq: synthesis}
\begin{aligned}
\pi^\star \in \arg\max_{\pi \in \Pi} J_r(\pi, f^\star) \quad 
\text{s.t.} \quad J_c(\pi, f^\star) \le d.
\end{aligned}
\end{equation}
\subsection{Task}
Since $f^\star$ is unknown, the agent interacts with the system over multiple episodes to acquire information. During episode $n$, the agent executes policy $\pi_n$ and collects a trajectory of length $T$ consisting of transitions $\curly{(x_t^n, a_t^n, x_{t+1}^n)}_{t=0}^{T-1}$.  
Importantly, beyond the final objective in \eqref{eq: synthesis}, we require safety \emph{during learning}: the policy $\pi_n$ selected \emph{at each episode} must satisfy the constraint despite not knowing the true dynamics, i.e.,
\begin{align}
    J_c(\pi_n, f^\star) \leq d  \quad\forall n\in\{0,1,\dots\}. 
\end{align}

Our goal is to design an algorithm that collects data to progressively reduce uncertainty about the system until \eqref{eq: synthesis} can be solved up to arbitrary tolerances. The algorithm is designed to provably
\begin{enumerate*}[label=\textbf{(\roman*)}]
\item satisfy the constraint on the expected cost return at every episode,
\item terminate after a \emph{finite} number of episodes, and 
\item achieve a near-optimal return upon termination.
\end{enumerate*}
To this end, we make the following assumptions, which are standard in the safe MBRL literature (see, e.g., Assumptions 4.1-4.5 in \cite{as2025actsafe} and Assumptions 5.1-5.2 in \cite{sukhija2025sombrl}).

\subsection{Assumptions}

\begin{assumption}[Gaussian noise] \label{assump:process_noise}
    \looseness=-1The process noise is Gaussian i.i.d.\ with $
    w_t \;\overset{\text{i.i.d.}}{\sim}\; \mathcal{N}(0,\,\sigma_w^2 I).$
\end{assumption}
This assumption can be relaxed to sub-Gaussian noise as in \citet{HURCL} by assuming instead Lipschitz continuous costs and rewards, true dynamics, and policies. The resulting sample complexity would suffer an exponential-in-horizon amplification governed by the Lipschitz constants. 
Instead, we assume Gaussian noise to invoke Kakade et al.'s ``Difference of Gaussians'' simulation lemma~\citep{KakadeInfo} and obtain tighter sample complexity bounds.
\looseness -1
\paragraph{Prior knowledge.}
In safe exploration, absences of prior knowledge makes safety generally ill-posed, since no policy can be certified as safe without structural assumptions on the unknown dynamics. This requirement is standard in the safe learning literature, where it is typically enforced by assuming access to an initially calibrated model and a corresponding set of safe policies \cite{as2025actsafe,wendl2026safe}. We therefore require a prior model class that captures admissible deviations from a nominal dynamics estimate.
\begin{assumption}[RKHS with bounded norm]\label{assump:rkhs} Let $\calZ\coloneqq\calX\times\calA$.
We assume access to a known prior mean function $\mu : \calZ \to \mathbb{R}^{d_x}$ and kernel $k: \calZ \times \calZ \to \mathbb{R}$ such that, for each component $f^{\star}_j$ of the true dynamics $f^{\star} = (f^{\star}_1,\dots,f^{\star}_{d_x})$, we have that $f_j^\star-\mu_j$ is an element of the RKHS $\calH_k$ induced by the kernel $k$ and has a known bound $B>0$ on the RKHS norm, i.e. $f_j^\star-\mu_j \in  \mathcal{H}_k$
and $\|f_j^\star - \mu_j\|_k \leq B,\; \forall \,j\in[d_x]$.
Moreover, the kernel satisfies
$k(z,z) \leq \sigma_{\max}$ $\forall\, z \in \calZ.$
\end{assumption}
\looseness-1 Assumption~\ref{assump:rkhs} formalizes that the prior model $\mu$ is accurate up to a bounded error measured in RKHS norm. We further require that the prior model class induced by $(\mu, k, B)$ admits at least one policy that is uniformly safe over a class of initial admissible dynamics, defined as $\calF_0^B\coloneqq\{f\mid|f_j(z)-\mu_j(z)|\leq B\sqrt{k(z,z)} \;\forall \,j\in[d_x],z\in\calZ\}$. Note that, by definition of the RKHS norm, every dynamics function satisfying Assumption \ref{assump:rkhs} must also lie in $\calF_0^B$.

\begin{assumption}[Feasible safe initialization]
\label{assump:prior_knowledge}
There exists a constant $\Delta > 0$ such that the set
    $\Pi_\text{prior}\coloneqq\{\pi\mid \;J_c(\pi,f)\leq d-\Delta \; \forall f \in\calF_0^B\}$
is non-empty.
\end{assumption}
\looseness=-1Together, Assumptions~\ref{assump:rkhs} and \ref{assump:prior_knowledge} encode that the prior knowledge is both sufficiently expressive to contain the true dynamics and sufficiently informative to certify at least one safe policy.
In practice, the prior mean $\mu$, and its corresponding safe initialization, can be obtained in different ways, such as from an offline dataset $\calD_0$ or a simulator, with $B$ capturing the resulting model mismatch or sim-to-real gap.
This makes the framework agnostic to the source of prior information, as long as a meaningful bound on the discrepancy is available.

To establish continuity of the learned dynamics model and ensure finite sample complexity guarantees, we require two additional regularity assumptions on the prior mean and kernel.
\begin{assumption} \label{assumption:continuity_sigma}
    The prior mean $\mu$ and kernel $k$ are Lipschitz continuous on $\calZ$ with constants $L_\mu$, $L_k$.
\end{assumption}
To characterize the complexity of the function class $\calH_k$, we introduce the notion of maximum information gain for a kernel $k$ with $N$ observations from the set $\calZ$ as 
$\gamma_N(k)
\coloneq \max_{\calG \subset \calZ,\, |\calG| \le N} 
\frac{1}{2} \log \det \bigl(I + \sigma_w^{-2} K_{\calG}\bigr),$
where $K_{\calG} \coloneq [\, k(z, z') \,]_{z,z' \in \calG}$. This quantity measures the maximum reduction in uncertainty achievable after $N$ observations.
To ensure that the exploration complexity remains finite, we further restrict attention to kernels whose information gain grows sufficiently slowly with $N$, as formalized in the following assumption.
\begin{assumption}\label{assump:information_gain}
\looseness=-1The maximum information gain $\gamma_{N}(k)$ of the kernel $k$ over the set $\calZ$ satisfies 
$\gamma_{N}^3(k)=o(N)$.
\end{assumption}
\Cref{assumption:continuity_sigma,assump:information_gain} are mild and are satisfied by common kernel choices such as linear and squared exponential, as well as Matérn kernels with sufficiently large smoothness parameters, when composed with bounded and Lipschitz continuous feature maps (cf. Appendix \ref{appendix:discussions}).

\section{Background}
\label{section:background}
\paragraph{Gaussian process dynamics models.} \looseness-1 For our analysis, we model the unknown dynamics using Gaussian processes (GPs) built around a known prior mean function $\mu : \mathcal{Z} \to \mathbb{R}^{d_x}$ and kernel $k$. At episode $n$, the GP is updated using all data collected in earlier episodes.
For each episode $\ell \in\{0,\dots,n-1\}$ and stage $t \in \{0,\dots,T-1\}$, let 
$z_t^\ell := (x_t^\ell,a_t^\ell) \in \calZ := \calX \times \calA$ and 
$y_t^\ell := x_{t+1}^\ell \in \calX$. The dataset available at episode $n$ is
$\calD_{0:n-1} = \bigcup_{\ell=0}^{n-1} \calD_\ell, 
\calD_\ell = \{(z_t^\ell, y_t^\ell)\}_{t=0}^{T-1}.$
Each state coordinate can be modeled independently with a GP. For $j \in [d_x]$, define the output vector $\mathbf{y}_{n-1,j} := [\, y_{t,j}^\ell \,]_{(\ell,t)}$, the mean vector ${\boldsymbol {\mu}}_{n-1,j} := [\, \mu_j(z_{t}^\ell) \,]_{(\ell,t)}$, the kernel vector 
$\mathbf k_{n-1}(z) := [\, k(z_t^\ell, z) \,]_{(\ell,t)}$, and the Gram matrix 
$\mathbf K_{n-1} := [\, k(z_t^\ell, z_{t'}^{\ell'}) \,]_{(\ell,t),(\ell',t')}$, 
where $(\ell,t)$ ranges over $\{0,\dots,n-1\} \times \{0,\dots,T-1\}$.
Given the prior mean estimate $\mu_j$ and the kernel $k$, the posterior mean and variance for component $j$ are obtained via standard GP regression:
\begin{equation}
\label{eq: gp}
\begin{aligned}
    \mu_{n,j}(z)
&=\mu_j(z) + \mathbf k_{n-1}(z)^\top (\mathbf K_{n-1} + \sigma_w^2 I)^{-1} (\mathbf y_{n-1,j}-\boldsymbol\mu_{n-1,j}) \\
\sigma_{n,j}^2(z) 
&= k(z,z) - \mathbf k_{n-1}(z)^\top (\mathbf K_{n-1} + \sigma_w^2 I)^{-1} \mathbf k_{n-1}(z). %k_{n,j}(z,z).
\end{aligned}
\end{equation}
The vector-valued mean and standard deviation are denoted by $\mu_n(z) := (\mu_{n,1}(z),\dots,\mu_{n,d_x}(z))$ and 
$\sigma_n(z) := (\sigma_{n,1}(z),\dots,\sigma_{n,d_x}(z))$. We further define the scalar uncertainty measure $s_n: \calZ \to \mathbb{R}$ with $s_n(z) := \norm{\sigma_n(z)}$.
The following Lemma then establishes that a GP model is \emph{well-calibrated}, in the sense that the true dynamics lie within the confidence intervals induced by the posterior mean and variance with high probability.

\begin{lemma}[Vector-valued analog of Theorem 2 of \citet{Chowdhury}] \label{lemma:confidence_gp}
Let \Cref{assump:process_noise,assump:rkhs} hold. For any $\delta \in (0,1]$ and $n \geq 0$, define $
  \beta_{n}(\delta) \coloneqq 
  B + \sigma_w \sqrt{2(\gamma_{nT}(k) + 1
  +  \ln\!\left(\tfrac{d_x}{\delta})\right)}.$
Then with probability at least $1-\delta$ it holds that for all $n=0,1, 2, ...$, for all $z\in \calZ$ and for all $j\in[d_x]$ that
    $|\mu_{n,j}(z) - f^\star_j(z)| \leq \beta_n(\delta) \sigma_{n,j}(z).$
\end{lemma}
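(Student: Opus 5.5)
The plan is to reduce the vector-valued claim to $d_x$ scalar instances of Theorem~2 of \citet{Chowdhury} and then take a union bound over the output coordinates.

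\textbf{Step 1: center each coordinate.} Fix $j\in[d_x]$ and set $g_j\coloneqq f_j^\star-\mu_j$. By \Cref{assump:rkhs}, $g_j\in\calH_k$ with $\|g_j\|_k\le B$. Define the centered observations $\tilde y_{t,j}^\ell \coloneqq y_{t,j}^\ell-\mu_j(z_t^\ell)$. Then
\[
\tilde y_{t,j}^\ell = g_j(z_t^\ell)+w_{t,j}^\ell ,
\]
and by \Cref{assump:process_noise} the noises $w_{t,j}^\ell$ are i.i.d.\ $\mathcal N(0,\sigma_w^2)$. They are therefore conditionally $\sigma_w$-sub-Gaussian with respect to the natural filtration, which is generated by past inputs and observations.

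This filtration step matters because the inputs $z_t^\ell$ are chosen adaptively: the policy $\pi_\ell$ depends on past data, and $x_t^\ell$ depends on earlier noise. The input $z_t^\ell$ is nevertheless measurable with respect to the history before $w_t^\ell$ is drawn, so the predictability condition required by \citet{Chowdhury} holds. I order the data stream lexicographically in $(\ell,t)$ and treat it as a single sequence indexed by $\tau=\ell T+t$.

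\textbf{Step 2: match the GP formulas.} The expressions in \eqref{eq: gp} are exactly
\[
\mu_{n,j}(z)=\mu_j(z)+\tilde\mu_{n,j}(z),
\]
where $\tilde\mu_{n,j}$ is the zero-mean GP posterior mean (with regularizer $\sigma_w^2$) fitted to the $\tilde y$ data. The posterior variance $\sigma_{n,j}^2$ coincides with the zero-mean posterior variance and does not depend on $j$. Hence
\[
|\mu_{n,j}(z)-f_j^\star(z)| = |\tilde\mu_{n,j}(z)-g_j(z)| .
\]

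\textbf{Step 3: apply the scalar bound.} Theorem~2 of \citet{Chowdhury}, in the noise-variance-regularized form, gives that with probability at least $1-\delta'$, simultaneously for all $N\ge 0$ and all $z$,
\[
|\tilde\mu_{N}(z)-g_j(z)|\le \Bigl(B+\sigma_w\sqrt{2(\gamma_N+1+\ln(1/\delta'))}\Bigr)\tilde\sigma_N(z),
\]
where $N$ is the number of observations. The main obstacle is this matching. Chowdhury and Gopalan state the result with regularizer $\eta=1+2/N$ and with $R$-sub-Gaussian noise, so I would either invoke the standard rescaled version used in the safe-GP literature, where the regularizer equals $\sigma_w^2$ and the constant absorbs the "$+1$", or re-derive it through their self-normalized martingale inequality. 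The latter replaces the $\eta$-dependent log-determinant with $2\gamma_N$ via $\tfrac12\log\det(I+\sigma_w^{-2}K_N)\le\gamma_N$, and uses monotonicity of $\gamma$.

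We only need the bound at the episode boundaries $N=nT$. Since the theorem holds uniformly over all $N$, it holds in particular on that subsequence, giving $\gamma_{nT}(k)$ as in $\beta_n(\delta)$.

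\textbf{Step 4: union bound.} Set $\delta'=\delta/d_x$, so that $\ln(1/\delta')=\ln(d_x/\delta)$, and take a union bound over $j\in[d_x]$. With probability at least $1-\delta$, the inequality then holds for all $n$, all $z$ and all $j$ with the stated $\beta_n(\delta)$.

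No independence across coordinates is needed for this union bound, which is why the argument goes through even though the same inputs $z_t^\ell$ are shared by all components.
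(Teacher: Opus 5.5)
Your reduction matches what the paper intends: center each coordinate, treat the episodic data as one sequence with predictable inputs, and take a union bound with $\delta'=\delta/d_x$. The paper gives no proof beyond the citation of \citet{Chowdhury}, and its $\ln(d_x/\delta)$ term is your Step~4.

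The gap is in Step~3, the constant matching you flag as the main obstacle. The \emph{noise-variance-regularized form} you write down, with $\sigma_w$ in front of the square root, does not follow from \citet{Chowdhury}. Reduce to their unit-scale regularizer via $k'=k/\sigma_w^2$. Then:
\begin{itemize}
\item the posterior mean is unchanged;
\item the posterior standard deviation becomes $\sigma_N/\sigma_w$;
\item $\|g_j\|_{k'}=\sigma_w\|g_j\|_k\le\sigma_w B$;
\item $\tfrac12\log\det(I+K'_N)=\tfrac12\log\det(I+\sigma_w^{-2}K_N)\le\gamma_N(k)$.
\end{itemize}
Their argument with $R=\sigma_w$ therefore gives
\[
|\tilde\mu_{N}(z)-g_j(z)|\le\Bigl(\sigma_w B+\sigma_w\sqrt{2(\gamma_N+1+\ln(1/\delta'))}\Bigr)\frac{\sigma_N(z)}{\sigma_w}=\Bigl(B+\sqrt{2(\gamma_N+1+\ln(1/\delta'))}\Bigr)\sigma_N(z).
\]
The self-normalized route agrees. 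With regularizer $\lambda$, the noise term enters with the factor $R/\sqrt{\lambda}$, which here is $\sigma_w/\sigma_w=1$. So the $\sigma_w$ prefactor disappears, and your argument gives the stated $\beta_n(\delta)$ only when $\sigma_w\ge 1$.

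For $\sigma_w<1$ no argument can close this, because the statement as written is false. Take $d_x=T=1$, $f^\star=\mu$ (admissible for every $B>0$), and $k(z,z)=1$ for all $z$.
\begin{itemize}
\item After the first transition, at $z=z_0^0$, you get $\mu_1(z)-f^\star(z)=w_0/(1+\sigma_w^2)$ and $\sigma_1(z)=\sigma_w/\sqrt{1+\sigma_w^2}$.
\item The claimed inequality therefore reads $|w_0|/\sigma_w\le\beta_1(\delta)\sqrt{1+\sigma_w^2}$.
\item Since $\gamma_1(k)=\tfrac12\ln(1+\sigma_w^{-2})$, you have $\beta_1(\delta)\to B$ as $\sigma_w\to 0$.
\item Meanwhile $|w_0|/\sigma_w$ is distributed as $|Z|$ with $Z$ standard normal.
\end{itemize}
With $B=1$ and $\delta=0.1$, the failure probability tends to $\Pr[|Z|>1]\approx 0.32>\delta$. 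With the correctly scaled radius $\beta_n(\delta)=B+\sqrt{2(\gamma_{nT}(k)+1+\ln(d_x/\delta))}$, your proof goes through.

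Two smaller repairs are also needed.
\begin{itemize}
\item The $+1$ in \citet{Chowdhury} comes from the regularizer $1+2/T$ with a \emph{fixed} horizon. To get uniformity over all $N$ at regularizer exactly $\sigma_w^2$, use the $\eta\to 0$ (Abbasi-Yadkori-type) self-normalized bound, under which the $+1$ is only slack.
\item Your filtration must also contain the prior samples $\tilde f^m$ and each episode's fresh initial state, since the policies and inputs depend on them. This is harmless, because $w_t^\ell$ is independent of both.
\end{itemize}
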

These confidence regions are typically used in the literature to conservatively enforce constraints \cite{as2025actsafe}. In our approach, we approximate these regions via sampling.
\paragraph{Approximation via GP sampling.} 
\looseness-1 We  define a vector-valued \emph{sample} $\tilde f=(\tilde f_1, \dots, \tilde f_{d_x})\sim GP(\mu,k)$ from the prior GP by concatenating $\tilde f_j \sim GP(\mu_j, k)$ for $j \in [d_x]$.
The key idea underlying our approach is that, by drawing sufficiently many samples, we can ensure with high probability that at least one is uniformly close to the true dynamics. 
 Drawing continuous function realizations $\tilde f\sim GP(\mu,k)$ is computationally intractable due to their infinite dimensional nature. However, since we only need to evaluate these functions at the discrete, finite state-action pairs visited by the agent, several methods can efficiently generate GP samples in practice (cf. Appendix~\ref{appendix:discussions}). We now formalize a sufficient number of samples to achieve at least one that is $\zeta$-close. To do so, we first introduce the \emph{small-ball probability}.

\begin{definition}[Small-ball probability; \citet{van2011information}]\label{def:smallball} Let $\tilde f_j \sim GP(0,k)$. The \emph{small-ball probability} is defined by $\Pr\brac{\norm{\tilde f_j}_{\infty} < \zeta} =: \exp(-\phi(\zeta)) $ for any $\zeta >0 $ where the probability is given by the \emph{small-ball exponent} $\phi: \mathbb R \to \mathbb R$.
\end{definition}
The small-ball exponent can be bounded using properties of the kernel $k$. 
See \citet{prajapat2025finite} for bounds for common kernel classes.
\begin{restatable}[Theorem 1 of \citet{prajapat2025finite}]{lemma}{samplesthm}
\label{lem: sampling}
Let \Cref{assump:rkhs} hold.
    Consider any $\delta \in (0,1]$ and $
    \zeta > 0$. Select $M$ satisfying
    \begin{align*}
        M \geq\frac{\log(\delta)}{\log(1-\exp(-d_x (\frac{1}{2}B^2 + \phi(\zeta))))}.
    \end{align*}
    Let $\tilde f^1, \dots, \tilde f^M \sim GP(\mu, k)$ be function samples from the prior GP.
    It holds with probability at least $1-\delta$ that for some $m\in[M]$, $ \norm{\tilde f_j^m- f_j^\star}_{\infty} \leq \zeta \;\;\forall j\in[d_x]$.
\end{restatable}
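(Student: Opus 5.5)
The argument has three steps: lower-bound the per-sample success probability by combining the small-ball probability (\Cref{def:smallball}) with the Cameron--Martin theorem, use independence across the $d_x$ output coordinates, and then use independence across the $M$ samples to get the stated choice of $M$.

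\textbf{Step 1 (one coordinate).} Fix $j\in[d_x]$ and write $\tilde f_j^m=\mu_j+g$ with $g\sim GP(0,k)$. Let $h_j\coloneqq f_j^\star-\mu_j$; by \Cref{assump:rkhs}, $h_j\in\calH_k$ and $\|h_j\|_k\le B$. The event $\norm{\tilde f^m_j-f^\star_j}_\infty\le\zeta$ is the same as $\norm{g-h_j}_\infty\le\zeta$, so it is a ball of radius $\zeta$ centered at a Cameron--Martin shift $h_j$. The standard shifted small-ball inequality (Anderson's inequality combined with the Cameron--Martin formula, as in \citet{van2011information}) gives
\begin{align*}
\Pr\brac{\norm{g-h_j}_\infty<\zeta}\;\ge\;\exp\!\left(-\tfrac12\|h_j\|_k^2\right)\Pr\brac{\norm{g}_\infty<\zeta}\;\ge\;\exp\!\left(-\tfrac12 B^2-\phi(\zeta)\right).
\end{align*}
To justify this, I would change measure using the density $\exp\bigl(\langle h_j,\cdot\rangle-\tfrac12\|h_j\|_k^2\bigr)$ of the shifted Gaussian measure. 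The linear term averages to zero over the symmetric ball $\{\norm{g}_\infty<\zeta\}$, so Jensen's inequality shows its contribution is at least $1$ in the exponential. This step is the main technical point: it needs care with the infinite-dimensional Cameron--Martin theory, for example by working on a separable Banach space of continuous functions on $\calZ$ or by a finite-dimensional approximation. I would cite the known result rather than rederive it.

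\textbf{Step 2 (all coordinates).} The components $\tilde f^m_1,\dots,\tilde f^m_{d_x}$ are sampled independently. Therefore the probability that a single sample $m$ is $\zeta$-close in every coordinate is at least
\begin{align*}
p\coloneqq\exp\!\left(-d_x\left(\tfrac12B^2+\phi(\zeta)\right)\right).
\end{align*}

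\textbf{Step 3 (across samples).} The $M$ samples are i.i.d., so the probability that none of them is good is at most $(1-p)^M$. Requiring $(1-p)^M\le\delta$ is equivalent to $M\log(1-p)\le\log\delta$. Since $\log(1-p)<0$, dividing flips the inequality and gives $M\ge\log\delta/\log(1-p)$, which is exactly the stated condition. On the complementary event, which has probability at least $1-\delta$, some $m\in[M]$ satisfies $\norm{\tilde f^m_j-f^\star_j}_\infty\le\zeta$ for all $j\in[d_x]$.
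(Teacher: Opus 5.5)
Your proposal is correct and follows the same route as the paper. The paper centers the samples and invokes Theorem 1 of \citet{prajapat2025finite} (with $C_D = B^2/2$ from using the prior, and the factor $d_x$ from multiplying the independent coordinate marginals); your Steps 1--3 unpack that cited theorem: the Cameron--Martin shifted small-ball bound, the product over coordinates, and the $(1-p)^M\le\delta$ calculation. One minor correction: the lower bound in Step 1 follows from the Cameron--Martin density together with the symmetry/Jensen argument you give, whereas Anderson's inequality supplies the opposite (upper) bound and is not needed here.
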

\looseness-1 See Appendix~\ref{appendix:technical} for the proof of Lemma \ref{lem: sampling}.
By characterizing the number of samples required to achieve a $\zeta$-close approximation to an arbitrary function in the RKHS, the above lemma provides a conservative \emph{worst-case} bound on the number of GP samples required so that enforcing constraints on \emph{sampled} dynamics implies that they also hold on the \emph{true unknown dynamics}. 
While this bound can be large, it is required to formally guarantee safety with high-probability for a very general class of problems. 
In practice, we observe that significantly fewer samples are sufficient to obtain safe policies.
Next, we describe how this bound is used in our algorithm design.
 
\section{Algorithm}\label{section:algorithm}
\paragraph{Enforcing safety via sampling.}\looseness=-1 
Given a well-calibrated model (\Cref{lemma:confidence_gp}), satisfying the safety constraint in each episode $n$ amounts to solving a worst-case optimization given by
\begin{equation} \label{pb:intractable}
\begin{aligned}
\max_{f \in\calF_{n}}J_c(\pi, f) \le d,
\end{aligned}
\end{equation}
where $\calF_n\coloneqq \{f\colon \, |\mu_{n,j}(z) - f_j(z)| \leq \beta_n(\delta) \sigma_{n,j}(z) \;\forall j\in[d_x]\}$.
Prior work in safe MBRL addresses this in practice via heuristics~\citep{as2025actsafe,as2022constrained} or pessimistic cost penalties~\citep{wendl2026safe}.
We instead approximate this optimization over the class $\calF_n$ with a finite set of $M$ sampled models $\Tilde{f}^1, \dots, \Tilde{f}^{M}$ drawn i.i.d. from the GP prior.
For sufficiently large $M$ (see \Cref{lem: sampling}), at least one sample is $\zeta$-close to $f^\star$ with high probability, allowing us to ensure safety on the true system by enforcing constraints over the sampled models. 

\paragraph{Truncating unlikely samples.}
\looseness=-1Since these samples are drawn from the prior and not yet informed by data, we refine them online by truncation~\cite{prajapat2024towards}. At every episode $n$, each sample is truncated to lie within the intersection of all previous GP confidence sets,
\begin{align}
    \label{eq: truncated}
    f_{n,j}^m(z) = \mathrm{clip}(
        f_{n-1,j}^m(z), \mu_{n,j}(z) - \beta_{n}(\delta) \sigma_{n,j}(z), \mu_{n,j}(z) + \beta_{n}(\delta) \sigma_{n,j}(z) ),
\end{align}
for all $z \in \mathcal{Z}, j \in [d_x]$. For $n=0$, we overload the notation and apply Equation \ref{eq: truncated} to $f_{-1,j}^m\coloneqq\Tilde{f}_{j}^m$, $\mu_0(z)\coloneqq\mu(z)$, $\sigma_{0,j}(z)\coloneqq \sqrt{k(z,z)}$, and $\beta_0(\delta)\coloneqq B$, yielding $f_0^m\in\calF_0^B \,\forall m\in[M]$.
\begin{wrapfigure}[16]{r}{0.5\columnwidth}
    \vspace{-0.1cm}
    \centering
    \includegraphics[width=0.95\linewidth]{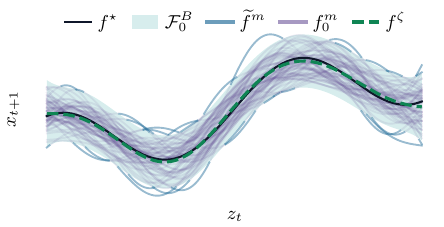}
    \caption{At the initial episode $n=0$, we draw $M$ samples $\Tilde{f}^m$ from the GP prior. These are then clipped using the prior bound $\mathcal{F}_0^B$, yielding the truncated samples $f_0^m$. Finally, $f^\zeta$ denotes the $\zeta$-close sample of Lemma \ref{lem: sampling}.}
    \label{fig:gp_samples}
    \vspace{-1em}
\end{wrapfigure}
For well-calibrated models (cf. \Cref{lemma:confidence_gp}), truncation can only reduce deviation from $f^\star$, thereby preserving the $\zeta$-close sample. This idea is illustrated in \Cref{fig:gp_samples}.

\paragraph{Constraint tightening.}
To account for the gap between this sample and $f^\star$, we then tighten the constraint by a $\zeta$-dependent constant and enforce it jointly across all samples:
\begin{align*}
    J_c(\pi,  f_{n}^m) \leq d - \Delta_\zeta \quad \forall m=1,\dots, M,
\end{align*}
which guarantees safety under the true dynamics with high probability. Intuitively, the tightening factor $\Delta_\zeta$, defined in Algorithm \ref{alg:mbrl1}, bounds the discrepancy between returns under $f^\star$ and a $\zeta$-close model. 
This yields a natural tradeoff: increasing $\zeta$ reduces the number of samples $M$ required by Lemma~\ref{lem: sampling} but increases conservatism, while smaller $\zeta$ improves tightness at higher computational cost. We note that truncation is only one approach for refining GP samples with data. For other approaches, including posterior sampling, we refer the reader to \Cref{appendix:discussions}.

\paragraph{Enforcing exploration as a constraint.}
\looseness=-1 Prior safe MBRL methods often encourage exploration via intrinsic reward bonuses, where the agent maximizes
$J_r(\pi,\mu_{n}) + \lambda_n^{\textrm{explore}} J_{s_{n}}(\pi,\mu_{n})$
\citep{wendl2026safe}. In practice, these approaches require careful tuning of $\lambda_n^{\textrm{explore}}$, and may either over-explore irrelevant regions or fail to explore sufficiently under model misspecification. In contrast, we decouple reward maximization and exploration by enforcing a constraint that the epistemic accumulated along trajectories under our nominal dynamics exceeds a threshold. Concretely, at episode $n$, we require the policy to satisfy $J_{s_n}(\pi_n, \mu_n) \geq d_\sigma^n$, 
where $d_\sigma^n$ is a prescribed exploration threshold defined in Theorem \ref{thm:merged}.
Importantly, this constraint becomes inactive when the greedy policy is already sufficiently informative, in which case no additional exploration is enforced. %it is selected without modification.
Moreover, infeasibility of the constraint provides a natural stopping criterion, certifying that remaining policies induce uniformly low epistemic uncertainty and linking the exploration threshold directly to the final suboptimality guarantee.

\paragraph{Sampling-Based Safe Reinforcement Learning (\algname{SBSRL}).}
\looseness-1 We now propose a model-based reinforcement learning algorithm that combines sampling-based constraint satisfaction with the exploration constraint. In particular, as shown in \Cref{alg:mbrl1}, at each episode $n$, the agent selects the policy $\pi_{n}$ as 
\begin{equation} \label{pb:1}
\begin{aligned}
\pi_{n} = \arg\max_{\pi \in \Pi}\; J_r(\pi, \mu_n)\;\;\text{s.t.}\;\;
 J_c(\pi, f_n^m) \le d - \Delta_\zeta\; \forall m \in[M]
\;\text{and}\;\, J_{s_n}(\pi, \mu_n) \ge d_\sigma^n.
\end{aligned}
\end{equation}

{\fontsize{7}{8}\selectfont
\begin{algorithm}[H] 
\caption{Sampling-Based Safe Reinforcement Learning (SBSRL)}
\label{alg:mbrl1}
\begin{algorithmic}[1]
  \Require Confidence parameter $\delta$, closeness threshold $\zeta\in(0,\frac{\sigma_w\Delta}{\sqrt{d_x}T^2C_{\max}})$, exploration threshold sequence $\curly{d_{\sigma}^n}_{n \geq 0}$, prior mean $\mu$ and kernel $k$
  \State Dataset $\mathcal{D}_0\gets\varnothing$, episode $n \gets 0$ 
  \State Set constraint tightening $\Delta_\zeta \gets \zeta \sqrt{d_x}\frac{T^2 C_{\max}}{\sigma_w}$
  \State Set number of dynamics samples $M$ to satisfy the condition of \Cref{lem: sampling} under $\mu, k, \delta, \zeta$ 
  \State Draw $M$ dynamics samples $\tilde f^1, \dots, \tilde f^M \sim GP(\mu, k)$
  \While{Problem \eqref{pb:1} is feasible}
    \State Construct truncated dynamics samples $f_n^1, \dots, f_n^M$ according to  \eqref{eq: truncated}
    \State Select $\pi_{n}$ by solving Problem \eqref{pb:1} 
    \State $\mathcal{D}_{0:n} \leftarrow \mathcal{D}_{0:n-1} \cup \text{rollout } \pi_{n}$ 
    \State Update model $\mathcal{GP}(\mu_{n+1},\sigma_{n+1}) \leftarrow \mathcal{D}_{0:n}$
    \State $n \gets n+1$
  \EndWhile
  \State Select $\hat \pi$ by solving Problem \eqref{pb:1} after removing the constraint on $J_{s_n}$ (greedy policy). 
\end{algorithmic}
\end{algorithm}
}
\looseness=-1
In the following, we present our theoretical guarantees for \algname{SBSRL}. We provide further details regarding our practical implementation of \algname{SBSRL} in  \Cref{appendix:experiments}.

\section{Results}\label{section:results}
\looseness=-1 We now prove that with high probability, \algname{SBSRL} satisfies the constraint at all episodes, terminates after a finite number of steps, and is nearly optimal upon termination. We defer proofs to the supplement. 

\looseness=-1Our first result shows that solving Problem~(\ref{pb:1}) guarantees safety of the policy $\pi_n$ at all episodes $n$ with high probability. For this, define the set policies which satisfy the safety constraint of Problem~(\ref{pb:1}) as
\begin{equation*}
    \Pi^n_{\text{safe}}\coloneq\{\pi \in \Pi \quad | \quad  J_c(\pi,f_n^m) \le d - \Delta_\zeta
   \quad \forall m = 1,\dots,M\}.
\end{equation*}

\begin{restatable}[Safety]{theorem}{safetythm} 
Suppose Assumptions \ref{assump:process_noise}-\ref{assump:prior_knowledge} hold. Consider any $\delta \in (0, 1/2)$, $\zeta \in(0,\frac{\sigma_w\Delta}{\sqrt{d_x}T^2 C_{\max}})$, and sequence of positive numbers $\curly{d_{\sigma}^{n}}_{n\geq 0}$. Consider running \Cref{alg:mbrl1} given these values.  
  Then, it holds with probability at least  \( 1 - 2\delta \) that for every episode $n$, any policy $\pi \in \Pi_{\textrm{safe}}^n$ satisfies
      $J_c(\pi, f^\star) \leq d.$
  In particular, this holds for the policy $\pi_n$ selected during episode $n$ and the policy $\hat \pi$ selected upon algorithm termination. 
\end{restatable}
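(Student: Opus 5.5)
The plan is to intersect the two high-probability events given by \Cref{lemma:confidence_gp} and \Cref{lem: sampling}, each run at confidence $\delta$. A union bound then gives an event $\mathcal{E}$ of probability at least $1-2\delta$. On $\mathcal{E}$ we have two facts:
\begin{enumerate*}[label=\textbf{(\roman*)}]
\item for all $n\ge 0$, $z\in\calZ$ and $j\in[d_x]$, $|\mu_{n,j}(z)-f^\star_j(z)|\le\beta_n(\delta)\sigma_{n,j}(z)$, and for $n=0$ this is the prior bound $|\mu_j(z)-f_j^\star(z)|\le B\sqrt{k(z,z)}$, which follows from \Cref{assump:rkhs} by the reproducing property;
\item some index $m^\star\in[M]$ satisfies $\|\tilde f^{m^\star}_j-f^\star_j\|_\infty\le\zeta$ for all $j$.
\end{enumerate*}
The rest of the argument is deterministic on $\mathcal{E}$.

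First I will show by induction on $n$ that truncation preserves $\zeta$-closeness: $\|f_{n,j}^{m^\star}-f_j^\star\|_\infty\le\zeta$ for all $n$. The step uses the fact that clipping to an interval $[\ell,u]$ is the projection onto that interval, and is therefore nonexpansive toward any point of the interval. By (i), $f_j^\star(z)\in[\mu_{n,j}(z)-\beta_n\sigma_{n,j}(z),\,\mu_{n,j}(z)+\beta_n\sigma_{n,j}(z)]$. Hence
\[
|\mathrm{clip}(f^{m^\star}_{n-1,j}(z),\ell,u)-f_j^\star(z)|\le|f^{m^\star}_{n-1,j}(z)-f^\star_j(z)|\le\zeta,
\]
pointwise in $z$. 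The base case $n=0$ uses $f_{-1}^{m}=\tilde f^{m}$ together with the prior interval.

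Second, I will prove a simulation-lemma bound: if $\|f_j-g_j\|_\infty\le\zeta$ for all $j$, then $|J_c(\pi,f)-J_c(\pi,g)|\le\Delta_\zeta=\zeta\sqrt{d_x}T^2C_{\max}/\sigma_w$ for every policy $\pi$. I would use the Kakade et al.\ difference-of-Gaussians argument. Telescoping over the hybrid processes that follow $f$ up to time $t$ and $g$ afterwards writes the return difference as a sum over $t$ of expectations of $V^g_{t+1}(f(x,a)+w)-V^g_{t+1}(g(x,a)+w)$. Each value function is bounded by $TC_{\max}$. By Pinsker, $\mathrm{TV}(\mathcal N(f(z),\sigma_w^2I),\mathcal N(g(z),\sigma_w^2I))\le\|f(z)-g(z)\|/(2\sigma_w)\le\sqrt{d_x}\zeta/(2\sigma_w)$. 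Each term is therefore at most $2\,TC_{\max}\cdot\sqrt{d_x}\zeta/(2\sigma_w)$, and summing over $T$ steps gives $T^2C_{\max}\sqrt{d_x}\zeta/\sigma_w$. I expect this to be the main obstacle, mainly in keeping the constants and conventions, such as time-dependent policies and the TV factor, consistent with the stated $\Delta_\zeta$. A crude version with a slightly different constant would still work if one only needs some $\Delta_\zeta$ of this order.

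Finally, take $\pi\in\Pi^n_{\rm safe}$. Then
\[
J_c(\pi,f^\star)\le J_c(\pi,f_n^{m^\star})+\Delta_\zeta\le d-\Delta_\zeta+\Delta_\zeta=d.
\]
Since $\pi_n$ is feasible for Problem~\eqref{pb:1}, it lies in $\Pi^n_{\rm safe}$. The final policy $\hat\pi$ is feasible for the greedy problem at the terminal episode, so it also lies in $\Pi^n_{\rm safe}$ at that episode, and the claim follows for both. Nonemptiness is not needed for the statement. If it is wanted, I would note that $f_n^m\in\calF_0^B$ and that $\Delta_\zeta<\Delta$ by the choice of $\zeta$, so $\Pi_{\rm prior}\subseteq\Pi^n_{\rm safe}$ by \Cref{assump:prior_knowledge}.
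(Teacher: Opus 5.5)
Your proposal is correct and follows essentially the same route as the paper. It takes a union bound over \Cref{lemma:confidence_gp} and \Cref{lem: sampling}, preserves the $\zeta$-close sample under truncation, and uses a difference-of-Gaussians simulation bound that yields exactly $\Delta_\zeta$; your Pinsker/TV constant matches the one the paper gets from Lemma C.2 of Kakade et al. You then conclude $J_c(\pi,f^\star)\le J_c(\pi,f_n^{m^\star})+\Delta_\zeta\le d$, as the paper does. Two of your steps make explicit what the paper only asserts: the projection argument for clipping, and the use of the deterministic RKHS bound $|f_j^\star(z)-\mu_j(z)|\le B\sqrt{k(z,z)}$ for the $n=0$ truncation, where the algorithm uses $\beta_0=B$ rather than the larger $\beta_0(\delta)$ of \Cref{lemma:confidence_gp}.
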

\looseness-1 Moreover, under the success event of \Cref{lemma:confidence_gp}, \Cref{assump:prior_knowledge} ensures that the sampled safe set $\Pi^n_{\mathrm{safe}}$ is non-empty for every episode $n\geq0$ (cf. Appendix \ref{appendix:discussions}). Consequently, infeasibility can only occur due to the exploration constraint. This is studied in the following theorem, which bounds the number of episodes played by the algorithm before termination, leading to near-optimality guarantees.
We next introduce the set of $\xi$-tightened safe policies under the true dynamics,
\begin{equation*}
    \Pi_{\xi}^{\star}\coloneq\{\pi \in \Pi \quad | \quad  J_c(\pi, f^\star) \le d - \xi\}.
\end{equation*}
 In general, $\Pi_{\xi}^{\star}$ may be disconnected, with some regions that cannot be reached from the initial safe set $\Pi_\text{prior}$ via safe policy updates. We therefore restrict our attention to a \emph{connected} component $\Pi_{\xi}^{\star,c}\subseteq\Pi_{\xi}^{\star}$, defined as follows. 
\begin{definition} \label{assump:reachability} We define $\Pi_{\xi}^{\star,c} \subseteq \Pi_{\xi}^{\star}$ as the subset of $\Pi_{\xi}^{\star}$ satisfying the following path-connectedness condition with respect to $\Pi_\text{prior}$: for every $\pi\in\Pi_{\xi}^{\star,c}$ there exists a continuous path
    $\rho:[0,1]\to\Pi_{\xi}^{\star,c}$ such that $\rho(0)=\pi$ and  $\rho(1)\in\Pi_\text{prior}$.
    Continuity is with respect to the norm $\|\cdot\|_{\infty}$, defined as,
    $\|\pi - \pi'\|_{\infty} \coloneqq \sup_{x \in \mathcal{X}} \|\pi(x) - \pi'(x)\|,$
where $\|\cdot\|$ denotes the Euclidean norm in the action space. 
\end{definition}
Similar notions of reachability are standard in the safe online learning literature~\cite{turchetta2016safe, Berkenkamp2017SafeMR, prajapat2025safe}. 

\begin{restatable}[Optimality]{theorem}{mergedthm} \label{thm:merged}
Suppose Assumptions \ref{assump:process_noise}-\ref{assump:information_gain} hold. Fix some $\delta\in(0,1/2)$, $\zeta \in(0,\frac{\sigma_w\Delta}{\sqrt{d_x}T^2C_{\max}})$, $\varepsilon > 0$ and $\xi>\varepsilon+ 
\zeta\sqrt{d_x} 
\frac{T^2 C_{\max}}{\sigma_w}$.
Let $d_\sigma^n=\frac{\varepsilon\sigma_w}{2G_{\text{max}} T\beta_n(\delta)}$ with $G_{\text{max}}\coloneqq\max\{C_\text{max},R_\text{max}\}$, and consider running \Cref{alg:mbrl1}.
There exists a problem-dependent constant $C = C(d_x, \sigma_{\max}, G_{\max}, T, \delta)$ such that if $\bar n$ is the smallest integer satisfying 
%\begin{align}\label{eq: sample complexity}
$\bar n \;\ge\; C \, \frac{\gamma_{\bar n T}(k)^3}{\varepsilon^2}$, then with probability at least $1-2\delta$ the following hold:
%\end{align} 
\begin{enumerate}[noitemsep,nolistsep,leftmargin=0.5cm, parsep=0.3em]
    \item There exists $n^\star \leq \bar n$ such that
    $J_{s_{n^\star}}(\pi, \mu_{n^\star}) < d_\sigma^{n^\star} \,\, \forall \pi \in\Pi^{{n^\star}}_{\text{safe}},$
    i.e. Algorithm~\ref{alg:mbrl1} terminates at iteration ${n}^\star$.
    \item At termination, the policy $\hat\pi$ returned by \Cref{alg:mbrl1} satisfies
    \begin{equation} \label{bound:optimality}
    \max_{\pi \in \Pi_{\xi}^{\star,c}} J_r(\pi, f^\star) - J_r(\hat \pi, f^\star)
    \leq \varepsilon.   
    \end{equation} 
\end{enumerate}
\end{restatable}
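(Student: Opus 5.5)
The proof works on the intersection of the success events of \Cref{lemma:confidence_gp} and \Cref{lem: sampling}, each with confidence $\delta$, so with probability at least $1-2\delta$. On this event $f^\star\in\calF_n$ for all $n$, and at least one truncated sample $f_n^{m^\star}$ stays $\zeta$-close to $f^\star$, since clipping toward an interval that contains $f^\star$ never increases the distance. Two tools are used throughout. The first is the Gaussian simulation lemma of \citet{KakadeInfo}: for bounded $g$ and any $f,f'$,
\[
|J_g(\pi,f)-J_g(\pi,f')|\le \frac{T\, G_{\max}}{\sigma_w}\,\mathbb{E}_{\tau_\pi^{f}}\Big[\sum_{t}\|f(z_t)-f'(z_t)\|\Big].
\]
The second is the consequence that, for $f\in\calF_n$, $|J_g(\pi,f)-J_g(\pi,\mu_n)|\le \frac{T G_{\max}\beta_n}{\sigma_w}J_{s_n}(\pi,\mu_n)$, with the expectation taken under $\mu_n$ (apply the lemma with the roles swapped). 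With the choice of $d_\sigma^n$, $J_{s_n}(\pi,\mu_n)<d_\sigma^n$ therefore gives $|J_g(\pi,f)-J_g(\pi,\mu_n)|<\varepsilon/2$ for all $f\in\calF_n$.

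For the termination claim (1), use the standard information-gain argument. While the algorithm runs, each executed $\pi_n$ has $J_{s_n}(\pi_n,\mu_n)\ge d_\sigma^n$, so
\[
\sum_{n<N} d_\sigma^n\le \sum_{n<N}\mathbb{E}_{\tau_{\pi_n}^{\mu_n}}\Big[\sum_t s_n(z_t)\Big].
\]
First pass from the expectation under $\mu_n$ to the expectation under $f^\star$. Because $s_n\le\sqrt{d_x\sigma_{\max}}$ is bounded, the simulation lemma with $g=s_n$ costs another term of order $\frac{T\beta_n}{\sigma_w}\,\mathbb{E}_{f^\star}[\sum_t s_n]$. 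Then pass from the expectation to the realized trajectories by a martingale (Azuma) concentration; this may require splitting off part of $\delta$, or absorbing it into the constant $C$. Finally apply the elliptical-potential bound $\sum_{n,t}s_n(z_t^n)\le \sqrt{NT\,C'\gamma_{NT}}$, adjusting for the fact that within an episode the posterior is frozen across $T$ points, which costs only a $T$-dependent factor. Since $d_\sigma^n\gtrsim \varepsilon/\beta_N$ and $\beta_N\lesssim\sqrt{\gamma_{NT}}$, this yields
\[
N\varepsilon\lesssim \beta_N^2\sqrt{N\gamma_{NT}}\lesssim \gamma_{NT}^{3/2}\sqrt N,
\]
that is, $N\lesssim\gamma_{NT}^3/\varepsilon^2$. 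By \Cref{assump:information_gain} the inequality $\bar n\ge C\gamma_{\bar nT}^3/\varepsilon^2$ has a finite smallest solution, so the while loop must exit by $\bar n$. The safe set is always nonempty (by Assumption~\ref{assump:prior_knowledge}), so exit happens only when every $\pi\in\Pi_{\text{safe}}^{n^\star}$ violates the exploration constraint, which is statement (1).

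For near-optimality (2), the key step, and I expect the main obstacle, is to show $\Pi_{\xi}^{\star,c}\subseteq\Pi_{\text{safe}}^{n^\star}$. I would argue along the continuous path $\rho$ from $\pi\in\Pi_{\xi}^{\star,c}$ back to $\Pi_{\text{prior}}\subseteq\Pi^{n^\star}_{\text{safe}}$; the inclusion holds because every $f^m_{n^\star}\in\calF_0^B$ and $\Delta>\Delta_\zeta$. Let $t^\ast$ be the last path point still in the safe set; it exists because $J_c(\cdot,f)$ is continuous in $\pi$, which needs \Cref{assumption:continuity_sigma}-type Lipschitzness of the sampled dynamics and is handled via the Gaussian smoothing. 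Any policy $\pi'$ on the path near $t^\ast$ is safe in the sampled sense, so by (1) it satisfies $J_{s_{n^\star}}(\pi',\mu_{n^\star})<d_\sigma^{n^\star}$. Then for every $m$,
\[
J_c(\pi',f^m_{n^\star})\le J_c(\pi',f^\star)+\varepsilon\le d-\xi+\varepsilon<d-\Delta_\zeta,
\]
where the first inequality uses the $\varepsilon/2$ bound twice and $f^m_{n^\star},f^\star\in\calF_{n^\star}$. This strict slack lets the safe region extend past $t^\ast$, so the whole path is safe and in particular $\pi$ is. The delicate point is that the low-uncertainty bound is only guaranteed on the safe set itself; this is resolved by the openness and continuity argument just described.

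Finally, $\hat\pi$ maximizes $J_r(\cdot,\mu_{n^\star})$ over $\Pi^{n^\star}_{\text{safe}}$, and every policy in that set has uncertainty below $d_\sigma^{n^\star}$, so $|J_r(\cdot,\mu_{n^\star})-J_r(\cdot,f^\star)|<\varepsilon/2$ on it. Therefore, for any $\pi\in\Pi_\xi^{\star,c}$,
\[
J_r(\pi,f^\star)\le J_r(\pi,\mu_{n^\star})+\tfrac{\varepsilon}{2}\le J_r(\hat\pi,\mu_{n^\star})+\tfrac{\varepsilon}{2}\le J_r(\hat\pi,f^\star)+\varepsilon,
\]
which gives \eqref{bound:optimality}.
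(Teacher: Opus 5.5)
Your termination argument and your closing $\varepsilon/2+\varepsilon/2$ chain match the paper's. The inclusion $\Pi_{\xi}^{\star,c}\subseteq\Pi^{n^\star}_{\text{safe}}$, however, has a genuine gap. Your open--closed argument along $\rho$ needs $t\mapsto J_c(\rho(t),f^m_{n^\star})$ to be continuous. You use this twice: to place $t^\ast$ inside $\Pi^{n^\star}_{\text{safe}}$, and to let the strict slack $d-\xi+\varepsilon<d-\Delta_\zeta$ persist beyond $t^\ast$. Nothing in the assumptions gives this continuity. The cost $c$ is only assumed bounded. The Gaussian noise smooths the transition but not the immediate term $\mathbb{E}[c(x_t,\pi(x_t))]$: a cost that jumps from $0$ to $C_{\max}$ across a hyperplane in action space makes $J_c(\cdot,f)$ discontinuous in $\|\cdot\|_\infty$ for every $f$. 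Likewise, \Cref{assumption:continuity_sigma} concerns $\mu$ and $k$, not the sample paths. A GP draw with a Lipschitz kernel need not be Lipschitz, so the ``Lipschitzness of the sampled dynamics'' you invoke is not available either.

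The missing idea is to carry the low-\emph{uncertainty} certificate, not the safety certificate, across the boundary of $\Pi^{n^\star}_{\text{safe}}$. Under \Cref{assumption:continuity_sigma}, $s_n$ is $\tfrac12$-H\"older and $\mu_n$ is Lipschitz, so $\pi\mapsto J_{s_{n^\star}}(\pi,\mu_{n^\star})$ is continuous in $\|\cdot\|_\infty$ (\Cref{lemma:concentration_sigma}). This needs no regularity of $c$ or of the samples. Then argue by contradiction.
\begin{enumerate}
\item If some policy of $\Pi_{\xi}^{\star,c}$ lies outside $\Pi^{n^\star}_{\text{safe}}$, uniform continuity of $\rho$ on $[0,1]$ gives two nearby path points $\pi_{\text{safe}}\in\Pi^{n^\star}_{\text{safe}}$ and $\pi'\in\Pi_{\xi}^{\star,c}\setminus\Pi^{n^\star}_{\text{safe}}$ with $\|\pi'-\pi_{\text{safe}}\|_\infty\le\varepsilon_\pi$. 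This step again uses no continuity of $J_c$.
\item By termination and the continuity above, $J_{s_{n^\star}}(\pi',\mu_{n^\star})<d_\sigma^{n^\star}+\sqrt{\varepsilon_\pi}\,C_\pi^{n^\star}$.
\item The bound $|J_c(\pi,f^\star)-J_c(\pi,f^m_{n^\star})|\le 2\lambda^c_{n^\star}J_{s_{n^\star}}(\pi,\mu_{n^\star})$ holds for \emph{every} $\pi\in\Pi$, not only safe ones. Hence $J_c(\pi',f^m_{n^\star})\le d-\xi+\varepsilon+2\lambda^c_{n^\star}\sqrt{\varepsilon_\pi}\,C_\pi^{n^\star}\le d-\Delta_\zeta$ for $\varepsilon_\pi$ small, contradicting $\pi'\notin\Pi^{n^\star}_{\text{safe}}$.
\end{enumerate}
The strict gap $\xi>\varepsilon+\Delta_\zeta$ is what absorbs the $\varepsilon_\pi$ term.

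As a minor point, the event of \Cref{lem: sampling} is never used in this theorem. Drop it and pair the calibration event with the Azuma event to obtain $1-2\delta$ without touching $\beta_n(\delta)$.
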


\begin{wrapfigure}[14]{r}{0.5\columnwidth}
    \vspace{-1.5em}
    \centering
    \resizebox{0.43\columnwidth}{!}{\begingroup
\definecolor{safeInner}{HTML}{5F4690}
\definecolor{safeMid}{HTML}{38A6A5}
\definecolor{safeOuter}{HTML}{5F4690}
\definecolor{policyFill}{HTML}{f2f2f2}
% Keep the same code for now, but allow independent colors for Pi_0* and Pi_xi* families.
\definecolor{starZeroColor}{HTML}{38A6A5}
\definecolor{starXiColor}{HTML}{0F8554}

% Opacity controls (edit these to tune transparency in one place)
\newcommand{\alphaStarZeroC}{0.2}  % Pi_0^{\star,c}
\newcommand{\alphaStarZero}{0.2}   % Pi_0^{\star}
\newcommand{\alphaStarXi}{0.3}     % Pi_{\xi}^{\star} and Pi_{\xi}^{\star,c}
\newcommand{\alphaSafeOuter}{0.25}  % Pi_{\mathrm{safe}}^{n^{\star}}
\newcommand{\alphaSafeInner}{0.4}  % Pi_{\mathrm{safe}}^{0}

% Opaque fills that visually match alpha-over-white, so color does not depend on lower layers.
\pgfmathsetmacro{\pctStarZeroC}{100*\alphaStarZeroC}
\pgfmathsetmacro{\pctStarZero}{100*\alphaStarZero}
\pgfmathsetmacro{\pctStarXi}{100*\alphaStarXi}
\pgfmathsetmacro{\pctSafeOuter}{100*\alphaSafeOuter}
\pgfmathsetmacro{\pctSafeInner}{100*\alphaSafeInner}
\colorlet{starZeroCFill}{starZeroColor!\pctStarZeroC!white}
\colorlet{starZeroFill}{starZeroColor!\pctStarZero!white}
\colorlet{starXiFill}{starXiColor!\pctStarXi!white}
\colorlet{safeOuterFill}{safeOuter!\pctSafeOuter!white}
\colorlet{safeInnerFill}{safeInner!\pctSafeInner!white}

\begin{tikzpicture}[
    x=1cm,
    y=1cm,
    line cap=round,
    line join=round,
    >={Latex[length=4.2mm, width=2.8mm]}
]

\pgfdeclarelayer{piZeroLayer}
\pgfdeclarelayer{piXiLayer}
\pgfdeclarelayer{safeOuterLayer}
\pgfdeclarelayer{safeMidLayer}
\pgfdeclarelayer{safeInnerLayer}
\pgfdeclarelayer{policyLayer}
\pgfsetlayers{policyLayer,piZeroLayer,piXiLayer,safeOuterLayer,safeMidLayer,safeInnerLayer,main}

% Universal policy set
\begin{pgfonlayer}{policyLayer}
\path[fill=policyFill, rounded corners=28pt]
    (0.26,0.18) rectangle (9.15,5.29);
\end{pgfonlayer}
\draw[rounded corners=28pt, line width=1.3pt]
    (0.26,0.18) rectangle (9.15,5.29);
\node[anchor=south east, font=\LARGE] at (1.42,4.40) {$\Pi$};

\begin{scope}[xshift=-0.26cm, yshift=-0.02cm]
% New surrounding yellow set with a larger top-right expansion
\begin{pgfonlayer}{piZeroLayer}
\filldraw[fill=starZeroCFill, draw=black, line width=1.0pt]
    (1.02,0.79)
    .. controls (0.36,1.88) and (0.80,3.80) .. (1.82,4.1)
    .. controls (2.20,4.26) and (4.28,4.52) .. (5.82,4.44)
    .. controls (6.34,4.48) and (6.66,4.46) .. (6.92,4.74)
    .. controls (7.36,5.18) and (8.06,5.30) .. (8.62,5.04)
    .. controls (9.06,4.84) and (9.26,4.30) .. (9.10,3.82)
    .. controls (8.90,3.22) and (8.34,2.90) .. (7.76,2.98)
    .. controls (7.20,2.96) and (6.82,2.94) .. (6.74,2.42)
    .. controls (6.64,0.96) and (6.36,0.68) .. (5.08,0.42)
    .. controls (4.20,0.34) and (3.42,0.30) .. (2.64,0.3)
    .. controls (1.92,0.36) and (1.38,0.38) .. cycle;
\end{pgfonlayer}
\node[font=\LARGE] at (6.52,4.04) {$\Pi_{0}^{\star,c}$};

% Small yellow blob inside the top-right expansion
\begin{scope}[shift={(7.90,4.02)}, scale=1.65]  % <-- aumenta qui (es. 1.2, 1.3, ...)
\begin{pgfonlayer}{piXiLayer}
\filldraw[fill=starXiFill, draw=black, line width=0.9pt]
    (-0.50,0.04)
    .. controls (-0.54,-0.22) and (-0.38,-0.42) .. (-0.10,-0.50)
    .. controls (0.20,-0.58) and (0.44,-0.42) .. (0.54,-0.14)
    .. controls (0.62,0.10) and (0.52,0.36) .. (0.26,0.48)
    .. controls (-0.04,0.62) and (-0.36,0.44) .. cycle;
\end{pgfonlayer}
\end{scope}
\node[font=\Large] at (8.00,3.92) {$\Pi_{\xi}^{\star}$};

% Separate top-left yellow blob plus concentric outer yellow blob
\begin{scope}[xshift=+5.96cm, yshift=-3.92cm, scale=1.15]
\begin{pgfonlayer}{piZeroLayer}
\filldraw[fill=starZeroFill, draw=black, line width=0.9pt]
    (0.98,4.96)
    .. controls (0.90,4.38) and (1.12,4.08) .. (1.54,3.98)
    .. controls (1.98,3.90) and (2.34,4.12) .. (2.49,4.54)
    .. controls (2.70,5.10) and (2.60,5.50) .. (2.30,5.68)
    .. controls (1.80,5.80) and (1.20,5.50) .. cycle;
\end{pgfonlayer}
\node[font=\Large] at (2.09,5.34) {$\Pi_0^{\star}$};

\begin{pgfonlayer}{piXiLayer}
\filldraw[fill=starXiFill, draw=black, line width=0.9pt]
    (1.18,4.70)
    .. controls (1.12,4.46) and (1.28,4.26) .. (1.56,4.18)
    .. controls (1.84,4.12) and (2.08,4.28) .. (2.14,4.56)
    .. controls (2.20,4.80) and (2.08,5.02) .. (1.82,5.12)
    .. controls (1.52,5.24) and (1.24,5.04) .. cycle;
\end{pgfonlayer}
\node[font=\Large] at (1.64,4.66) {$\Pi_{\xi}^{\star}$};
\end{scope}

\begin{scope}[xshift=-0.08cm, yshift=-0.02cm, scale=0.82]

% Largest safe set after expansion (very close to the middle set)
\begin{pgfonlayer}{safeOuterLayer}
\filldraw[fill=safeOuterFill, draw=black, line width=1.05pt]
    (1.30,3.20)
    .. controls (1.08,2.70) and (1.10,2.10) .. (1.22,1.56)
    .. controls (1.34,0.96) and (2.06,0.60) .. (3.34,0.50)
    .. controls (4.86,0.44) and (6.20,0.58) .. (7.02,0.98)
    .. controls (7.70,1.34) and (7.96,2.04) .. (7.92,2.90)
    .. controls (7.88,3.99) and (7.54,4.74) .. (6.58,4.96)
    .. controls (5.46,5.12) and (3.92,5.04) .. (2.76,4.70)
    .. controls (2.00,4.48) and (1.50,4.02) .. cycle;
\end{pgfonlayer}

% Intermediate certified set
\begin{pgfonlayer}{safeMidLayer}
\filldraw[fill=starXiFill, draw=black, line width=1.0pt]
    (1.72,3.42)
    .. controls (1.44,3.02) and (1.34,2.50) .. (1.44,1.96)
    .. controls (1.54,1.26) and (2.08,0.88) .. (3.10,0.76)
    .. controls (4.36,0.70) and (5.46,0.78) .. (6.16,1.00)
    .. controls (6.76,1.20) and (7.02,1.70) .. (7.04,2.36)
    .. controls (7.06,2.98) and (6.82,3.52) .. (6.32,3.86)
    .. controls (5.58,4.26) and (4.48,4.38) .. (3.52,4.28)
    .. controls (2.74,4.20) and (2.10,3.92) .. cycle;
\end{pgfonlayer}

% Initial safe set
\begin{pgfonlayer}{safeInnerLayer}
\filldraw[fill=safeInnerFill, draw=black, line width=0.95pt]
    (1.92,2.76)
    .. controls (1.74,2.52) and (1.70,2.16) .. (1.84,1.84)
    .. controls (1.98,1.50) and (2.24,1.30) .. (2.62,1.26)
    .. controls (2.98,1.26) and (3.26,1.46) .. (3.34,1.80)
    .. controls (3.42,2.18) and (3.30,2.58) .. (2.98,2.82)
    .. controls (2.62,3.08) and (2.18,3.04) .. cycle;
\end{pgfonlayer}

% Expansion direction cue
\draw[->, line width=1.20pt]
    (2.56,2.84) .. controls (2.66,3.06) and (2.84,3.34) .. (3.28,3.56);
\draw[->, line width=1.20pt]
    (2.98,2.58) .. controls (3.22,2.90) and (3.48,3.02) .. (3.86,3.04);
\draw[->, line width=1.20pt]
    (3.14,2.20) .. controls (3.38,2.30) and (3.58,2.34) .. (3.86,2.30);

% Labels
\node[font=\LARGE] at (2.55,2.12) {$\Pi_{\mathrm{safe}}^{0}$};
\node[font=\LARGE] at (5.90,1.62) {$\Pi_{\xi}^{\star,c}$};
\node[font=\LARGE] at (6.86,4.23) {$\Pi_{\mathrm{safe}}^{n^{\star}}$};

\end{scope}
\end{scope}

\end{tikzpicture}
\endgroup}
    \caption{At termination, $\Pi^{{n^\star}}_{\text{safe}} \supseteq \Pi_{\xi}^{\star,c}$. The set $\Pi_{\xi}^{\star}$ may be disconnected, and only its reachable component $\Pi_{\xi}^{\star,c}$ can be identified via safe updates. The tightening $\xi$ influences connectivity: smaller values can merge otherwise disconnected regions, as illustrated for $\xi=0$. 
    }
    \label{fig:safe_set_expansion}
    \vspace{-1em}
\end{wrapfigure}
\looseness-1 The theorem guarantees near-optimality over the reachable component $\Pi_{\xi}^{\star,c}$ in finite time. In the definition of $\xi$, $\varepsilon$ accounts for the fact that the dynamics can only be learned up to a non-zero error in finite time, while $\zeta$ captures the sampling approximation error and can be reduced by increasing the number of samples.
At termination, the infeasibility of the exploration constraint implies uniformly low epistemic uncertainty over $\Pi^{n^\star}_{\text{safe}}$, allowing us to control the discrepancy between true, mean, and sampled dynamics and ensure that reward and cost are $\varepsilon$-accurate for all safe policies.
In particular, this implies $\Pi^{{n^\star}}_{\text{safe}} \supseteq \Pi_{\xi}^{\star,c}$, as illustrated in Figure~\ref{fig:safe_set_expansion}.

\section{Experiments} \label{section:experiments}
\looseness -1 We empirically evaluate \algname{SBSRL} in two complementary settings:
\begin{enumerate*}[label=\textbf{(\roman*)}]
\item a GP-based instantiation that closely aligns with our theory, evaluated in a simulated environment; and
\item a real-world safe RL setting, where \algname{SBSRL} is implemented using deep ensembles.
\end{enumerate*}
Together, these experiments support our theoretical findings and demonstrate that these principles can scale in practice to high-dimensional scenarios. 
Further extensive empirical evaluation, including comparisons of our deep variant against state-of-the-art safe exploration algorithms on SafetyGym~\citep{Ray2019} and RWRL~\citep{challengesrealworld}, is deferred to the appendix, together with additional implementation details.
\paragraph{Safety and optimality with GPs.}
\begin{figure}
\vspace{-0.4cm}
    \centering
    \begin{minipage}{0.49\textwidth}
        \centering
\includegraphics{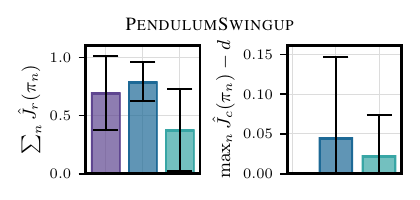}
    \vspace{-0.5cm}
    \end{minipage}
    \hfill
    \begin{minipage}{0.49\textwidth}
        \centering
    \includegraphics{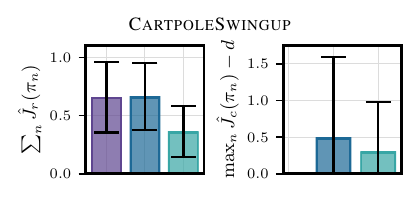}
    \vspace{-0.5cm}
    \end{minipage}
\includegraphics[trim={0 5pt 0 7pt}, clip]{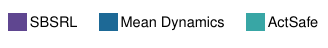}
    \caption{Evaluation of safety on \textsc{PendulumSwingUp} and \textsc{CartpoleSwingUp}. We report mean and standard deviation of the normalized cumulative returns and maximum cost violations over five seeds. \algname{SBSRL} maintains safety throughout training while achieving near-optimal performance.}
    \vspace{-0.5cm}
    \label{fig:experiments_gp}
\end{figure}
\looseness-1 We first conduct an experiment where we use GPs to model the dynamics. We evaluate \algname{SBSRL} on the \textsc{PendulumSwingUp} and \textsc{CartpoleSwingUp} tasks from the Deepmind Control Suite \citep{tassa2018deepmind}, following the setup of \citep{as2025actsafe}. As baselines, we consider \algname{ActSafe} \citep{as2025actsafe} and a baseline where the constraint is enforced only on the GP posterior mean, rather than on each sampled dynamics model as prescribed by Problem~(\ref{pb:1}). Results in \Cref{fig:experiments_gp} show that \algname{SBSRL} achieves near-optimal performance while satisfying the safety constraint at every iteration $n$. In contrast, enforcing constraints on the mean dynamics leads to safety violations in both tasks, highlighting the importance of accounting for model uncertainty at the planning stage, and validating dynamics sampling as a way to ensure safety during learning.

\paragraph{Exploration.}
\looseness=-1 We next validate the exploration strategy of \algname{SBSRL}. Specifically, we consider the \textsc{PendulumSwingUp} task and compare against two variants of \algname{ActSafe} that perform pure exploration for the first $n=3$ and $n=5$ episodes, respectively, before switching to greedy exploitation. 
As shown in Figure \ref{fig:gp_pendulum_exploration}, \algname{SBSRL} shows superior exploration efficiency in this task, converging more rapidly to an optimal policy. This stems from the exploration constraint, which promotes exploration while still allowing reward maximization whenever this is sufficiently informative.
\begin{wrapfigure}[20]{r}{0.42\linewidth}
    \centering
    \vspace{-0.5em}\includegraphics{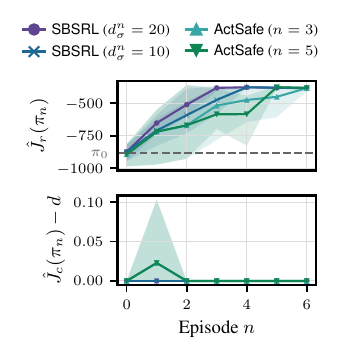}
    \vspace{-2.0em}
    \caption{\looseness=-1Evaluation of exploration via an exploration constraint in the \textsc{PendulumSwingUp} task. We report mean and $95$ percentile interval over five seeds.}
    \label{fig:gp_pendulum_exploration}
\end{wrapfigure}
We further illustrate how the choice of $d_\sigma^0$ can influence the behavior of \algname{SBSRL}.
From our theory, smaller values of $d_\sigma^0$ guarantee closer-to-optimal performance at termination but reduce sample efficiency, which helps explain why increasing its value may actually improve convergence in practice. 
It is also interesting to notice that such a constraint makes little difference in the first training episode, where the greedy policy is already sufficiently informative, consistently with our theoretical insights.

\paragraph{Safe offline-to-online in the real-world.}
\looseness-1 Finally, we demonstrate how \algname{SBSRL} can be successfully deployed on \emph{robotic hardware}, as presented in \Cref{fig:hardware_flagship} and detailed in Appendix \ref{appendix:experiments}. We instantiate \algname{SBSRL} using deep ensembles (see Appendix \ref{appendix:experiments}) and test it on a highly-dynamic remote-controlled race car operating at $60$ Hz. The combination of high-frequency control and delays in actuation and motion capture makes this high-dimensional environment particularly challenging. This challenge is partially mitigated by using a sliding window of the five past state measurements, resulting in an observation of 45 dimensions.
Having no prior knowledge would make this task ill-posed (recall  \Cref{assump:prior_knowledge}). Therefore, to calibrate our model, we initialize \algname{SBSRL} offline using a dataset of real-world trajectories. We then deploy the algorithm online and compare it with a baseline that only enforces the constraint on the mean critic rather than across the ensemble.
The results in \Cref{fig:hardware_experiment} show that \algname{SBSRL} improves the rewards upon the prior policy and maintains safety at all times, while using the mean estimate incurs higher costs during training. 

\begin{figure}
    \centering
    
\includegraphics{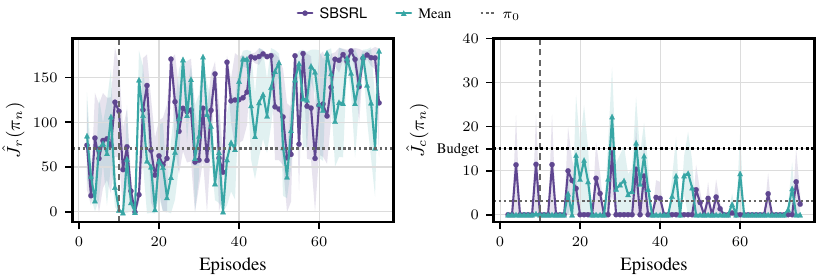}
    \caption{Safe offline-to-online on real-world hardware. We report mean and standard error over three seeds. The vertical line marks the transition from warm-up (offline prior $\pi_0$) to online learning. The cost represents the kinetic energy lost in collisions with the obstacle.}
    \label{fig:hardware_experiment}
    \vspace{-0.5cm}
\end{figure}

\section{Conclusion}\label{section:conclusion}
\looseness=-1 In this work, we propose \algname{SBSRL}, a novel model-based reinforcement learning algorithm for safe exploration. \algname{SBSRL} ensures safety throughout learning by enforcing constraints over a \emph{finite} set of sampled dynamics models. When the dynamics are represented by Gaussian processes, this yields high-probability safety guarantees. Furthermore, we provide a simple regret bound that leverages a novel strategy for encouraging exploration in CMDPs, namely a constraint on the epistemic uncertainty which encourages exploration only when the greedy policy is insufficiently informative.
We extensively validate \algname{SBSRL} both in simulation and on real-world hardware, demonstrating that its deep RL variant can be scaled to high-dimensional continuous control tasks.
Finally, while our work provides a theoretically sound yet practical algorithm for safe exploration in CMDPs, several challenges remain. First, our work builds on the finite-horizon episodic setting, effectively requiring manual resets that are difficult to implement in practice. Extending this work to the non-episodic setting is an exciting direction for future work. In addition, this work builds on the classical expected accumulated cost formulation of constraints~\citep{altman}. Extending this work to formulations that enforce state-wise safety with high-probability for high-dimensional non-linear dynamics and general policies (e.g., beyond MPC) is an important direction for future theoretical and empirical work.

\section*{Acknowledgments}
We would like to thank Armin Lederer and Yunke Ao for insightful discussions during the development of this project.
M.P. was supported by a ETH AI Center Doctoral Fellowship, and B.L. was supported by an ETH AI Center Postdoctoral Fellowship. Y.A. received funding from a grant of the Hasler foundation~(grant no. 21039) and the ETH AI Center. 

\clearpage

\bibliographystyle{unsrtnat}
\bibliography{references}

%%%%%%%%%%%%%%%%%%%%%%%%%%%%%%%%%%%%%%%%%%%%%%%%%%%%%%%%%%%%

\appendix

\section{Discussion}\label{appendix:discussions}
We collect here several technical observations that clarify the scope of our assumptions and design choices. 
We first discuss the assumptions on the prior kernel.

\subsection{Kernels and Unbounded Domains}
We first note that the regularity assumptions on the kernel required by \Cref{assumption:continuity_sigma,assump:information_gain} are not restrictive, and can hold despite the fact that the state-action space is unbounded.

In particular, suppose $k(z,z')=\Tilde{k}(\phi(z),\phi(z'))$, where $\phi\colon\,\calZ\to\Tilde{\calZ}\subset\mathbb{R}^{d_{\Tilde{z}}}$ is a bounded and Lipschitz feature map %(i.e. $\|\phi(z)\|\leqB_\phi\;\forall z\in\calZ$ and $\|\phi(z)-\phi(z')\|\leq L_\phi\|z-z'\|$)
, and $\Tilde{k}$ is Lipschitz on $\Tilde{\calZ}$ (which holds for common kernels such as linear, squared exponential, and Matérn since $\Tilde{\calZ}$ is bounded). Then $k$ is Lipschitz on $\calZ$ (thus satisfying Assumption \ref{assumption:continuity_sigma}), even if $\calZ$ is unbounded. Moreover, since the information gain of $k$ on the inputs $\{z_i\}$ is equal to the information gain of $\Tilde{k}$ on the transformed inputs $\{\phi(z_i)\}$, the maximum information gain $\gamma_{N}(k)$ over $\calZ$ is upper bounded by the maximum information gain of $\gamma_N(\Tilde{k})$ over the bounded domain $\tilde Z$. 

Therefore, for common choices of $\Tilde{k}$, such as the squared exponential or linear kernel, $\gamma_{N}(k)$ grows poly-logarithmically in $N$ \cite{Srinivas_2012}, and Assumption \ref{assump:information_gain} is also satisfied. For Matérn kernels, the information gain grows polynomially in $N$ with growth determined by a smoothness parameter, and satisfies Assumption~\ref{assump:information_gain} for sufficiently large choices of the smoothness parameter.

We next provide an overview of possible methods to sample a function realization from a GP in practice, followed by a discussion on alternative design choices for model updates.

\subsection{Sampling from a GP}
In practice, the sampling of continuous functions $\tilde f\sim GP(\mu,k)$ cannot be implemented directly due to the infinite-dimensional future space of  GPs. 

However, we are only interested in the function values at the locations in the state-action space reached by the agent across all episodes%. Because we keep the samples from our prior fixed and only use truncation to update our confidence bounds, we only require evaluations at
, i.e. $\{z_t^n\}_{t=0,\dots,T-1}^{n\geq0} \subset \calZ$.
Therefore, for each scalar sample $\tilde f_j$, which represents a particular deterministic realization in function space, we are technically interested in its finite-dimensional marginals: $(\tilde f_j(z_t^n))_{t,n} \sim \mathcal{N}([\mu(z_t^n)]_{t,n}, [k(z_t^n, z_{t'}^{n'})]_{t,n,t',n'})$. 

However, in a sequential control setting, the locations at later timesteps and episodes are not known in advance, preventing us from constructing this joint distribution upfront. 
Crucially, Property $1$ in \citet{beckers:TAC2021} shows that this is equivalent to iteratively sampling each function value by conditioning the GP on the sample's own predictions at all previously visited locations, treating them as noise-free training points. This method, known as sequential pathwise conditioning or forward sampling, ensures the spatial consistency of the sampled function across episodes. This has been effectively applied in both scenario-based control \citep{2018ecc..conf..228U} and MPC settings \citep{prajapat2024towards,prajapat2025finite}.

For completeness, other methods can be used to draw approximate samples from a GP \citep{Garnett_2023,Do2025SamplingFG}. For instance, one could construct a spectral function approximation of the GP (e.g., via Random Fourier Features) and draw a finite-dimensional weight vector from the prior. Although this yields an approximation rather than an exact draw, it defines a global function that can be trivially evaluated at any desired location, making it highly popular in practice due to its computational efficiency.

\subsection{Updating the GP Samples}
As explained in Section~\ref{section:algorithm}, we draw a finite set of samples from the prior GP, purposefully not conditioning the underlying function realization on the true data gathered during exploration. To embed the newly collected information, we instead only truncate our prior samples to lie within the confidence region established by Lemma~\ref{lemma:confidence_gp}. This is conceptually similar to \citet{prajapat2024towards}, who enforce truncation by directly discarding all samples violating the confidence bounds. 

A natural alternative to incorporate newly collected data is to occasionally discard the prior samples and draw new ones from the posterior GP.
Since the posterior generally provides a tighter approximation of the true dynamics than the prior, resampling could potentially reduce the number of samples $M$ required to obtain a $\zeta$-close model via Lemma~\ref{lem: sampling}. 
However, repeated resampling introduces a more complex analytical structure, as the sampled models would no longer be coupled across episodes. Proving safety would require applying \Cref{lem: sampling} at each resampling step and taking a union bound over all episodes to control the overall success probability. Remarkably, this still yields a valid high-probability safety guarantee, since the total number of episodes is bounded by \Cref{thm:merged}, which still holds. We note that this alternative (sampling directly from the posterior GP) was employed in our practical GP experiments (detailed in Appendix~\ref{appendix:experiments}). We adopted this for empirical convenience and to ensure a fair, direct comparison with our baselines, performing our validation in the exact same setting as \citet{as2025actsafe}.

Finally, we remark that the episodic truncation is only necessary for our optimality result (specifically the second claim of \Cref{thm:merged}); to guarantee safety, we merely need to satisfy the lower-bound on the number of samples in Lemma~\ref{lem: sampling} (provided Assumption~\ref{assump:prior_knowledge} holds, for which is sufficient to perform a truncation with respect to the prior bound $B$).

Finally, we clarify the feasibility properties of Algorithm~\ref{alg:mbrl1}.
\subsection{Feasibility of the Safety Constraint}
One may wonder whether the algorithm could terminate because the safety constraint becomes infeasible at some earlier episode, rather than due to the exploration constraint. 
Remarkably, under the success event of \Cref{lemma:confidence_gp}, \Cref{assump:prior_knowledge} ensures that the sampled safe set $\Pi^n_{\mathrm{safe}}$ is non-empty for every episode $n\geq0$. Consequently, under this event, infeasibility can only occur due to the exploration constraint.  

More specifically, conditioning on the success event of Lemma \ref{lemma:confidence_gp}, the truncation procedure is well-defined and the sequence of confidence sets containing the truncated samples is nested (i.e. non-increasing). Combined with the truncation defined for the initial episode $n=0$, this implies that for all $n\geq0$ and all $m\in[M]$ the samples $f_n^m$ lie in the intersection of the successive confidence sets, and in particular remain within the initial truncation envelope $\calF_0^B$. Thus, by Assumption \ref{assump:prior_knowledge} and the fact that $\Delta_\zeta$ is chosen to satisfy $\Delta_{\zeta} \leq \Delta$, any policy $\pi\in\Pi_\text{prior}$ must also satisfy
    % \begin{align}
        $J_c(\pi, f_n^m) \le d - \Delta < d-\Delta_\zeta$  for $m=1,\dots,M.$
    % \end{align}
    This means that $\Pi_\text{prior}\subseteq\Pi^n_{\mathrm{safe}}$ and thus $\Pi^n_{\mathrm{safe}}$ is non-empty for all $n\geq0$ by Assumption \ref{assump:prior_knowledge}. 

    Therefore, the safety constraint remains feasible throughout learning, and termination can only occur due to infeasibility of the exploration constraint.

\section{Technical Lemmas} \label{appendix:technical}
In this section, we state and prove some auxiliary lemmas that will be useful in the proofs of our main results. We start with the proof of \Cref{lem: sampling}, which was already stated in \Cref{section:background}.
\samplesthm*
\begin{proof}
    Lemma \ref{lem: sampling} is a direct adaptation of Theorem 1 from \citep{prajapat2025finite} to our specific setting. To bridge the two results, we apply a centering transformation, defining the shifted quantities $\bar{f}^m \coloneqq \tilde f^m - \mu$ and $\bar{f}^\star \coloneqq f^\star - \mu$. Because $\tilde f^m \sim GP(\mu, k)$, the centered sample follows a zero-mean Gaussian process, $\bar{f}^m \sim GP(0,k)$. Similarly, under \Cref{assump:rkhs}, the centered ground truth satisfies $\|\bar{f}^\star_j\|_k \leq B$ for all $j \in [d_x]$. Thus, the conditions for Theorem 1 of \citep{prajapat2025finite} are met, and establishing $\|\bar{f}_j^m - \bar{f}_j^\star\|_\infty \leq \zeta  \;\;\forall j\in[d_x]$ directly implies $\|\tilde f_j^m - f_j^\star\|_\infty \leq \zeta \;\;\forall j\in[d_x]$.
    
    Our formulation simplifies the original result in two key ways. First, by applying the theorem to the prior GP directly rather than on the posterior conditioned on data, we eliminate the reliance on data-dependent quantities, effectively simplifying their confidence parameter to $C_D = B^2/2$. Second, because \Cref{assump:rkhs} imposes a deterministic bound $B$ on the RKHS norm of the mismatch, we bypass the need for the probabilistic union bound used in the original paper, allowing our sample complexity to scale with $\log(\delta)$ rather than $\log(\delta/2)$. Finally, we employ the vector-valued extension discussed in Remark 1 of \citep{prajapat2025finite}. To ensure the $\infty$-norm bound holds across all $d_x$ independent state coordinates simultaneously, the marginal probabilities must be multiplied. This introduces the factor of $d_x$ that multiplies the exponent $B^2/2 + \phi(\zeta)$ in Lemma \ref{lem: sampling}.
\end{proof}

We next present a lemma that generalizes the notation of Lemma B.3 in \cite{sukhija2025sombrl}, Lemma A.6 of \cite{as2025actsafe} or Corollary 3 of \cite{OptiActive}. Together with the subsequent corollary, it provides bounds on the discrepancy between returns under two different dynamics models, which are used in the proofs of safety and optimality.

\begin{lemma}\label{lemma:bound_J_general}
Let Assumptions \ref{assump:process_noise}-\ref{assump:rkhs} hold.
Consider any positive and bounded function $g \in [0, G_{\max}]$. Let $f$ and $f'$ be any pair of dynamics functions.
Then, for all $n \ge 0$
\begin{equation*}
\big| J_g(\pi,f) - J_g(\pi,f') \big|
\le 
T G_{\max} \, \mathbb{E}_{\tau_\pi^f}\left[ \sum_{t=0}^{T-1}
\frac{\| f'(x_t,\pi(x_t)) - f(x_t,\pi(x_t)) \|}{\sigma_w}\right]
\end{equation*}
where the expectation is rolled out over $f$, i.e. $x_{t+1}=f(x_t,\pi(x_t))+w_t$. An analogous bound holds in terms of an expectation over $f'$ with $x_{t+1}'=f'(x_t',\pi(x_t'))+w_t$: 
\begin{equation*}
\big| J_g(\pi,f) - J_g(\pi,f') \big|
\le 
T G_{\max} \, \mathbb{E}_{\tau_\pi^{f'}}\left[ \sum_{t=0}^{T-1}
\frac{\| f'(x_t',\pi(x_t')) - f(x_t',\pi(x_t')) \|}{\sigma_w}\right].
\end{equation*}
\end{lemma}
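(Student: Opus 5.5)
The plan is to compare the two trajectory distributions directly and then pay for the discrepancy between the Gaussian transition kernels. Fix $\pi$ and write $P^f_\pi$ and $P^{f'}_\pi$ for the laws of the state sequence $(x_0,\dots,x_{T-1})$ under $f$ and $f'$. Both laws share the initial distribution $\rho_0$. Since actions are deterministic functions of the state, $\sum_t g(x_t,\pi(x_t))$ is a measurable function of the trajectory with values in $[0,TG_{\max}]$.

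For any two distributions $P,Q$ and any function $h\in[0,H]$ we have $|\mathbb{E}_P h-\mathbb{E}_Q h|\le H\,\mathrm{TV}(P,Q)$. Applying this gives
\[
|J_g(\pi,f)-J_g(\pi,f')|\le TG_{\max}\,\mathrm{TV}\bigl(P^f_\pi,P^{f'}_\pi\bigr).
\]
The task therefore reduces to bounding this total variation distance by $\mathbb{E}_{\tau_\pi^f}\bigl[\sum_t \|f'-f\|/\sigma_w\bigr]$.

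\textbf{Bounding the total variation.} I would use a telescoping (hybrid) argument over time. For each $t$, let $P_t$ denote the law that follows $f$ for the first $t$ transitions and $f'$ afterwards, so that $P_T=P^f_\pi$ and $P_0=P^{f'}_\pi$. The triangle inequality gives $\mathrm{TV}(P^f_\pi,P^{f'}_\pi)\le\sum_t \mathrm{TV}(P_{t+1},P_t)$. The laws $P_{t+1}$ and $P_t$ agree on $x_{0:t}$, which is distributed as under $f$, and they share the continuation kernel from $x_{t+1}$ on. Data processing then bounds each term by
\[
\mathbb{E}_{x_t\sim\tau^f_\pi}\,\mathrm{TV}\Bigl(\mathcal N\bigl(f(x_t,\pi(x_t)),\sigma_w^2I\bigr),\,\mathcal N\bigl(f'(x_t,\pi(x_t)),\sigma_w^2I\bigr)\Bigr).
\]
The remaining ingredient is the ``difference of Gaussians'' bound of Kakade et al.: for isotropic Gaussians with a common covariance, $\mathrm{TV}\le\|\mu_1-\mu_2\|/\sigma_w$. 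This follows either from Pinsker's inequality with $\mathrm{KL}=\|\mu_1-\mu_2\|^2/(2\sigma_w^2)$, or from the exact expression $2\Phi(\|\Delta\|/(2\sigma_w))-1\le\|\Delta\|/(\sigma_w\sqrt{2\pi})$. Summing over $t$ yields the first inequality, where the expectation is taken along trajectories rolled out under $f$.

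\textbf{The second inequality.} This follows by symmetry. Either swap the roles of $f$ and $f'$, since TV is symmetric, or build the hybrids in the reverse order so that prefixes are drawn under $f'$. The Gaussian TV bound is symmetric in its means, so the integrand is unchanged while the expectation is now over $\tau^{f'}_\pi$.

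\textbf{Main obstacle.} The delicate step is the hybrid argument: I need the prefix in each term to be distributed exactly under the chosen model, so that the expectation lands on $\tau^f_\pi$ rather than on a mixture of the two laws. To handle time-dependent policies and the finite horizon cleanly, I would state it via the chain rule for couplings. Concretely, I would couple the two processes step by step with a maximal coupling of the next-state Gaussians. The processes then stay equal until the first step at which the coupling fails, and the probability of failing at step $t$, given agreement so far, is at most the Gaussian TV at the common state $x_t$, which is distributed as under $f$.
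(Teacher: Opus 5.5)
Your argument is correct and is essentially the paper's proof. Your hybrid laws $P_t$ (prefix under $f$, suffix under $f'$) give exactly the telescoping identity the paper takes from Corollary 2 of \citet{OptiActive}, and you control the per-step discrepancy with the same difference-of-Gaussians bound; the only difference is packaging, since you bound the total variation of the whole trajectory law using $\sum_t g\le TG_{\max}$ once, whereas the paper telescopes the returns through the $f'$ cost-to-go $J_{g,t+1}\in[0,TG_{\max}]$ and applies Lemma C.2 of \citet{KakadeInfo} to it at each step.
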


\begin{proof}
We give proof for the first inequality. The same argument holds for the second inequality by swapping $f$ and $f'$. Let $J_{g,t+1}(\pi,f,x)$ denote the cost-to-go from state $x$ at step $t+1$ under some dynamics $f$ and policy $\pi$:
\begin{equation*}
    J_{g,t+1}(\pi,f,x)=\mathbb{E}_{\tau_\pi^f}\left[ \sum_{k=t+1}^{T-1}
g(x_k,\pi(x_k))\right]
\end{equation*}

We can directly apply Corollary 2 of \citet{OptiActive} to obtain
\begin{equation*}
J_g(\pi,f') - J_g(\pi,f)
=
\mathbb{E}_{\tau_\pi^f}\left[
\sum_{t=0}^{T-1}
\Big(
J_{g,t+1}(\pi,f',\Tilde{x}_{t+1}) 
-
J_{g,t+1}(\pi,f',x_{t+1})
\Big)\right]
\end{equation*}
where the expectation is w.r.t.\ $\pi$ under the dynamics $f$, and
\begin{align*}
x_{t+1} &= f(x_t, \pi(x_t)) + w_t, \\
\Tilde{x}_{t+1} &= f'(x_t, \pi(x_t)) + w_t.
\end{align*}
Therefore,
\begin{align*}
|J_g(\pi,f') - J_g(\pi,f)|
&=\vert\mathbb{E}_{\tau_\pi^f}\left[
\sum_{t=0}^{T-1}
\Big(
J_{g,t+1}(\pi,f',\Tilde{x}_{t+1}) 
-
J_{g,t+1}(\pi,f',x_{t+1})
\Big)\right]\vert \\
&\le
\sum_{t=0}^{T-1}
\mathbb{E}_{\tau_\pi^f}\!\left[|
\mathbb{E}_{w_t}\big[
J_{g,t+1}(\pi,f',\Tilde{x}_{t+1}) 
-
J_{g,t+1}(\pi,f',x_{t+1})
\mid x_t \big]|
\right]
\end{align*}

Crucially, since $w_t\sim\mathcal{N}(0,\sigma_w^2)$ (Assumption \ref{assump:process_noise}), we have that $x_{t+1}$ and $\Tilde{x}_{t+1}$ are also gaussian given $x_t$, so that we can leverage Lemma C.2 of \cite{KakadeInfo}.
Furthermore, $J_{g,t+1}(\pi,f,x) \in [0, T G_{\max}]$ for all $\pi, f, x,$ and $t$.
Therefore, given $x_t$,
\begin{align*}
&|\mathbb{E}_{w_t}
\big[
J_{g,t+1}(\pi,f',\Tilde{x}_{t+1})
-
J_{g,t+1}(\pi,f',x_{t+1})
\mid x_t \big]|\\
&\le
\max\left\{ \sqrt{
\mathbb{E}_{w_t}[J_{g,t+1}^2(\pi,f',\Tilde{x}_{t+1}) \mid x_t]},\;
\sqrt{\mathbb{E}_{w_t}[J_{g,t+1}^2(\pi,f',x_{t+1})\mid x_t]}\right\}\\
&\times \min
\left\{
\frac{\| f'(x_t,\pi(x_t)) - f(x_t,\pi(x_t))\|}{\sigma_w}, 1
\right\} \text{ (\cite{KakadeInfo}, Lemma C.2)}\\
&\le
T G_{\max}
\min\left\{
\frac{\| f'(x_t,\pi(x_t)) - f(x_t,\pi(x_t)) \|}{\sigma_w},\; 1
\right\}.
\end{align*}
Summing over $t$ concludes the proof.
\end{proof}

Finally, in the case when one of the dynamics functions represents exactly the mean dynamics, i.e. $f'=\mu_n$, we can continue from the above lemma to give a bound in terms of the predicted uncertainty. This is formalized in the following corollary. We first define $\calF_n$ to be the filtration that contains all the data observed through episode $n$ (i.e. $\calD_{0:n}$) and the the realizations of the samples obtained at the initial episode (i.e. $\tilde f^m\; \forall m\in[M]$). Then, overloading the definition of the return in Section~\ref{section:problem_setting}, we can define
\begin{align*}
    J_{s_n}(\pi,f)\coloneqq\mathbb{E}_{\tau_\pi^f}\left[ \sum_{t=0}^{T-1}{s_n(x_t, \pi(x_t))}\mid \calF_{n-1}\right],
\end{align*}
where we note that the random variable $s_n$ is $\calF_{n-1}$ measurable.

\begin{corollary}[Adapted from Lemma B.3 of \cite{sukhija2025sombrl} or Lemma A.6 of \cite{as2025actsafe}]\label{corollary:bound_uncertainty_general}
    Let Assumptions \ref{assump:process_noise}-\ref{assump:rkhs} hold. Consider any positive and bounded function $g \in [0, G_{\max}]$ and fix any policy $\pi\in\Pi$. Consider the GP fit after episode $n$, characterized by $(\mu_n,\sigma_n)$.
    Let $f$ be any dynamics function satisfying for all $x\in\mathcal{X}$ and all $j\in [d_x]$:
\begin{align*}
    |\mu_{n,j}(x,\pi(x)) - f_j(x,\pi(x))| \leq \beta_n(\delta) \sigma_{n,j}(x,\pi(x)). 
\end{align*}
where $\beta_n$ is the one defined in Lemma \ref{lemma:confidence_gp}. Then,
\begin{equation*}
\big| J_g(\pi,f) - J_g(\pi,\mu_n) \big|
\le 
\lambda_n^g\,\min \left\{J_{s_n}(\pi,f),J_{s_n}(\pi,\mu_n)\right\} 
\end{equation*}
where $\lambda_n^g\coloneqq G_{\max}T \frac{\beta_n(\delta)}{\sigma_w}$.

\end{corollary}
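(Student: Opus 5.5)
The plan is to apply \Cref{lemma:bound_J_general} twice, once for each trajectory distribution, with $f'=\mu_n$ (taking $\mu_n(x,a)$ as a dynamics function), and then bound the per-step model mismatch pointwise by the scalar uncertainty $s_n$. The two applications give
\begin{equation*}
\big| J_g(\pi,f) - J_g(\pi,\mu_n) \big|
\le \frac{T G_{\max}}{\sigma_w}\, \mathbb{E}_{\tau_\pi^{h}}\Big[\sum_{t=0}^{T-1} \| \mu_n(x_t,\pi(x_t)) - f(x_t,\pi(x_t))\| \Big]
\end{equation*}
for both $h=f$ and $h=\mu_n$.

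The key step is a pointwise bound holding at every $x\in\calX$. Writing $z=(x,\pi(x))$, the hypothesis controls each coordinate, $|\mu_{n,j}(z)-f_j(z)|\le \beta_n(\delta)\sigma_{n,j}(z)$. Squaring, summing over $j\in[d_x]$ and taking square roots yields
\begin{equation*}
\|\mu_n(z)-f(z)\| \le \beta_n(\delta)\,\|\sigma_n(z)\| = \beta_n(\delta)\, s_n(z).
\end{equation*}
The hypothesis is imposed only along the policy, i.e.\ at state-action pairs $(x,\pi(x))$. This is exactly where the integrand of \Cref{lemma:bound_J_general} is evaluated, so the bound applies to every term in both expectations. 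Substituting it gives
\begin{equation*}
\big| J_g(\pi,f) - J_g(\pi,\mu_n) \big| \le \frac{T G_{\max}\beta_n(\delta)}{\sigma_w}\, J_{s_n}(\pi,h) = \lambda_n^g\, J_{s_n}(\pi,h), \qquad h\in\{f,\mu_n\}.
\end{equation*}
Taking the minimum of the two right-hand sides then gives the claim.

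The main point needing care is measurability. The quantities $\mu_n$, $\sigma_n$ and $\beta_n$ depend on data, and possibly $f$ does too (for example a truncated sample $f_n^m$). \Cref{lemma:bound_J_general}, however, is stated for fixed deterministic functions. I would therefore apply it conditionally on $\calF_{n-1}$, under which $\mu_n$, $s_n$ and $f$ are fixed. The expectations in the lemma then coincide with the conditional expectations in the overloaded definition of $J_{s_n}$, since the rollout noise is independent of $\calF_{n-1}$.

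A second point: the lemma's proof actually produces $\min\{\|\cdot\|/\sigma_w,1\}$ terms, and it is harmless to drop the truncation at $1$. Neither step changes the argument beyond bookkeeping, so I expect the proof to be short.
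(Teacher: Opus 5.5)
Your proposal is correct and follows essentially the same route as the paper. The paper likewise applies \Cref{lemma:bound_J_general} with $f'=\mu_n$, bounds $\|\mu_n(z)-f(z)\|\le\beta_n(\delta)\,s_n(z)$ coordinatewise along $z=(x,\pi(x))$, and gets the second bound by exchanging the roles of $f$ and $\mu_n$; it writes out only the first bound and calls the other analogous. Your extra remarks on conditioning on $\calF_{n-1}$ and dropping the truncation at $1$ are sound bookkeeping that the paper leaves implicit, with one small correction: $\beta_n(\delta)$ is deterministic, not data-dependent, because $\gamma_{nT}(k)$ is a worst-case quantity.
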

\begin{proof}
    We give the proof for the first inequality, while the second inequality follows analogously after swapping $f$ and $\mu_n$. \\
    By applying Lemma \ref{lemma:bound_J_general}, we obtain
    \begin{equation*}
\big| J_g(\pi,f) - J_g(\pi,\mu_n) \big|
\le 
T G_{\max} \, \mathbb{E}_{\tau_\pi^f}\left[ \sum_{t=0}^{T-1}
\frac{\| f(x_t,\pi(x_t)) - \mu_n(x_t,\pi(x_t)) \|}{\sigma_w}\right]
\end{equation*}
Then, by our assumption on $f$ lying in the confidence interval defined by $(\mu_n,\sigma_n, \beta_n)$, we can bound:
\begin{equation*}
   \| f(x_t,\pi(x_t)) - \mu_n(x_t,\pi(x_t)) \|\leq \beta_n(\delta)\| \sigma_n(x_t,\pi(x_t)) \|
\end{equation*}
Combining this with the above bound finally proves the claim:
\begin{align*}
\big| J_g(\pi,f) - J_g(\pi,\mu_n) \big|
&\le T G_{\max} \, \mathbb{E}_{\tau_\pi^f}\left[ \sum_{t=0}^{T-1}
\frac{\| f(x_t,\pi(x_t)) - \mu_n(x_t,\pi(x_t)) \|}{\sigma_w}\right]\\
&\leq G_{\max}T \frac{\beta_n(\delta)}{\sigma_w} \mathbb{E}_{\tau_\pi^f}\left[ \sum_{t=0}^{T-1}{s_n(x_t, \pi(x_t))}\mid \calF_{n-1}\right] \eqqcolon \lambda_n^g\,J_{s_n}(\pi,f).
\end{align*}

\end{proof}

\section{Proof of Safety}\label{appendix:safety}

\safetythm*

\begin{proof}

Lemma \ref{lemma:bound_J_general} allows to bound the difference in expected cost between the trajectory sampled from the true dynamics $f^{\star}$ and the one sampled from a truncated sample $f_{n}^m$, for any $m\in[M]$, obtained according to the \textit{same policy $\pi$ and the same initial distribution $x_0$}:
\begin{equation} \label{eq:bound_cost1}
\big| J_c(\pi, f_{n}^m) - J_c(\pi,f^{\star}) \big|
\le 
T C_{\max} \, \mathbb{E}_{\tau_\pi^f}\left[ \sum_{t=0}^{T-1}
\frac{\| f^{\star}(x_t,\pi(x_t)) -  f_{n}^m(x_t,\pi(x_t)) \|}{\sigma_w}\right]
\end{equation}

    Then, as described in Section \ref{section:algorithm}, the number of samples $M$ in Algorithm \ref{alg:mbrl1} is chosen large enough to ensure that with probability $1-\delta$ there exists a sample $f_0^{m_\zeta}$ from the GP prior that is $\zeta-$ close to $f^\star$. Furthermore, under the success event of Lemma \ref{lemma:confidence_gp}, which holds with probability $1-\delta$, the truncation that we perform at every episode can only bring the samples closer to $f^\star$, preserving the existence of a $\zeta$-close dynamics sample. Thus, by taking a union bound over the above events, we can state that with probability at least $1-2\delta$, there exists a $\zeta$-close truncated sample $ f_n^{m_\zeta}$ at any episode $n$:
\begin{equation*}
\left\lVert f^{\star}(z) - f_n^{m_\zeta}(z) \right\rVert 
< \,\zeta\sqrt{d_x} \;\; \forall z\in\calZ
\end{equation*}
where the factor $\sqrt{d_x}$ arises from bounding the $\ell_2$ norm above in terms of the $\ell_\infty$ norm of \Cref{lem: sampling}.
Thus, conditioned on the existence of the $\zeta$-close sample, we can substitute it in (\ref{eq:bound_cost1}) to bound the difference in expected cost between the trajectory sampled from the true dynamics $f^{\star}$ and the $\zeta$-close sample $ f_n^{m_\zeta}$ under the same policy $\pi$ and the same initial distribution $x_0$:
\begin{align*}
\big| J_c(\pi, f_n^{m_\zeta}) - J_c(\pi,f^{\star}) \big|
&\le 
T C_{\max} \, \mathbb{E}_{\tau_\pi^f}\left[ \sum_{t=0}^{T-1}
\frac{\| f^{\star}(x_t,\pi(x_t)) -f_n^{m_\zeta}(x_t,\pi(x_t)) \|}{\sigma_w}\right]\\
&\le\zeta\sqrt{d_x}\frac{T^2 C_{\max}}{\sigma_w}\eqqcolon \Delta_\zeta.
\end{align*}
This upper bound matches exactly the definition of the tightening $\Delta_\zeta$ that we use when enforcing the constraints in Problem~(\ref{pb:1}).

We are finally ready to show the desired safety bound. By definition, any policy $\pi\in \Pi_{\textrm{safe}}^n$  satisfies the tightened constraints for all truncated samples, i.e. $ \quad J_c(\pi,  f_n^m) \le d - \Delta_\zeta, \quad \forall m \in M$. However, as shown above, with probability at least $1-2\delta$ there exists a $\zeta$-close sample $ f_n^{m_\zeta}$ that satisfies $\left|J_c\left(\pi, f^{\star}\right) - J_c\left(\pi, f_n^{m_\zeta}\right)\right|\leq \Delta_\zeta$. Rearranging, we obtain
\begin{equation}
    J_c\left(\pi, f^{\star}\right)\leq J_c\left(\pi,  f_n^{m_\zeta}\right) + \Delta_\zeta \leq d - \Delta_\zeta + \Delta_\zeta = d,
\end{equation}
which proves the claim.

\end{proof}
In the following, we provide the proof of \Cref{thm:merged}. For the sake of exposition, we separate the two claims of the theorem, proving the first claim (about finite-time termination of \Cref{alg:mbrl1}) in Appendix~\ref{appendix:sample_complexity}, and the second claim (about optimality of the policy returned by \Cref{alg:mbrl1}) in Appendix~\ref{appendix:optimality}.
\section{Proof of Sample Complexity}\label{appendix:sample_complexity}
\begin{lemma}\label{lemma:martingale_concentration}
Consider running \Cref{alg:mbrl1} in the setting of \Cref{thm:merged}, until some finite episode $\bar{n}$. Then, with probability at least $1-\delta$, we have:
\begin{align}\label{eq:lhs}
    \sum_{n=0}^{\bar{n}-1} 
\mathbb{E}_{\tau_{\pi_n}^{f^\star}}\!\left[
\sum_{t=0}^{T-1} s_n(x_t,\pi_n(x_t))
\mid \calF_{n-1}\right]\leq\sum_{n=0}^{\bar{n}-1} \sum_{t=0}^{T-1} s_n(x_t^n,\pi_n(x_t^n)) + T\sqrt{2\bar{n}d_x\sigma_{\max} \log(1/\delta)},
\end{align}
where, for each episode $n\in\{0,\dots,\bar n-1\}$, the states $x_t^n$ on the right-hand side denote the trajectory realized by the agent in the environment, while following the policy $\pi_n$ under the true dynamics $f^\star$.

\end{lemma}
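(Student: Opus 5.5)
This follows from the Azuma--Hoeffding inequality applied to a martingale difference sequence indexed by episodes. For each episode $n$ define
\begin{equation*}
S_n \coloneqq \sum_{t=0}^{T-1} s_n(x_t^n,\pi_n(x_t^n)),\qquad D_n \coloneqq \mathbb{E}\left[S_n \mid \calF_{n-1}\right] - S_n .
\end{equation*}
The first step is to check measurability. The GP posterior $(\mu_n,\sigma_n)$, and hence $s_n$, is a deterministic function of $\calD_{0:n-1}$. The truncated samples $f_n^m$ are determined by the initial draws $\tilde f^m$ together with $\calD_{0:n-1}$. Therefore Problem~\eqref{pb:1}, and with it the policy $\pi_n$, is $\calF_{n-1}$-measurable (fixing a measurable tie-breaking rule if the argmax is not unique). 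The trajectory $\{x_t^n\}$ is generated by rolling out $\pi_n$ on $f^\star$ with fresh noise $w_t$ and a fresh $x_0\sim\rho_0$, both independent of $\calF_{n-1}$. It follows that $\mathbb{E}[S_n\mid\calF_{n-1}]$ equals exactly the inner conditional expectation on the left-hand side of \eqref{eq:lhs}. Since $S_n$ is $\calF_n$-measurable, $(D_n)$ is a martingale difference sequence with respect to $(\calF_n)$.

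The second step bounds the increments. Each posterior variance satisfies $\sigma_{n,j}^2(z)\le k(z,z)\le\sigma_{\max}$, so $s_n(z)=\|\sigma_n(z)\|\le\sqrt{d_x\sigma_{\max}}$. Hence $S_n\in[0,T\sqrt{d_x\sigma_{\max}}]$, and conditionally on $\calF_{n-1}$ the difference $D_n$ ranges over an interval of length at most $c\coloneqq T\sqrt{d_x\sigma_{\max}}$.

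The third step applies the one-sided Azuma--Hoeffding inequality, which gives
\begin{equation*}
\Pr\left(\sum_{n=0}^{\bar n-1} D_n \ge \epsilon\right)\le \exp\!\left(-\frac{2\epsilon^2}{\bar n c^2}\right).
\end{equation*}
Setting the right-hand side to $\delta$ yields $\epsilon = T\sqrt{\bar n d_x\sigma_{\max}\log(1/\delta)/2}$. This is smaller than the stated $T\sqrt{2\bar n d_x\sigma_{\max}\log(1/\delta)}$, so the claim holds with room to spare. (The stated constant is what one gets from the cruder form $\exp(-\epsilon^2/(2\bar n c^2))$, which uses $|D_n|\le c$.) Rearranging $\sum_n D_n\le\epsilon$ gives \eqref{eq:lhs}.

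The main obstacle is the measurability bookkeeping in the first step, not the concentration itself. Two issues need care. First, the filtration must include the initial samples $\tilde f^m$, so that $\pi_n$ is predictable. Second, $\bar n$ must be treated as a fixed deterministic horizon, as the lemma states. If the algorithm may stop before $\bar n$, I would extend the process past the stopping time with zero increments (for instance $\pi_n\coloneqq\pi_{n^\star}$ and $D_n\coloneqq 0$), which preserves the martingale property and does not change the bound.
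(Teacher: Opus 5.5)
Your proposal is correct and follows essentially the same route as the paper, which also forms the episode-indexed martingale difference $X_n=\mathbb{E}[Y_n\mid\calF_{n-1}]-Y_n$ with $Y_n\in[0,T\sqrt{d_x\sigma_{\max}}]$ and applies Hoeffding--Azuma (Lemma A.7 of Cesa-Bianchi and Lugosi). The paper takes range length $c_i=2T\sqrt{d_x\sigma_{\max}}$, which is why its deviation term is twice yours; your predictable interval of length $T\sqrt{d_x\sigma_{\max}}$ and your measurability and stopping-time bookkeeping are valid refinements of the same argument.
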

\begin{proof}
    Let us define $Y_n\coloneqq \sum_{t=0}^{T-1} s_n(x_t^n,\pi_n(x_t^n))$, so that:
    \begin{equation}\label{eq:expected_var}
         \mathbb{E}_{\tau_{\pi_n}^{f^\star}}\!\left[
\sum_{t=0}^{T-1} s_n(x_t,\pi_n(x_t))
\mid \calF_{n-1}\right]=  
\mathbb{E}_{\tau_{\pi_n}^{f^\star}}\!\left[
Y_n\mid \calF_{n-1}
\right].
    \end{equation}
Next, define
\begin{align}
    X_n\coloneqq \mathbb{E}_{\tau_{\pi_n}^{f^\star}}\!\left[Y_n\mid\calF_{n-1}\right] - Y_n
\end{align}
Then, $X_n$ is a \emph{martingale difference sequence}, because:
\begin{enumerate}
    \item $\mathbb{E}_{\tau_{\pi_n}^{f^\star}}\!\left[X_n\mid\calF_{n-1}\right]=0$.
    \item $\mid X_n\mid\leq T\sqrt{d_x\sigma_{\max}}$ (because $Y_n\in\left[0,\,T\sqrt{d_x\sigma_{\max}}\right]$).
\end{enumerate}
Therefore, we can apply Lemma A.7 of \citet{bookCesa} \footnotemark (substituting $A_i\coloneqq-T\sqrt{d_x\sigma_{\max}}$ and $c_i\coloneqq2T\sqrt{d_x\sigma_{\max}}$ in their notation) to get, with probability at least $1-\delta$:
\begin{align}
    \sum_{n=0}^{\bar{n}-1}X_n\leq T\sqrt{2\bar{n}d_x\sigma_{\max}\log(1/\delta)}
\end{align}
The claim follows by rearranging and using the definition of $X_n$, $Y_n$, and \Cref{eq:expected_var}.
\footnotetext{Note that a variance-sensitive inequality (e.g. Freedman or Bernstein inequalities for martingales) would yield a sharper bound, but Azuma-Hoeffding suffices for our purposes and yields a cleaner proof.}
\end{proof}

Next, we restate the first claim of \Cref{thm:merged} as follows.
\begin{restatable}[Sample Complexity]{theorem}{samplecomplexitythm} \label{thm:complexity}
Suppose Assumptions \ref{assump:process_noise}-\ref{assump:information_gain} hold. Fix some $\delta\in(0,1/2)$, $\zeta \in(0,\frac{\sigma_w\Delta}{\sqrt{d_x}T^2C_{\max}})$, and $\varepsilon > 0$.
Let $d_\sigma^n=\frac{\varepsilon\sigma_w}{2G_{\text{max}} T\beta_n(\delta)}$ with $G_{\text{max}}\coloneqq\max\{C_\text{max},R_\text{max}\}$, and consider running \Cref{alg:mbrl1}.
There exists a problem-dependent constant $C = C(d_x, \sigma_{\max}, G_{\max}, T, \delta)$ such that, letting $\bar n$ be the smallest integer satisfying 
\begin{align}\label{eq: sample complexity}
\bar n \;\ge\; C \, \frac{\gamma_{\bar n T}(k)^3}{\varepsilon^2},
\end{align}
then, with probability at least $1-2\delta$ there exists $n^\star \leq \bar n$ such that
    $J_{s_{n^\star}}(\pi, \mu_{n^\star})  < d_\sigma^{n^\star} \,\, \forall \pi \in\Pi^{{n^\star}}_{\text{safe}},$
    i.e. Algorithm~\ref{alg:mbrl1} terminates at iteration ${n}^\star$.
\end{restatable}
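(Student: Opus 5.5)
We argue by contradiction. Suppose that with the stated probability the algorithm has not terminated by episode $\bar n$. Then for every $n<\bar n$ Problem~\eqref{pb:1} was feasible. In particular, the selected policy satisfies $J_{s_n}(\pi_n,\mu_n)\ge d_\sigma^n$, so summing over episodes gives
\[
\sum_{n=0}^{\bar n-1} d_\sigma^n \le \sum_{n=0}^{\bar n-1} J_{s_n}(\pi_n,\mu_n).
\]
Since $\beta_n(\delta)$ is nondecreasing in $n$, the left-hand side is at least $\bar n\,\varepsilon\sigma_w/(2G_{\max}T\beta_{\bar n}(\delta))$. The goal is to show that the right-hand side grows only like $\sqrt{\bar n\,\gamma_{\bar nT}}\,\beta_{\bar n}$. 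This contradicts the defining inequality for $\bar n$ in \eqref{eq: sample complexity}.

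\textbf{Step 1: transfer from $\mu_n$ to $f^\star$.} Work on the success event of \Cref{lemma:confidence_gp}, under which $f^\star$ lies in the confidence set at every $n$. Apply \Cref{lemma:bound_J_general} to the function $g=s_n$, which lies in $[0,\sqrt{d_x\sigma_{\max}}]$, in the version where the expectation is rolled out under $f^\star$. Then bound $\|f^\star-\mu_n\|\le\beta_n s_n$ as in \Cref{corollary:bound_uncertainty_general}. This gives
\[
J_{s_n}(\pi_n,\mu_n)\le \Bigl(1+\tfrac{T\sqrt{d_x\sigma_{\max}}\,\beta_n}{\sigma_w}\Bigr)\,\mathbb{E}_{\tau^{f^\star}_{\pi_n}}\Bigl[\textstyle\sum_t s_n(x_t,\pi_n(x_t))\mid\calF_{n-1}\Bigr].
\]
In words: the expected uncertainty along the nominal model's rollouts is controlled by the expected uncertainty along true-system rollouts, at the cost of a multiplicative factor of order $\beta_n$.

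\textbf{Step 2: pass to realized trajectories.} Use \Cref{lemma:martingale_concentration} to replace the conditional expectations with the realized sums $\sum_{n,t}s_n(z_t^n)$, plus an additive term $T\sqrt{2\bar n d_x\sigma_{\max}\log(1/\delta)}$. This costs an extra $\delta$. The probability budget then has to be merged with that of \Cref{lemma:confidence_gp}; I would handle this with a union bound and adjust constants inside $C$ to keep the total failure probability at $2\delta$.

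\textbf{Step 3: bound the realized uncertainty.} Apply the standard GP information-gain bound
\[
\sum_{n<\bar n}\sum_t s_n(z_t^n)\le \sqrt{\bar nT\cdot C'\,d_x\,\gamma_{\bar nT}}.
\]
This step is the main technical obstacle. The variances $\sigma_n$ are frozen within an episode rather than updated after each transition, so the usual per-step argument does not apply directly. I would invoke the episodic/batched version, as in Lemma B.x of \citet{sukhija2025sombrl} or \citet{as2025actsafe}, which compares $\sigma_n(z_t^n)$ to the sequentially updated variance using $\sigma_{\max}/\sigma_w^2$-dependent constants. The constant factor depending on $T$ is then absorbed into $C$.

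\textbf{Step 4: combine.} Putting the steps together yields
\[
\bar n\,\frac{\varepsilon\sigma_w}{2G_{\max}T\beta_{\bar n}}\lesssim \beta_{\bar n}\bigl(\sqrt{\bar nT\gamma_{\bar nT}}+T\sqrt{\bar n\log(1/\delta)}\bigr).
\]
Substituting $\beta_{\bar n}^2=O(\gamma_{\bar nT}+\log(d_x/\delta))$ gives $\sqrt{\bar n}\,\varepsilon\lesssim\gamma_{\bar nT}^{3/2}$, that is, $\bar n\lesssim C\gamma_{\bar nT}^3/\varepsilon^2$ for a constant $C$ depending only on $(d_x,\sigma_{\max},G_{\max},T,\delta)$. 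Choosing $C$ slightly larger turns this into a strict contradiction with the defining inequality for $\bar n$. \Cref{assump:information_gain} ensures such a finite $\bar n$ exists.

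\textbf{Step 5: conclude termination.} Infeasibility of Problem~\eqref{pb:1} at some $n^\star\le\bar n$ is, by the feasibility discussion in the appendix ($\Pi_{\rm prior}\subseteq\Pi^n_{\rm safe}$), due only to the exploration constraint. This is exactly the stated condition $J_{s_{n^\star}}(\pi,\mu_{n^\star})<d_\sigma^{n^\star}$ for all $\pi\in\Pi^{n^\star}_{\rm safe}$.
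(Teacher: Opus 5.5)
Your proposal is correct and follows essentially the same route as the paper's proof. Both transfer $J_{s_n}(\pi_n,\mu_n)\le(1+\lambda_n^s)J_{s_n}(\pi_n,f^\star)$ via calibration, apply the Azuma--Hoeffding step of \Cref{lemma:martingale_concentration}, use Cauchy--Schwarz together with the episodic information-gain bound (the paper cites Lemma 17 of \citet{HURCL} for the frozen-variance issue you flag), and conclude with $\beta_{\bar n}^4\gamma_{\bar nT}\sim\gamma_{\bar nT}^3$; the only minor remark is that the union bound over the two $\delta$-events already yields $1-2\delta$ directly, so no adjustment of constants inside $C$ is needed for the probability budget.
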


\begin{proof}
We start by conditioning on the success event of Lemma \ref{lemma:confidence_gp}, i.e. we assume well-calibration of the true dynamics $f^\star$. Under this event, that holds with probability $1-\delta$, and further noting that $s_n(\cdot)=\|\sigma_n(\cdot, \pi(\cdot))\|$ is bounded by $\sqrt{d_x\sigma_{\max}}$ thanks to Assumption \ref{assump:rkhs}, we can apply Corollary \ref{corollary:bound_uncertainty_general} to the scalar function $s_n(\cdot)$ and dynamics $f^\star$, for any $n\geq0$. Thus, we can write for $J_{s_n}$:
\begin{equation*}
    \mid J_{s_n}(\pi_n,\mu_n) - J_{s_n}(\pi_n, f^{\star}) \mid \leq \lambda_n^{s} J_{s_n}(\pi_n,f^{\star}),
\end{equation*}
where  $\lambda_n^{s} = \sqrt{d_x\sigma_{\max}} T\frac{\beta_n(\delta)}{\sigma_w}$.
Rearranging the terms:
\begin{align*}
   J_{s_n}(\pi_n,\mu_n)&\leq \left(1 + \lambda_n^{s} \right)J_{s_n}(\pi_n,f^{\star}).
\end{align*}
Notice that by our uncertainty constraint we know that before termination $J_{s_n}(\pi_n,\mu_n)\geq d_\sigma^n$. Furthermore, using the monotonicity of $\beta_n(\delta)$ we can bound $d_\sigma^n\geq d_\sigma^{\Bar{n}}$ for all $n\leq\Bar{n}$. By substituting this and taking the sum over the episodes from $1$ to $\Bar{n}$:
\begin{align*}
   \Bar{n}d_\sigma^{\Bar{n}} \leq \sum_{n=0}^{\bar{n}-1} d_\sigma^{n} &\leq \sum_{n=0}^{\bar{n}-1}J_{s_n}(\pi_n,\mu_n) \\
   &\leq \sum_{n=0}^{\bar{n}-1}\left(1 + \lambda_n^{s} \right)J_{s_n}(\pi_n,f^{\star}) \\
   &= \sum_{n=0}^{\bar{n}-1}\left(1 + \lambda_n^{s} \right)\mathbb{E}_{\tau_{\pi_n}^{f^\star}}\!\left[\sum_{t=0}^{T-1} s_n(x_t,\pi_n(x_t))\mid \calF_{n-1}\right]\\
   &\leq \left(1 + \lambda_{\bar n}^s \right)\sum_{n=0}^{\bar{n}-1}\mathbb{E}_{\tau_{\pi_n}^{f^\star}}\!\left[\sum_{t=0}^{T-1} s_n(x_t,\pi_n(x_t)) \mid \calF_{n-1}\right] \\
   &\leq \left(1 + \lambda_{\bar n}^s \right)\left(\sum_{n=0}^{\bar{n}-1}\sum_{t=0}^{T-1} s_n(x_t^n,\pi_n(x_t^n))+ T\sqrt{2\bar{n}d_x\sigma_{\max}\log(1/\delta)}\right) \;\;\text{(\Cref{lemma:martingale_concentration})},\\
   &\leq\left(1 + \lambda_{\bar n}^s \right)\left(\sqrt{\bar{n}T}\sqrt{\sum_{n=0}^{\bar{n}-1}\sum_{t=0}^{T-1} s_n(x_t^n,\pi_n(x_t^n))^2}+ T\sqrt{2\bar{n}d_x\sigma_{\max}\log(1/\delta)}\right) \\
   &\;\;\;\;\;\;\;\;\;\;\;\;\;\;\;\;\;\;\;\;\;\text{(Cauchy-Schwarz),}\\
   &\leq\left(1 + \lambda_{\bar n}^s \right)\left(\sqrt{\bar{n}T}\sqrt{\frac{T d_x \sigma_{\max}}{\log(1+\sigma_w^{-2}\sigma_{\max})}
\, \gamma_{\bar{n}T}}+ T\sqrt{2\bar{n}d_x\sigma_{\max}\log(1/\delta)}\right)  \\
   &\;\;\;\;\;\;\;\;\;\;\;\;\;\;\;\;\;\;\;\;\; \text{(\citet{HURCL}, Lemma 17)},
\end{align*}
where we used that $\lambda_n^{s}\leq\lambda_{\bar n}^s$ due to the monotonicity of $\beta_n(\delta)$. Note that the success event of  \Cref{lemma:martingale_concentration} holds with probability probability at least $1-\delta$, so that taking a union bound with the success event of Lemma~\ref{lemma:confidence_gp} we get that our final guarantee holds with probability at least $1-2\delta$.
Thus, dividing both sides for $\sqrt{\Bar{n}}$ and taking the square,  we obtain:
\begin{align*}
   \Bar{n} &\leq \frac{T^2\left(1 + \lambda_{\bar n}^s \right)^2}{(d_\sigma^{\Bar{n}})^2}\left(\sqrt{\frac{d_x \sigma_{\max}}{\log(1+\sigma_w^{-2}\sigma_{\max})}\, \gamma_{\bar{n}T}}
+ \sqrt{2d_x\sigma_{\max}\log(1/\delta)}\right)^2 \\
   &\leq\frac{4T^4\left(1 + \lambda_{\bar n}^s \right)^2G_{\text{max}}^2}{\varepsilon^2\sigma_w^2}\beta_{\Bar{n}}^2(\delta)\left(\sqrt{\frac{d_x \sigma_{\max}}{\log(1+\sigma_w^{-2}\sigma_{\max})}\, \gamma_{\bar{n}T}}
+ \sqrt{2d_x\sigma_{\max}\log(1/\delta)}\right)^2 
.
\end{align*}
Therefore, if there exists an integer $\bar n$ that violates the above inequality, the algorithm must have terminated in one of the previous rounds, which proves the claim. To show that such an integer exists, note that the calibration coefficient $\beta_n(\delta)$ is monotonically increasing in $n$. Therefore, a finite integer $\bar n$ violating the above inequality (i.e. satisfying Equation \ref{eq: sample complexity}) can be found if $\beta_n(\delta)^4 \gamma_{nT}(k)$ grows sub-linearly with $n$ (since asymptotically we can omit the low order terms in $\gamma_{nTd_x}$). By the definition of $\beta_n(\delta)$ from \Cref{lemma:confidence_gp}, the quantity $\beta_n^4 (\delta)\gamma_{nT}(k)$ is of the same order as $\gamma_{nT}^3(k)$ for large $n$. Therefore, the existence of a finite $\bar n$ satisfying Equation~\eqref{eq: sample complexity} is guaranteed by Assumption \ref{assump:information_gain}.
\end{proof}

\section{Proof of Optimality}\label{appendix:optimality}

\begin{lemma}\label{lemma:specialized_bound_mean_uncertainty}
    Let Assumptions \ref{assump:process_noise}-\ref{assump:rkhs} hold. 
Then, with probability at least $1-\delta$ we have that for all $n \ge 0$ and any truncated sample $ f_{n}^m$: 
\begin{equation}
    |J_c(\pi,f^{\star}) - J_c(\pi, f_{n}^m)| \leq 2\lambda_n^cJ_{s_n}(\pi,\mu_n).
\end{equation}
Moreover, the same holds for the reward function, i.e. replacing $c$ and $\lambda_n^c$ with $r$ and $\lambda_n^r$ respectively.
\end{lemma}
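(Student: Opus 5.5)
The plan is a triangle inequality through the posterior mean $\mu_n$, with \Cref{corollary:bound_uncertainty_general} bounding each leg. Work on the success event of \Cref{lemma:confidence_gp}, which holds with probability at least $1-\delta$ simultaneously for all $n\ge 0$. Write
\begin{equation*}
|J_c(\pi,f^\star) - J_c(\pi,f_n^m)| \le |J_c(\pi,f^\star) - J_c(\pi,\mu_n)| + |J_c(\pi,\mu_n) - J_c(\pi,f_n^m)|
\end{equation*}
and bound each term by $\lambda_n^c J_{s_n}(\pi,\mu_n)$.

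First, check that both $f^\star$ and $f_n^m$ satisfy the hypothesis of \Cref{corollary:bound_uncertainty_general} at episode $n$, i.e. $|\mu_{n,j}(z)-f_j(z)|\le\beta_n(\delta)\sigma_{n,j}(z)$ for all $z$ and $j$. For $f^\star$ this is exactly the success event of \Cref{lemma:confidence_gp}. For the truncated sample $f_n^m$ it holds by construction, since \eqref{eq: truncated} clips $f_{n-1,j}^m(z)$ into $[\mu_{n,j}(z)-\beta_n\sigma_{n,j}(z),\,\mu_{n,j}(z)+\beta_n\sigma_{n,j}(z)]$. This holds deterministically, whatever the earlier truncations did, so the only use of the success event is for $f^\star$. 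It also shows why the $1-\delta$ in the statement comes solely from \Cref{lemma:confidence_gp}, with no need for the sampling event of \Cref{lem: sampling}.

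Second, apply \Cref{corollary:bound_uncertainty_general} with $g=c$ and $G_{\max}$ replaced by $C_{\max}$, giving $\lambda_n^c = C_{\max}T\beta_n(\delta)/\sigma_w$. The corollary gives the bound $\lambda_n^c\min\{J_{s_n}(\pi,f),J_{s_n}(\pi,\mu_n)\}\le \lambda_n^c J_{s_n}(\pi,\mu_n)$, and I will always take the mean-rollout branch. That branch comes from the second inequality of \Cref{lemma:bound_J_general}, where the expectation is over rollouts under $\mu_n$; there one bounds $\|f-\mu_n\|\le\beta_n\|\sigma_n\| = \beta_n s_n$ pointwise along the $\mu_n$-trajectory. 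Applying this once with $f=f^\star$ and once with $f=f_n^m$ and summing yields the factor $2\lambda_n^c J_{s_n}(\pi,\mu_n)$. The reward case is identical with $g=r$, $R_{\max}$ and $\lambda_n^r$.

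The main subtlety is that the confidence condition is needed along whatever trajectory the expectation is taken over. Because the condition holds uniformly over all of $\calZ$, any rollout distribution is fine, so the mean-rollout branch of the corollary is legitimate for both $f^\star$ and $f_n^m$. A secondary point is measurability: $J_{s_n}$ is defined conditionally on $\calF_{n-1}$. This is harmless because $\mu_n$, $\sigma_n$, $\beta_n$ and the truncated samples are all $\calF_{n-1}$-measurable (the samples $\tilde f^m$ are included in the filtration), so the bound holds pathwise on the event.
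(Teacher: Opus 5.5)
Your proposal is correct and is essentially the paper's proof. It uses a triangle inequality through $\mu_n$, with \Cref{corollary:bound_uncertainty_general} applied once to $f_n^m$ (which lies in the episode-$n$ confidence set by the truncation \eqref{eq: truncated}) and once to $f^\star$ (on the success event of \Cref{lemma:confidence_gp}), each time via the $\mu_n$-rollout branch to get $\lambda_n^c J_{s_n}(\pi,\mu_n)$. Your remarks that the truncated samples meet the confidence condition deterministically, and that only the calibration event is needed, make explicit what the paper leaves implicit.
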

\begin{proof}
Under the success event of Lemma \ref{lemma:confidence_gp}, which holds with probability $1-\delta$, and thanks to the truncation procedure described in Section \ref{section:algorithm}, we can apply Corollary \ref{corollary:bound_uncertainty_general} to bound the distance between the costs attained along the mean dynamics and the ones attained along the trajectory rolled out using any of the samples $ f_{n}^m$, in terms of the uncertainty predicted along the mean dynamics. Thus, by Corollary \ref{corollary:bound_uncertainty_general} we immediately get
\begin{equation} \label{bound_cost1}
    |J_c(\pi, f_{n}^m) - J_c(\pi,\mu_n)| \leq \lambda_n^cJ_{s_n}(\pi,\mu_n).
\end{equation}
We can then repeat the same argument for the true dynamics, i.e. swapping $ f_{n}^m$ with $f^\star$:
\begin{equation}\label{bound_cost2}
    |J_c(\pi,f^{\star}) - J_c(\pi,\mu_n)| \leq \lambda_n^cJ_{s_n}(\pi,\mu_n).
\end{equation}

Finally, the claim follows by combining the two bounds above.
\end{proof}

We are finally ready to prove the second claim of \Cref{thm:merged}, which can be restated as follows.
\begin{restatable}[]{theorem}{optimalitythm}\label{thm:optimality}
Suppose Assumptions \ref{assump:process_noise}-\ref{assump:information_gain} hold. Fix some $\delta\in(0,1/2)$, $\zeta \in(0,\frac{\sigma_w\Delta}{\sqrt{d_x}T^2C_{\max}})$, $\varepsilon > 0$ and $\xi>\varepsilon+ 
\zeta\sqrt{d_x} 
\frac{T^2 C_{\max}}{\sigma_w}$.
Let $d_\sigma^n=\frac{\varepsilon\sigma_w}{2G_{\text{max}} T\beta_n(\delta)}$ with $G_{\text{max}}\coloneqq\max\{C_\text{max},R_\text{max}\}$, and consider running \Cref{alg:mbrl1}.
Then, with probability at least $1-2\delta$, the policy $\hat\pi$ returned by \Cref{alg:mbrl1} satisfies
    \begin{equation} \label{bound:optimality}
    J_r(\Tilde{\pi}^{\star}, f^{\star}) - J_r(\hat \pi, f^\star)
    \leq \varepsilon,
    \end{equation}

where $\Tilde{\pi}^{\star}$ is an approximation of the optimal policy defined by the following optimization problem:
\begin{equation} \label{pb:pi_star}
\begin{aligned} 
  \Tilde{\pi}^\star &= \arg\max_{\pi \in \Pi_{\xi}^{\star,c}} J_r(\pi, f^\star). 
\end{aligned}
\end{equation}

\end{restatable}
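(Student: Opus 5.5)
We work on the intersection of the success events of \Cref{lemma:confidence_gp}, \Cref{lem: sampling} and \Cref{thm:complexity}; a union bound gives probability at least $1-2\delta$. On this event the algorithm terminates at some $n^\star\le\bar n$. Infeasibility of Problem~\eqref{pb:1} at $n^\star$ means $J_{s_{n^\star}}(\pi,\mu_{n^\star})<d_\sigma^{n^\star}$ for every $\pi\in\Pi^{n^\star}_{\text{safe}}$.

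\textbf{Accuracy on $\Pi^{n^\star}_{\text{safe}}$.} Take any $\pi\in\Pi^{n^\star}_{\text{safe}}$. Corollary~\ref{corollary:bound_uncertainty_general} and \Cref{lemma:specialized_bound_mean_uncertainty} apply to $f^\star$ and to every truncated sample $f^m_{n^\star}$, since both lie in the confidence set. Using $d_\sigma^{n^\star}=\varepsilon\sigma_w/(2G_{\max}T\beta_{n^\star})$ and $\lambda^g_{n^\star}\le G_{\max}T\beta_{n^\star}/\sigma_w$, we get
\[
|J_g(\pi,f)-J_g(\pi,\mu_{n^\star})|<\varepsilon/2
\]
for $g\in\{r,c\}$ and $f\in\{f^\star,f^m_{n^\star}\}$. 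It follows that $|J_g(\pi,f^\star)-J_g(\pi,f^m_{n^\star})|<\varepsilon$.

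\textbf{Key step: $\Pi^{\star,c}_\xi\subseteq\Pi^{n^\star}_{\text{safe}}$.} We prove this by a continuity and connectedness argument. Fix $\pi\in\Pi^{\star,c}_\xi$ and a path $\rho$ from $\rho(1)\in\Pi_{\text{prior}}$ to $\rho(0)=\pi$ inside $\Pi^{\star,c}_\xi$.
\begin{enumerate}
\item The endpoint is safe: by Appendix~\ref{appendix:discussions}, $\Pi_{\text{prior}}\subseteq\Pi^{n^\star}_{\text{safe}}$, and in fact each prior policy satisfies the constraint with slack $\Delta-\Delta_\zeta>0$.
\item Let $s^\ast=\inf\{s:\rho(s')\in\Pi^{n^\star}_{\text{safe}}\ \forall s'\in[s,1]\}$.
\item The maps $s\mapsto J_c(\rho(s),f^m_{n^\star})$ are continuous. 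This needs continuity of $J_c(\pi,f)$ in $\pi$ under $\|\cdot\|_\infty$, which follows from Gaussian noise smoothing the transitions, together with continuity of $f^m_{n^\star}$, $\mu_{n^\star}$ and $\sigma_{n^\star}$ (inherited from \Cref{assumption:continuity_sigma} and the clip operation) and of $c$. Hence $\rho(s^\ast)\in\Pi^{n^\star}_{\text{safe}}$, being a closed condition.
\item By the accuracy step, $J_c(\rho(s^\ast),f^m_{n^\star})<J_c(\rho(s^\ast),f^\star)+\varepsilon\le d-\xi+\varepsilon<d-\Delta_\zeta$, using $\xi>\varepsilon+\Delta_\zeta$.
\item The inequality in step 4 is strict, so by continuity it persists on a neighborhood of $s^\ast$. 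If $s^\ast>0$ this contradicts minimality, so $s^\ast=0$ and $\pi\in\Pi^{n^\star}_{\text{safe}}$.
\end{enumerate}

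\textbf{Conclusion.} $\hat\pi$ maximizes $J_r(\cdot,\mu_{n^\star})$ over $\Pi^{n^\star}_{\text{safe}}$, which contains $\tilde\pi^\star$, and both policies are in $\Pi^{n^\star}_{\text{safe}}$. Therefore
\[
J_r(\tilde\pi^\star,f^\star)-J_r(\hat\pi,f^\star)\le \bigl[J_r(\tilde\pi^\star,\mu_{n^\star})+\tfrac{\varepsilon}{2}\bigr]-\bigl[J_r(\hat\pi,\mu_{n^\star})-\tfrac{\varepsilon}{2}\bigr]\le\varepsilon .
\]
Here each $\varepsilon/2$ comes from the one-sided mean-versus-true bound applied to the reward, and the last inequality uses the optimality of $\hat\pi$ for the mean model.

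\textbf{Main obstacle.} The hard part is the continuity argument: establishing continuity of $\pi\mapsto J_c(\pi,f)$ along the path for possibly non-continuous policies. A small $\|\cdot\|_\infty$ change in $\pi$ perturbs the next-state mean only through $f$'s continuity in $a$, and the Gaussian noise then controls the total-variation change step by step. If this proves delicate, the fallback is to assume continuity of the policy class, or to argue directly through the closedness of the sublevel sets along $\rho$.
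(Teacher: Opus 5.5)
\textbf{Gap in the key step.} Your outline matches the paper's: accuracy on $\Pi^{n^\star}_{\text{safe}}$, then the inclusion $\Pi^{\star,c}_\xi\subseteq\Pi^{n^\star}_{\text{safe}}$ along the path, then the three-term decomposition. The problem is your proof of the inclusion. It rests on step 3, continuity of $\pi\mapsto J_c(\pi,f^m_{n^\star})$ in $\|\cdot\|_\infty$, and this is not available under the theorem's hypotheses. The cost is only assumed bounded. The Gaussian noise cannot supply regularity in the action, because the immediate term $c(x_t,\pi(x_t))$ depends on the action directly. For example, let $c(x,a)=1$ if the first action coordinate is positive and $0$ otherwise. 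Compare the constant policies $\pi_k$ with first coordinate $1/k$ against $\pi\equiv 0$. Then $\|\pi_k-\pi\|_\infty\to 0$, yet $J_c(\pi_k,f)-J_c(\pi,f)=T$ for every $f$. So the closedness used in step 3 and the neighbourhood extension in step 5 both fail. Neither fallback repairs this. Restricting to continuous policies does not help, since the example uses constant policies, and closedness of the sublevel sets along $\rho$ is exactly the missing semicontinuity. Separately, \Cref{assumption:continuity_sigma} gives continuity of $\mu_{n^\star}$ and $\sigma_{n^\star}$, but not of the GP sample paths $\tilde f^m$ behind $f^m_{n^\star}$; you would also have to justify that.

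\textbf{How the paper avoids it.} The missing idea is to place the continuity requirement on $\pi\mapsto J_{s_{n^\star}}(\pi,\mu_{n^\star})$ instead. This map is regular under \Cref{assumption:continuity_sigma} alone: $s_n$ is $\tfrac12$-H\"older and $\mu_n$ is Lipschitz (\Cref{lemma:continuity_sigma}). This gives $J_{s_n}(\pi',\mu_n)\le J_{s_n}(\pi,\mu_n)+\sqrt{\varepsilon_\pi}C^n_\pi$ whenever $\|\pi'-\pi\|_\infty\le\varepsilon_\pi$ (\Cref{lemma:concentration_sigma}). You then combine it with the fact that \Cref{lemma:specialized_bound_mean_uncertainty} holds for every $\pi\in\Pi$, not only on $\Pi^{n^\star}_{\text{safe}}$; your accuracy step never uses this. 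The paper's argument runs as follows:
\begin{itemize}
\item Choose $\varepsilon_\pi$ with $2\lambda^c_{n^\star}\sqrt{\varepsilon_\pi}C^{n^\star}_\pi\le\xi-\varepsilon-\Delta_\zeta$.
\item Use uniform continuity of $\rho$ to find consecutive path points $\pi_{\text{safe}}\in\Pi^{n^\star}_{\text{safe}}$ and $\pi'\in\Pi^{\star,c}_\xi\setminus\Pi^{n^\star}_{\text{safe}}$ within $\varepsilon_\pi$ of each other.
\item Bound $J_c(\pi',f^m_{n^\star})\le J_c(\pi',f^\star)+2\lambda^c_{n^\star}\bigl(d^{n^\star}_\sigma+\sqrt{\varepsilon_\pi}C^{n^\star}_\pi\bigr)\le d-\Delta_\zeta$, which contradicts $\pi'\notin\Pi^{n^\star}_{\text{safe}}$.
\end{itemize}
No regularity of $c$, $r$ or the sample paths is needed. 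Your first-exit argument is valid only if you add continuity of $c$ in the action, and of the sample paths, as extra hypotheses.

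\textbf{Minor: probability accounting.} \Cref{lem: sampling} is not needed for this theorem, and counting its event in your union bound would give $1-3\delta$. The confidence event of \Cref{lemma:confidence_gp} plus the martingale event of \Cref{lemma:martingale_concentration} give exactly $1-2\delta$.
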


\begin{proof} Let $n^\star$ denote the termination episode of \Cref{alg:mbrl1}, which is guaranteed to be finite by \Cref{thm:complexity}. 
First, Lemma \ref{lemma:specialized_bound_mean_uncertainty} allows us to calculate the difference between the expected cost along the real trajectory and the one generated from any truncated sample, in terms of the expected predicted uncertainty along the mean dynamics. Note that the bound holds jointly for all $n\geq0$ with probability at least $1-\delta$. Thus, conditioning on this event, we can write that for any policy $\pi\in\Pi$ at episode $n^\star$:
\begin{equation}\label{bound_cost:every_policy}
    |J_c(\pi,f^{\star}) - J_c(\pi, f_{n^\star}^m)| \leq 2\lambda_{n^\star}^cJ_{s_{n^\star}}(\pi,\mu_{n^\star}).
\end{equation}

Note that the bound above holds for any policy $\pi\in\Pi$. Then, if the policy also belongs to the set of safe policies (i.e. $\pi\in\Pi^{n^\star}_{\text{safe}}$) we have that it must be $J_{s_{n^\star}}(\pi,\mu_{n^\star})<d_\sigma^{n^\star}$, since we assume the algorithm has terminated due to the uncertainty constraint. However, recalling the definition of $d_{\sigma, n}$ and $\lambda_n^c$, we have that $d_{\sigma}^{n^\star}=\frac{\varepsilon\sigma_w}{2G_{\max} T\beta_{n^\star}(\delta)}$ and $\lambda_{n^\star}^c=C_{\max} T\frac{\, \beta_{n^\star}(\delta)}{\sigma_w}$. Thus, we can substitute these in (\ref{bound_cost:every_policy}) to get that for any policy $\pi\in\Pi^{n^\star}_{\text{safe}}$ and any truncated dynamics sample $ f_{n^\star}^m$,
\begin{equation} \label{bound_cost:final}
    \left|J_c\left(\pi, f^{\star}\right) - J_c\left(\pi,  f_{n^\star}^m\right)\right|\leq \varepsilon.
\end{equation}
In the following, we will also use that for any policy $\pi\in\Pi^{n^\star}_{\text{safe}}$ we have,
\begin{equation} \label{bound_reward:final}
    \left|J_r\left(\pi, f^{\star}\right) - J_r\left(\pi, \mu_{n^\star}\right)\right|\leq \frac{\varepsilon}{2},
\end{equation}
which can be obtained by repeating the same argument we used to obtain Equation \ref{bound_cost:final}, but applied directly to Equation \ref{bound_cost2} of Lemma \ref{lemma:specialized_bound_mean_uncertainty}, after substituting $c$ with $r$. 

We highlight that in general Equations \ref{bound_cost:final}-\ref{bound_reward:final} only hold for policies in $\Pi^{n^\star}_{\text{safe}}$, since Problem~(\ref{pb:1}) being infeasible only implies that the uncertainty is lower than $d_\sigma^{n^\star}$ on the policies that satisfy the first constraint (i.e. that belong to $\Pi^{n^\star}_{\text{safe}}$). Indeed, upon termination there can still exist some $\pi\notin \Pi^{n^\star}_{\text{safe}}$ that satisfies the second constraint  $J_{s_{n^\star}}(\pi, \mu_{n^\star})\geq d_\sigma^{n^\star}$, so that we cannot conclude that \eqref{bound_cost:final} holds for these policies.
 
However, Lemma \ref{lemma:pi_eps_subset_pi_safe} shows that upon termination of the algorithm it holds $ \Pi_{\xi}^{\star,c}\subseteq\Pi^{n^\star}_{\text{safe}}$.
 Thus, by the definition of $\Tilde{\pi}^{\star}\in \Pi_{\xi}^{\star,c}$ and Lemma \ref{lemma:pi_eps_subset_pi_safe}, we get that $\Tilde{\pi}^{\star}\in\Pi^{n^\star}_{\text{safe}}$.

This implies that $\Tilde{\pi}^\star$ is also a feasible solution of the problem obtained after removing the exploration constrain in Problem~(\ref{pb:1}), and hence it achieves a lower reward than $\hat \pi$:
\begin{equation} \label{eq:ineq_rewards}
    J_r\left(\Tilde{\pi}^\star,\mu_{n^\star} \right)\leq J_r\left(\hat \pi, \mu_{n^\star}\right).
\end{equation}
This allows to prove the following optimality bound that holds for the trajectory at the termination episode ${n^\star}$:
\begin{align*}
J_r\left(\Tilde{\pi}^\star, f^{\star}\right) - J_r\left(\hat \pi, f^{\star}\right) 
&= J_r\left(\Tilde{\pi}^\star, f^{\star}\right) - J_r\left(\Tilde{\pi}^\star, \mu_{n^\star}\right) \\
&\quad + J_r\left(\Tilde{\pi}^\star, \mu_{n^\star}\right) - J_r\left(\hat \pi, \mu_{n^\star}\right) \\
&\quad + J_r\left(\hat \pi, \mu_{n^\star}\right) - J_r\left(\hat \pi, f^{\star}\right) \\
&\leq \varepsilon,
\end{align*}
where we have used the bounds $|J_r\left(\Tilde{\pi}^\star, f^{\star}\right) - J_r\left(\Tilde{\pi}^\star, \mu_{n^\star}\right)| \leq \frac{\varepsilon}{2}$ and $|J_r\left(\hat \pi, \mu_{n^\star}\right) - J_r\left(\hat \pi, f^{\star}\right)| \leq \frac{\varepsilon}{2}$, which follow from Equation \ref{bound_reward:final} combined with $\Tilde{\pi}^\star,\hat \pi \in \Pi^{n^\star}_{\text{safe}}$.
\end{proof}
The above proof, together with that of \Cref{thm:complexity} in Appendix~\ref{appendix:sample_complexity} concludes the proof of \Cref{thm:merged}. In the remainder of this section, we provide the lemmas used in the optimality proof.

\begin{lemma}[adapted from Lemma A.3 of \cite{as2025actsafe}] \label{lemma:concentration_sigma}
    Let Assumptions \ref{assump:process_noise}-\ref{assumption:continuity_sigma} hold, and consider
    \begin{align*}
        x_{t+1}=\mu_n(x_t,\pi(x_t)) + w_t\\
        x_{t+1}'=\mu_n(x_t',\pi'(x_t')) + w_t
    \end{align*}
    for two policies $\pi,\pi'$ s.t $\|\pi'-\pi\|_{\infty}\leq \varepsilon_\pi$, assuming the same initial state distribution.
    Then,
    \begin{equation*}
        |J_{s_n}(\pi',\mu_n) - J_{s_n}(\pi, \mu_n)| \leq \sqrt{\varepsilon_\pi} C_\pi
    \end{equation*}
    where $C_\pi^n=T\left(L_s + TL_{\mu_n}\frac{\sqrt{d_x\sigma_{\max}\varepsilon_\pi}}{\sigma_w}\right)$.
    
\end{lemma}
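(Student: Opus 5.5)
The plan is a coupling argument: run the two mean-dynamics rollouts (under $\pi$ and $\pi'$) with the same initial state $x_0\sim\rho_0$ and the same noise sequence $\{w_t\}$. Write $g(x,a)\coloneqq s_n(x,a)=\|\sigma_n(x,a)\|$. Then
\begin{equation*}
|J_{s_n}(\pi',\mu_n)-J_{s_n}(\pi,\mu_n)| \le \sum_{t=0}^{T-1}\mathbb{E}\bigl[|s_n(x_t',\pi'(x_t'))-s_n(x_t,\pi(x_t))|\bigr],
\end{equation*}
so it suffices to bound each stagewise term by $C_\pi^n\sqrt{\varepsilon_\pi}/T$.

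\textbf{Step 1: regularity of $s_n$.} Under Assumption~\ref{assumption:continuity_sigma}, the posterior variance $\sigma_{n,j}^2$ is Lipschitz. Indeed, $k(z,z)$ and $\mathbf k_{n-1}(z)^\top(\mathbf K_{n-1}+\sigma_w^2I)^{-1}\mathbf k_{n-1}(z)$ are Lipschitz with constants of order $L_k$, using $\|(\mathbf K+\sigma_w^2 I)^{-1}\|\le\sigma_w^{-2}$ and the boundedness of $k$ by $\sigma_{\max}$. Since $|\sqrt a-\sqrt b|\le\sqrt{|a-b|}$, each $\sigma_{n,j}$, and hence $s_n$, is $\tfrac12$-Hölder:
\begin{equation*}
|s_n(z)-s_n(z')|\le L_s\sqrt{\|z-z'\|}.
\end{equation*}
This square root is the source of the $\sqrt{\varepsilon_\pi}$ in the claim. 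We also have the crude bound $s_n\le\sqrt{d_x\sigma_{\max}}$, and $\mu_n$ is Lipschitz with some constant $L_{\mu_n}$ by the same Gram-matrix argument applied to $\mu$ and $k$.

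\textbf{Step 2: propagation of the state discrepancy.} Pathwise coupling gives
\begin{equation*}
x_{t+1}'-x_{t+1}=\mu_n(x_t',\pi'(x_t'))-\mu_n(x_t,\pi(x_t)).
\end{equation*}
A naive Lipschitz recursion would blow up exponentially in $T$, and the target constant is only polynomial in $T$. So instead of comparing states, I would compare the two trajectory distributions, in the spirit of Lemma~\ref{lemma:bound_J_general} and Kakade's difference-of-Gaussians bound, applied to the bounded function $s_n\in[0,\sqrt{d_x\sigma_{\max}}]$. Telescoping along the $\pi$-rollout gives two kinds of terms at each step:
\begin{enumerate*}[label=(\roman*)]
\item a \emph{direct} term $|s_n(x,\pi'(x))-s_n(x,\pi(x))|\le L_s\sqrt{\varepsilon_\pi}$;
\item a \emph{transition} term, in which the next-state Gaussians have means differing by $\|\mu_n(x,\pi'(x))-\mu_n(x,\pi(x))\|\le L_{\mu_n}\varepsilon_\pi$, costing at most $T\sqrt{d_x\sigma_{\max}}\,L_{\mu_n}\varepsilon_\pi/\sigma_w$.
\end{enumerate*}

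\textbf{Step 3: summing.} Summing over the $T$ steps and factoring out $\sqrt{\varepsilon_\pi}$ yields
\begin{equation*}
T\Bigl(L_s+TL_{\mu_n}\tfrac{\sqrt{d_x\sigma_{\max}}\sqrt{\varepsilon_\pi}}{\sigma_w}\Bigr)\sqrt{\varepsilon_\pi}=C_\pi^n\sqrt{\varepsilon_\pi},
\end{equation*}
which is the claimed bound.

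\textbf{Main obstacle.} The hardest part is making the telescoping in Step 2 precise. Corollary 2 of \citet{OptiActive} handles a change of dynamics for a fixed policy, whereas here the policy changes. The fix I would try first is to absorb the policy into the dynamics: define $\tilde\mu(x)\coloneqq\mu_n(x,\pi'(x))$ and view the stage reward $s_n(x,\pi'(x))$ as a function of the state only. Lemma~\ref{lemma:bound_J_general} then applies verbatim to the transition term, and the direct term is handled separately. A secondary point to check is that $L_s$ is uniform in $n$, which requires the Hölder constant of $\sigma_{n,j}$ not to degrade with data. That holds because $\sigma_{n,j}^2(z)=\|k(z,\cdot)-\text{proj}\|$-type quantities inherit the $2L_k$ bound.
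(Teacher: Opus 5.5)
Your final argument, the ``absorb the policy into the dynamics'' fix, is correct, but it is organized differently from the paper's proof.

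\textbf{The paper's route.} It applies the performance-difference lemma (Lemma 5 of \citet{OptiActive}) on the $\mu_n$ dynamics. This writes $J_{s_n}(\pi',\mu_n)-J_{s_n}(\pi,\mu_n)$ as an expectation, along the $\pi'$-rollout, of advantages of $\pi$. Each advantage is then split into the direct term $s_n(x_t',\pi'(x_t'))-s_n(x_t',\pi(x_t'))$ and a one-step transition term. The transition term is bounded with Kakade's Lemma C.2, using $J_{s_n,t+1}\le T\sqrt{d_x\sigma_{\max}}$.

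\textbf{Your route.} You pass through a hybrid return, e.g.\ $\hat J$ with state-only stage cost $s_n(x,\pi'(x))$ and dynamics $x\mapsto\mu_n(x,\pi(x))$. Then $J_{s_n}(\pi,\mu_n)-\hat J$ is a pure reward difference under a common state distribution, at most $TL_s\sqrt{\varepsilon_\pi}$. And $\hat J-J_{s_n}(\pi',\mu_n)$ is a pure dynamics difference with a common cost, so Lemma~\ref{lemma:bound_J_general} applies verbatim with $G_{\max}=\sqrt{d_x\sigma_{\max}}$, giving $T^2\sqrt{d_x\sigma_{\max}}L_{\mu_n}\varepsilon_\pi/\sigma_w$. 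The two terms, and hence $C_\pi^n$, coincide exactly with the paper's. Your version is more modular: it reuses the simulation lemma already proven and is manifestly two-sided. The paper's version handles both effects in one telescoping identity.

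\textbf{Two things to clean up.}

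First, drop the opening pathwise coupling and the claim that it suffices to bound $\mathbb{E}\,|s_n(x_t',\pi'(x_t'))-s_n(x_t,\pi(x_t))|$ stage by stage. Those terms play no role in your final argument. Bounding them would require controlling $\|x_t'-x_t\|$, which is exactly the exponential recursion you reject in Step 2.

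Second, the Gram-matrix argument in Step 1 does not give a Lipschitz constant for $\sigma_{n,j}^2$ of order $L_k$. Via $\|\mathbf k_{n-1}(z)\|\le\sqrt{nT}\,\sigma_{\max}$ it gives order $nT\sigma_{\max}L_k/\sigma_w^2$, so the resulting H\"older constant grows with $n$. To obtain the $n$-uniform $L_s=\sqrt{2d_xL_k}$ appearing in the statement, use the projection-residual bound $|\sigma_{n,j}(z)-\sigma_{n,j}(z')|\le\sqrt{k(z,z)-2k(z,z')+k(z',z')}$ (Lemma 12 of \citet{HURCL}), as in Lemma~\ref{lemma:continuity_sigma}. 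Your closing remark already points to this; Step 1 should use it directly rather than the Gram-matrix bound.
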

\begin{proof}
    From Lemma 5 of \cite{OptiActive} ({Difference in Policy Performance}) applied on the $\mu_n$ dynamics (instead of $f^{\star}$ as in the original notation) we have
    \begin{equation*}
        J_{s_n}(\pi',\mu_n) - J_{s_n}(\pi, \mu_n) =\mathbb{E}_{\tau_{\pi'}^{\mu_n}}\left[\sum_{t=0}^{T-1}A_{s_n,t}(\pi,x_t',\pi'(x_t'))\right]
    \end{equation*}
    where $A_{s_n,t}(\pi,x_t',\pi'(x_t'))=\mathbb{E}_{\tau^{\pi}}\left[s_n(x_t',\pi'(x_t')) + J_{s_n,t+1}(\pi, \mu_n, x_{t+1}') - J_{s_n,t}(\pi,\mu_n,x_t')\right]$. Notice that we can rewrite $J_{s_n,t}(\pi,\mu_n,x_t')=s_n(x_t',\pi(x_t')) + \mathbb{E}_{\tau^{\pi}}\left[J_{s_n,t+1}(\pi,\mu_n,\Tilde{x}_{t+1})\right]$ for $\Tilde{x}_{t+1}=\mu_n(x_t',\pi(x_t')) + w_t$. We can thus substitute this in the above to get:
    \begin{align*}
        &J_{s_n}(\pi',\mu_n) - J_{s_n}(\pi, \mu_n) =\mathbb{E}_{\tau_{\pi'}^{\mu_n}}\left[\sum_{t=0}^{T-1}A_{s_n,t}(\pi,x_t',\pi'(x_t'))\right]\\
        &=\mathbb{E}_{\tau_{\pi'}^{\mu_n}}\left[\sum_{t=0}^{T-1}\left( s_n(x_t',\pi'(x_t')) - s_n(x_t',\pi(x_t')) \right) \right] \\
        &\quad + \mathbb{E}_{\tau_{\pi'}^{\mu_n}}\left[\sum_{t=0}^{T-1} \mathbb{E}_{x_{t+1}'\mid x_t',\pi'(x_t')}\left[J_{s_n,k+1}(\pi,\mu_n,x_{t+1}') \right] - \mathbb{E}_{\Tilde{x}_{t+1}\mid x_t',\pi(x_t')}\left[J_{s_n,k+1}(\pi,\mu_n,\Tilde{x}_{t+1}) \right] \right] \\   
    &\overset{(i)}{\leq}
\mathbb{E}_{\tau_{\pi'}^{\mu_n}}\Bigg[
\sum_{t=0}^{T-1}
\min\Big\{
L_s\|\pi'(x_t') - \pi(x_t')\|^{1/2}, \,
2\sqrt{d_x\sigma_{\max}}
\Big\}
\Bigg] \\
&\quad +
\mathbb{E}_{\tau_{\pi'}^{\mu_n}}\Bigg[
\sum_{t=0}^{T-1}
\Bigg\{
\sqrt{
\mathbb{E}_{x_{t+1}'\mid x_t',\pi'(x_t')}
\big[
J^2_{s_n,k+1}(\pi,\mu_n,x_{t+1}')
\big]
}
\\
&\qquad\qquad \times
\min\Big\{
\frac{\|\mu_n(x_t',\pi'(x_t')) - \mu_n(x_t',\pi(x_t'))\|}{\sigma_w},
1
\Big\}
\Bigg\}
\Bigg] \qquad \text{(\cite{KakadeInfo}, Lemma C.2)} \\
&\overset{(ii)}{\leq}
\mathbb{E}_{\tau_{\pi'}^{\mu_n}}\Bigg[
\sum_{t=0}^{T-1}
\Bigg\{
\min\Big\{
L_s\|\pi'(x_t') - \pi(x_t')\|^{1/2}, \,
2\sqrt{d_x\sigma_{\max}}
\Big\} \\
&\qquad\qquad +
T\sqrt{d_x\sigma_{\max}}
\min\Big\{
\frac{L_{\mu_n}\|\pi'(x_t') - \pi(x_t')\|}{\sigma_w},
1
\Big\}
\Bigg\}
\Bigg] \\
&\leq
\mathbb{E}_{\tau_{\pi'}^{\mu_n}}\Bigg[
\sum_{t=0}^{T-1}
\Bigg\{
L_s\|\pi'(x_t') - \pi(x_t')\|^{1/2}
+
T\sqrt{d_x\sigma_{\max}}
\frac{L_{\mu_n}\|\pi'(x_t') - \pi(x_t')\|}{\sigma_w}
\Bigg\}
\Bigg]
\end{align*}
    where in $(i)$-$(ii)$ we used the Hölder bounds provided by Lemma \ref{lemma:continuity_sigma}. Finally, using that $\|\pi'-\pi\|_{\infty}\leq \varepsilon_\pi$ by assumption we get
    \begin{align*}
J_{s_n}(\pi',\mu_n) - J_{s_n}(\pi, \mu_n)
&\leq \mathbb{E}_{\tau_{\pi'}^{\mu_n}}\Bigg[
\sum_{t=0}^{T-1} \Big\{
L_s \|\pi'(x_t') - \pi(x_t')\|^{1/2} \\
&\quad + T\sqrt{d_x\sigma_{\max}} 
\frac{L_{\mu_n} \|\pi'(x_t') - \pi(x_t')\|}{\sigma_w}
\Big\}
\Bigg]\\
        &\leq\sqrt{\varepsilon_\pi} T\left(L_s + TL_{\mu_n}\frac{\sqrt{d_x\sigma_{\max}\varepsilon_\pi}}{\sigma_w}\right)\eqqcolon \sqrt{\varepsilon_\pi} C_\pi^n.
    \end{align*}
    
\end{proof}

\begin{lemma}\label{lemma:pi_eps_subset_pi_safe}
Let Assumptions \ref{assump:process_noise}-\ref{assump:information_gain} hold. Fix $n^\star$ as in \eqref{eq: sample complexity} and $\varepsilon$, $\zeta$ and $\xi$ as in Theorem \ref{thm:optimality}. 
    Then, at episode $n^\star$, it holds $ \Pi_{\xi}^{\star, c}\subseteq\Pi^{n^\star}_{\text{safe}}$.
\end{lemma}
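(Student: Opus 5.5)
The plan is a discrete induction along the path of \Cref{assump:reachability}, stepping backwards from $\Pi_\text{prior}$ to $\pi$ in steps of a fixed, uniform size. Throughout I condition on the success event of \Cref{lemma:confidence_gp}, so that \Cref{lemma:specialized_bound_mean_uncertainty} holds. I also use two facts established earlier:
\begin{enumerate*}[label=(\roman*)]
\item $\Pi_\text{prior}\subseteq\Pi^{n^\star}_{\text{safe}}$, by the feasibility discussion in \Cref{appendix:discussions};
\item at termination, $J_{s_{n^\star}}(\pi',\mu_{n^\star})<d_\sigma^{n^\star}$ for every $\pi'\in\Pi^{n^\star}_{\text{safe}}$, by \Cref{thm:complexity}.
\end{enumerate*}

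\textbf{Step 1 (one-step extension).} Fix any $\pi'\in\Pi^{n^\star}_{\text{safe}}$ and any $\pi''\in\Pi_\xi^\star$ with $\|\pi''-\pi'\|_\infty\le\varepsilon_\pi$. I will show that $\pi''\in\Pi^{n^\star}_{\text{safe}}$ once $\varepsilon_\pi$ is small enough.

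By \Cref{lemma:concentration_sigma} applied at episode $n^\star$,
\[
J_{s_{n^\star}}(\pi'',\mu_{n^\star})\le d_\sigma^{n^\star}+\sqrt{\varepsilon_\pi}\,C_\pi^{n^\star}.
\]
Plugging this into \Cref{lemma:specialized_bound_mean_uncertainty} gives, for every $m\in[M]$,
\[
J_c(\pi'',f_{n^\star}^m)\le J_c(\pi'',f^\star)+2\lambda^c_{n^\star}\bigl(d_\sigma^{n^\star}+\sqrt{\varepsilon_\pi}C_\pi^{n^\star}\bigr).
\]
With the choices of $d_\sigma^{n^\star}$ and $\lambda^c_{n^\star}$, the first part of the correction is $2\lambda^c_{n^\star}d_\sigma^{n^\star}=\varepsilon\,C_{\max}/G_{\max}\le\varepsilon$. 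Since $\pi''\in\Pi_\xi^\star$, we also have $J_c(\pi'',f^\star)\le d-\xi$. Together,
\[
J_c(\pi'',f_{n^\star}^m)\le d-\xi+\varepsilon+2\lambda^c_{n^\star}\sqrt{\varepsilon_\pi}\,C_\pi^{n^\star}.
\]
The assumption $\xi>\varepsilon+\Delta_\zeta$ is strict, so I choose $\varepsilon_\pi>0$ with
\[
2\lambda^c_{n^\star}\sqrt{\varepsilon_\pi}\,C_\pi^{n^\star}\le\xi-\varepsilon-\Delta_\zeta .
\]
Then $J_c(\pi'',f^m_{n^\star})\le d-\Delta_\zeta$ for all $m$, i.e.\ $\pi''\in\Pi^{n^\star}_{\text{safe}}$. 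This is possible because $C_\pi^{n^\star}$ stays bounded as $\varepsilon_\pi\to0$. The choice depends only on $n^\star$, not on $\pi'$ or $\pi''$, so the step size is uniform.

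\textbf{Step 2 (walking the path).} Take $\pi\in\Pi_\xi^{\star,c}$ and its path $\rho:[0,1]\to\Pi_\xi^{\star,c}$ with $\rho(0)=\pi$ and $\rho(1)\in\Pi_\text{prior}$. The path $\rho$ is continuous on the compact interval $[0,1]$, hence uniformly continuous. So there is $\eta>0$ such that $|t-t'|\le\eta$ implies $\|\rho(t)-\rho(t')\|_\infty\le\varepsilon_\pi$.

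Take a grid $1=t_0>t_1>\dots>t_K=0$ with mesh at most $\eta$. The start $\rho(t_0)=\rho(1)$ lies in $\Pi^{n^\star}_{\text{safe}}$ by fact (i). Each $\rho(t_k)$ lies in $\Pi_\xi^\star$ by definition of the path. Applying Step 1 inductively gives $\rho(t_k)\in\Pi^{n^\star}_{\text{safe}}$ for every $k$. Hence $\pi=\rho(0)\in\Pi^{n^\star}_{\text{safe}}$.

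\textbf{Main obstacle.} The delicate step is justifying \Cref{lemma:concentration_sigma} with a constant that is uniform over all policies at the fixed episode $n^\star$. This requires the Hölder continuity of $s_{n^\star}$ (constant $L_s$) and the Lipschitz continuity of $\mu_{n^\star}$ (constant $L_{\mu_{n^\star}}$), both derived from \Cref{assumption:continuity_sigma}. These constants may grow with $n$, but at fixed $n^\star$ they are finite, and finiteness is all the argument needs.

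A second point to keep explicit is the role of the strict inequalities. The strict $J_{s_{n^\star}}<d_\sigma^{n^\star}$ from termination and the strict margin $\xi>\varepsilon+\Delta_\zeta$ together leave positive slack to absorb the continuity error. Without that slack, the step size $\varepsilon_\pi$ could shrink to zero and the finite walk along the path would fail.
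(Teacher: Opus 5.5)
Your proposal is correct and uses the same ingredients as the paper's proof. These are the choice of $\varepsilon_\pi$ with $2\lambda^c_{n^\star}\sqrt{\varepsilon_\pi}\,C_\pi^{n^\star}\le\xi-\varepsilon-\Delta_\zeta$; \Cref{lemma:concentration_sigma} combined with the termination condition and \Cref{lemma:specialized_bound_mean_uncertainty}, to pass membership in $\Pi^{n^\star}_{\text{safe}}$ to a nearby policy of $\Pi_\xi^\star$; and the path of \Cref{assump:reachability} anchored in $\Pi_\text{prior}\subseteq\Pi^{n^\star}_{\text{safe}}$. The paper argues by contradiction with two close path points lying just inside and just outside $\Pi^{n^\star}_{\text{safe}}$, and your forward induction on a grid chosen via uniform continuity of $\rho$ is a more explicit version of that step (the needed slack comes only from $\xi>\varepsilon+\Delta_\zeta$; strictness of the termination inequality is not needed).
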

\begin{proof}
    Assume, for the sake of contradiction, that $ \Pi_{\xi}^{\star, c}\setminus\Pi^{n^\star}_{\text{safe}}$ is nonempty.
    First, we can choose $\varepsilon_\pi>0$ sufficiently small to satisfy $2\sqrt{\varepsilon_\pi} C_\pi^{n^\star} \lambda_{n^\star}^c \leq\xi-\varepsilon - \Delta_\zeta$, where $C_\pi^{n^\star}$ is defined in Lemma \ref{lemma:concentration_sigma} and constant (since we consider a fixed $n^\star$.
    Then, under the  hypothesis that $\Pi_{\xi}^{\star, c}\setminus \Pi_{\textrm{safe}}^{n^\star}$ is nonempty and under the definition of path-connectedness of $\Pi_{\xi}^{\star, c}$, there exists $\pi_{\text{safe}} \in\Pi^{n^\star}_{\text{safe}}$
    and $\pi'\in \Pi_{\xi}^{\star, c}\setminus\Pi^{n^\star}_{\text{safe}}$ such that $\|\pi' - \pi_{\text{safe}}\|_{\infty} \leq \varepsilon_\pi$. This is because 
    by the definition of path-connectedness of $\Pi_{\xi}^{\star, c}$, we can connect a policy in $ \Pi_{\xi}^{\star, c}\setminus\Pi^{n^\star}_{\text{safe}}$ to some policy in $\Pi_\text{prior} \subseteq \Pi^{n^\star}_\text{safe}$ with a continuous line $\rho$ in the policy space, and then choose along this line $\pi_{\text{safe}}$ and $\pi'$ arbitrarily close and such that they lie right inside and right outside of $\Pi^{n^\star}_{\text{safe}}$ respectively. 

    For these policies we can then apply Lemma \ref{lemma:concentration_sigma} to bound:
    \begin{align*}
        J_{s_{n^\star}}(\pi',\mu_{n^\star})\leq J_{s_{n^\star}}(\pi_{\text{safe}},\mu_{n^\star}) + \sqrt{\varepsilon_\pi} C_\pi.
    \end{align*}

This bound can be substituted in when evaluating (\ref{bound_cost:every_policy})  along the trajectories generated by $\pi'$(as remember that (\ref{bound_cost:every_policy}) holds $\forall \pi \in \Pi$):
\begin{align}
    &|J_c(\pi',f^{\star}) - J_c(\pi', f_{n^\star}^m)| \leq 2\lambda_{n^\star}^cJ_{s_{n^\star}}(\pi',\mu_{n^\star})\\
    &\leq 2 \lambda_{n^\star}^c\left( J_{s_{n^\star}}(\pi_{\text{safe}},\mu_{n^\star}) + \sqrt{\varepsilon_\pi} C_\pi^{n^\star}\right)\\
    &\leq2\lambda_{n^\star}^c\left(d_\sigma^{n^\star} + \sqrt{\varepsilon_\pi} C_\pi^{n^\star}\right)\\
    &\leq \varepsilon + 2\sqrt{\varepsilon_\pi} C_\pi^{n^\star} \lambda_{n^\star}^c\leq  \xi-\Delta_\zeta.
\end{align}
From this it follows, for all $m\in[M]$:
\begin{equation}
    J_c\left(\pi',  f_{n^\star}^m\right)\leq J_c\left(\pi', f^{\star}\right) + \xi-\Delta_\zeta \leq d - \xi +  \xi-\Delta_\zeta \leq d - \Delta_\zeta.
\end{equation}
This means that $\pi'\in\Pi^{n^\star}_{\text{safe}}$, which is a contradiction, thus proving the statement.
\end{proof}

\begin{lemma}\label{lemma:continuity_sigma}
 Under assumption \ref{assumption:continuity_sigma}, we have for all $z,z'\in\calZ$ and $n\geq0$ that $|s_n(z)-s_n(z')|\leq L_s\|z-z'\|^{1/2}$ and $|\mu_n(z)-\mu_n(z')|\leq L_{\mu_n}\|z-z'\|$, where $L_s\coloneqq\sqrt{2d_xL_k}$ and $L_{\mu_n}\coloneqq \sqrt{\sum_{j=1}^{d_x}\left(L_\mu+\sqrt{nT}L_k\frac{\|y_{n,j}-\mu_j\|}{\sigma_w^2}\right)^2}$.
\end{lemma}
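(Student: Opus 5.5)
The plan is to treat the two claims separately, using only the closed-form posterior expressions in \eqref{eq: gp}. I read \Cref{assumption:continuity_sigma} for $k$ as Lipschitz continuity in each argument separately, i.e.\ $|k(z,u)-k(z',u)|\le L_k\|z-z'\|$ for all $u$. Both bounds are first proved for a single coordinate $j$ and then aggregated over the $d_x$ coordinates.

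\textbf{Uncertainty.} Let $k_n(z,z') := k(z,z') - \mathbf k_{n-1}(z)^\top(\mathbf K_{n-1}+\sigma_w^2 I)^{-1}\mathbf k_{n-1}(z')$ denote the posterior kernel, so that $\sigma_{n,j}^2(z)=k_n(z,z)$. Since $k_n$ is positive semidefinite, there is a feature map $\psi_n$ with $\sigma_{n,j}(z)=\|\psi_n(z)\|$.
\begin{itemize}
\item The reverse triangle inequality then gives
\[
|\sigma_{n,j}(z)-\sigma_{n,j}(z')|\le\|\psi_n(z)-\psi_n(z')\| = \sqrt{k_n(z,z)-2k_n(z,z')+k_n(z',z')}.
\]
\item The radicand is the posterior variance of $\tilde f_j(z)-\tilde f_j(z')$. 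Equivalently, it is the prior quantity minus a quadratic form $v^\top(\mathbf K_{n-1}+\sigma_w^2I)^{-1}v\ge 0$ with $v=\mathbf k_{n-1}(z)-\mathbf k_{n-1}(z')$. Hence it is at most $k(z,z)-2k(z,z')+k(z',z')$.
\item Splitting this as $[k(z,z)-k(z,z')]+[k(z',z')-k(z,z')]$ and using per-argument Lipschitzness bounds it by $2L_k\|z-z'\|$.
\item This yields $|\sigma_{n,j}(z)-\sigma_{n,j}(z')|\le\sqrt{2L_k}\,\|z-z'\|^{1/2}$ for every $j$; note $\sigma_{n,j}$ does not actually depend on $j$.
\end{itemize}
Finally, by the reverse triangle inequality in $\mathbb R^{d_x}$,
\[
|s_n(z)-s_n(z')|\le\|\sigma_n(z)-\sigma_n(z')\|\le\sqrt{d_x\cdot 2L_k\|z-z'\|}=L_s\|z-z'\|^{1/2}.
\]

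\textbf{Mean.} Write $\alpha_j := (\mathbf K_{n-1}+\sigma_w^2I)^{-1}(\mathbf y_{n-1,j}-\boldsymbol\mu_{n-1,j})$. Then
\[
\mu_{n,j}(z)-\mu_{n,j}(z') = \bigl(\mu_j(z)-\mu_j(z')\bigr) + \bigl(\mathbf k_{n-1}(z)-\mathbf k_{n-1}(z')\bigr)^\top\alpha_j .
\]
\begin{itemize}
\item The first term is at most $L_\mu\|z-z'\|$.
\item For the second term, apply Cauchy--Schwarz. The kernel vectors have $nT$ entries, each differing by at most $L_k\|z-z'\|$, so $\|\mathbf k_{n-1}(z)-\mathbf k_{n-1}(z')\|\le\sqrt{nT}L_k\|z-z'\|$.
\item Since $\mathbf K_{n-1}\succeq0$, we have $\|(\mathbf K_{n-1}+\sigma_w^2I)^{-1}\|_{\mathrm{op}}\le\sigma_w^{-2}$, which gives $\|\alpha_j\|\le\|\mathbf y_{n,j}-\mu_j\|/\sigma_w^2$. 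Here $\|\mathbf y_{n,j}-\mu_j\|$ is the paper's shorthand for the residual vector on the collected data.
\end{itemize}
This gives the coordinate bound $|\mu_{n,j}(z)-\mu_{n,j}(z')|\le\bigl(L_\mu+\sqrt{nT}L_k\|\mathbf y_{n,j}-\mu_j\|/\sigma_w^2\bigr)\|z-z'\|$. Summing the squares over $j$ and taking a square root gives exactly $L_{\mu_n}\|z-z'\|$.

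\textbf{Main obstacle.} The only nonroutine step is the monotonicity claim, namely that posterior variance of the increment is at most prior variance. This is what turns a Lipschitz kernel into a $1/2$-Hölder $\sigma_n$ with a constant that is uniform in $n$. I would justify it directly from the Schur-complement form above rather than probabilistically, so no GP-sample interpretation is needed. A minor bookkeeping issue is the index convention ($n$ versus $n-1$ episodes of data, i.e.\ $nT$ points), which I would align with the definition of $\mathbf K_{n-1}$.
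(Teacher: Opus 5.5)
Your proposal is correct and follows the paper's proof step for step. For $s_n$ the paper likewise combines the reverse triangle inequality, the bound $(\sigma_{n,j}(z)-\sigma_{n,j}(z'))^2\le k(z,z)-2k(z,z')+k(z',z')$ and Lipschitzness of $k$; for $\mu_n$ it uses the same triangle inequality, Cauchy--Schwarz, $\|(\mathbf K_{n-1}+\sigma_w^2 I)^{-1}\|\le\sigma_w^{-2}$ and the $\sqrt{nT}$ entrywise bound on the kernel vector. The only difference is that you prove the increment-variance bound directly from the Schur-complement form, whereas the paper simply cites Lemma 12 of \citet{HURCL} for it.
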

\begin{proof}
    For $s_n$ we can  write:
    \begin{align*}
        |s_n(z)-s_n(z')|&=\mid\|\sigma_n(z)\|-\|\sigma_n(z')\|\mid\\
        &\overset{(i)}{\leq}\|\sigma_n(z)-\sigma_n(z')\|\\
        &=\sqrt{\sum_{j=1}^{d_x}(\sigma_{n,j}(z)-\sigma_{n,j}(z'))^2} \\
        &\overset{(ii)}{\leq}\sqrt{d_x}\sqrt{k(z,z)-k(z,z')+k(z',z')-k(z',z)}\\
        &\overset{(iii)}{\leq}\sqrt{2d_xL_k}\|z-z'\|^{1/2}
    \end{align*}
where $(i)$ follows from the reverse triangular inequality, $(ii)$ follows by applying Lemma 12 of~\citet{HURCL} componentwise to the scalars $\sigma_{n,j}$ (note that this does not rely on having a bounded domain), and $(iii)$ follows from \Cref{assumption:continuity_sigma}.

For the mean, we can write for each $j\in[d_x]$:
\begin{align*}
    |\mu_{n,j}(z)-\mu_{n,j}(z')|&\leq|\mu_j(z)-\mu_j(z')|+\|k_n(z)-k_n(z')\|\|\left(K_n+\sigma_w^2I\right)^{-1}(y_{n,j}-\mu_j)\|\\
    &\leq L_\mu\|z-z'\|+\frac{1}{\sigma_w^2}\|y_{n,j}-\mu_j\|\|k_n(z)-k_n(z')\|\\
    &\leq L_\mu\|z-z'\|+\frac{1}{\sigma_w^2}\|y_{n,j}-\mu_j\|\sqrt{nT}\max_{i\in[nT]}\mid k(z_i,z)-k(z_i,z')\mid\\
    &\leq L_\mu\|z-z'\|+\frac{1}{\sigma_w^2}\|y_{n,j}-\mu_j\|\sqrt{nT}L_k\|z-z'\|\\
    &\leq \left(L_\mu+\sqrt{nT}L_k\frac{\|y_{n,j}-\mu_j\|}{\sigma_w^2}\right)\|z-z'\|
\end{align*}
Therefore, $\|\mu_n(z)-\mu_n(z')\|\leq L_{\mu_n}\|z-z'\|$.
\end{proof}

\section{Experiment Details} \label{appendix:experiments}
In this section, we provide additional details about the experimental evaluation presented in Section~\ref{section:experiments}.
\subsection{GP Experiments}
The code for reproducing our experiments is available at \href{https://anonymous.4open.science/r/safe-model-based-exploration-SBSRL/README.md}{this repository}, which includes further details on the implementation, experimental setup, and hyperparameters. All experiments were conducted using a single NVIDIA RTX 4090 GPU per run, along with 5 CPU cores and 40GB of RAM. Under this setup, each experiment completes in under 1.5 hours.

\paragraph{Implementation details.}
In the GP implementation, we approximate Problem~(\ref{pb:1}) with the following unconstrained optimization problem
\begin{equation*}
\arg\max_{\pi \in \Pi} \; J_r(\pi, \mu_n)
\;-\;
\lambda_c\, \sum_{m=1}^{M}\max\left( J_c(\pi,  f_n^m) - d + \Delta_\zeta,\; 0 \right) - \lambda_\sigma\max\left(d_\sigma^n-J_{s_{n}}(\pi, \mu_n),\;0 \right).
\end{equation*}
where $\lambda_c$ and $\lambda_\sigma$ are penalty coefficients used to discourage constraints violations. Note that unlike a proper primal-dual implementation, we fix these values in our experiments (as is done in \cite{as2025actsafe}). 
We solve the resulting optimization problem using the iCEM \cite{Pinneri2020SampleefficientCM} optimizer.
Following \cite{as2025actsafe}, we roll out a sequence of actions $\{a_t\}_{t=0}^{T-1}$ on the learned GP model using the TS1 scheme of \cite{chua2018deep}. Moreover, we maintain $M$ particles, and given the state $(s_t^m, a_t^m)$ for the $m$-th particle, we determine the next state $s_{t+1}^m$ by sampling from $\mathcal{N}\bigl(\mu_n(s_t^m, a_t^m), \sigma_n^2(s_t^m, a_t^m)\bigr)$. This differs from the pathwise conditioning procedure described in Appendix~\ref{appendix:discussions}, as it injects independent noise at each timestep; however, it is computationally more efficient, as it avoids conditioning on previously observed data.
Therefore, for each action sequence $\{a_t\}_{t=0}^{T-1}$ we obtain $M$ trajectories, which are used to evaluate $J_c(\pi,  f_n^m)$ and enforce the safety constraint.
Finally, the exploration constraint is implemented using a metric that combines epistemic and aleatoric uncertainty~\cite{OptiActive,as2025actsafe}, and following Algorithm~\ref{alg:mbrl1} it is deactivated once it becomes infeasible (up to some threshold). %, so as to continue training without being affected by the large exploration penalties that the planner would receive in this regime.
\paragraph{Design choices.}
The sampling method described above differs from our theory, in that we resample the dynamics directly from the posterior and avoid projecting them onto the confidence bound, rather than maintaining a fixed set of prior samples refined via truncation as analyzed in Section~\ref{section:algorithm}. As discussed in Appendix~\ref{appendix:discussions}, our safety guarantee naturally extends to this resampling scheme via a union bound across episodes. We adopt this posterior-sampling approach to ensure a fair and direct comparison with our baselines, allowing us to faithfully replicate the experimental setup of \citet{as2025actsafe}. 
As a second simplification, we keep the exploration threshold $d_\sigma^n$ fixed across episodes rather than progressively tightening it. Although both variants are implemented in our code, we found empirically that this simplification preserves performance while improving training stability. Overall, these design choices simplify our implementation without altering the qualitative behavior predicted by our theoretical analysis. Finally, we fixed the number of samples to $M=30$ across all experiments. As explained in Section~\ref{section:algorithm}, our approach induces a natural trade-off between computational cost and conservatism: since using more samples than prescribed by Lemma~\ref{lem: sampling} does not compromise safety, one can in principle select the largest number of samples permitted by the available computational budget, and subsequently choose the constraint tightening accordingly. For instance, in our experiments we match the number of samples used by \citep{as2025actsafe} to ensure comparable computational cost. Under this setup, we then empirically observe that smaller constraint tightenings than those required by \algname{ActSafe} are sufficient to maintain safety.

\paragraph{Experimental setup.}
We evaluate \algname{SBSRL} on the \textsc{Pendulum} and \textsc{Cartpole} environments, focusing on the \textsc{SwingUp} task. In \textsc{Cartpole}, following \cite{as2025actsafe}, we assume access to an initial offline dataset to ensure safe initialization (as required by Assumptions~\ref{assump:rkhs}–\ref{assump:prior_knowledge}), whereas no such prior is required for \textsc{Pendulum}. For completeness, \Cref{fig:pendulum_multi} complements \Cref{fig:experiments_gp}, including both cumulative rewards and training curves for the same safety experiment.
Among the considered baselines, \algname{ActSafe} includes an initial purely exploratory phase, during which rewards are not optimized, before transitioning to exploitation until the end of the simulation. In \Cref{fig:gp_pendulum_exploration} we report two possible choices of the transition episode.
\begin{figure} 
    \centering
    
    \begin{minipage}{0.48\textwidth}
        \centering
    \includegraphics{figures/barplot_reward_cost_legend_PendulumGP_multi_notes.pdf}
    \end{minipage}
    \hfill
    \begin{minipage}{0.48\textwidth}
        \centering
        \includegraphics{figures/barplot_reward_cost_legend_CartPoleGP_multi_notes.pdf}
    \end{minipage}
    \includegraphics{figures/legend_only_bar_plot_CartPoleGP_multi_notes.pdf}

    \begin{minipage}{0.48\textwidth}
        \centering
    \includegraphics{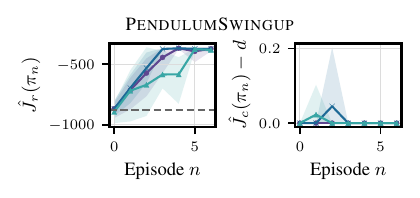}
    \end{minipage}
    \hfill
    \begin{minipage}{0.48\textwidth}
        \centering
        \includegraphics{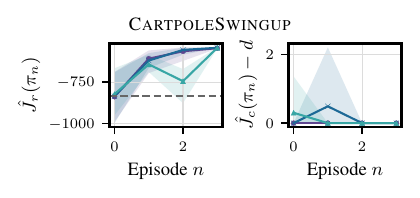}
    \end{minipage}
\includegraphics{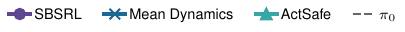}
 \caption{GP experiments demonstrating the effectiveness of using sampling to incentivize safety. The curves report mean and $95$ percentile interval over five seeds, while the bar charts visualize  mean and standard deviation of the normalized cumulative returns and maximum cost violations over five seeds.}
\label{fig:pendulum_multi}
\end{figure}

\paragraph{Exploration constraint}
We provide an ablation study on the choice of $d_\sigma^0$ in \Cref{fig:ablation_exploration}. While the theoretical definition of $d_\sigma^n$ in \Cref{thm:merged} is difficult to estimate in practice, we find that \algname{SBSRL} is not overly sensitive to its exact value. Choosing an intermediate value (neither too large nor too small) can improve performance; however, safety is not affected, since it is, at least in theory (ignoring approximation errors introduced by solving the constrained optimization problem), decoupled from exploration.
Empirically, \Cref{fig:ablation_exploration} shows how setting $d_\sigma^0$ too large can deteriorate performance, since the constraint is deactivated early and the behavior quickly converges to that of $d_\sigma^0 = 0$. Conversely, choosing it very small can in principle lead to better asymptotic performance as suggested by \Cref{thm:merged}, but it also increases sample complexity and thus can ultimately reduce the cumulative reward, as illustrated in the figure.

\begin{figure}
\vspace{-0.4cm}
    \centering
    \begin{minipage}{0.49\textwidth}
        \centering
\includegraphics{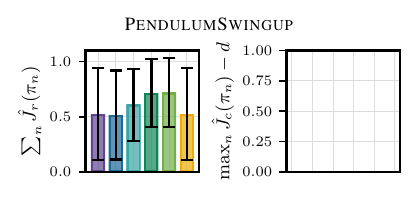}
    \vspace{-0.5cm}
    \end{minipage}
    \hfill
    \begin{minipage}{0.49\textwidth}
        \centering
    \includegraphics{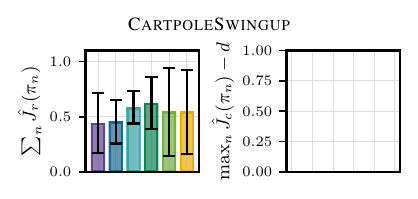}
    \vspace{-0.5cm}
    \end{minipage}
\includegraphics[width=\textwidth]{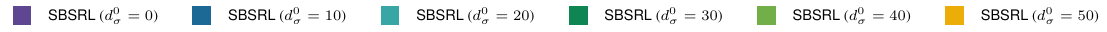}
    \caption{GP experiments ablating different values of $d_\sigma^0$ in the exploration constraint. The bar charts report the mean and standard deviation over five seeds for the normalized cumulative return and maximum cost violation.}
    \vspace{-0.5cm}
    \label{fig:ablation_exploration}
\end{figure}

\paragraph{Rewards and constraints.}
We adopt reward and constraint definitions consistent with prior work on safe model-based reinforcement learning \cite{as2025actsafe}. For both \textsc{Pendulum} and \textsc{Cartpole} environments, let $\theta$ denote the pole angle, $\omega$ its angular velocity, and $u$ the control input. We define the angular deviation from a desired target angle $\theta_{\text{target}}$ as $\Delta \theta = \theta - \theta_{\text{target}}$.
In the \textsc{Pendulum} environment, the objective penalizes deviations from the target configuration as well as large angular velocities and control inputs. The reward and cost are given by
\begin{equation*}
    r_{\text{Pendulum}} = -\left(\Delta \theta^2 + 0.1 \, \omega^2 + 0.02 \, u^2 \right), \quad
    c_{\text{Pendulum}} = |\omega|, \quad d = 6.
\end{equation*}
For the \textsc{Cartpole} system, we additionally consider the cart position $p$ and velocity $v$. The reward encourages stabilization of both the pole and the cart while penalizing control effort:
\begin{equation*}
    r_{\text{Cartpole}} = -\left(\Delta \theta^2 + p^2 + 0.1 \left( v^2 + \omega^2 \right) \right) - 0.01 \, u^2,
\end{equation*}
while the cost captures violations of a position constraint on the cart,
\begin{equation*}
    c_{\text{Cartpole}} = |p|, \quad d = 1.5.
\end{equation*}

\subsection{Hardware Experiment}
While Gaussian Processes closely align with our theoretical framework, their cubic complexity in the number of data points limits scalability to high-dimensional control tasks. To address this, we also consider a scalable instantiation of \algname{SBSRL} based on neural network dynamics models, enabling application to hardware, but also to standard continuous control benchmarks~\cite{Ray2019,challengesrealworld}. The code for this implementation is available at \href{https://anonymous.4open.science/r/safe-learning-SBSRL/README.md}{this repository}.

\paragraph{Implementation details.}
We instantiate \algname{SBSRL} within a model-based actor-critic framework inspired by MBPO~\cite{mbpo}. The system dynamics are modeled using a probabilistic ensemble of neural networks~\cite{lakshminarayanan2017simple,chua2018deep}, which enables multiple predictions of the dynamics for enforcing safety and provides an estimate of epistemic uncertainty~\cite{depeweg2018decomposition} via the empirical standard deviation across ensemble members, which we exploit to implement the exploration constraint.
To approximately solve Problem~(\ref{pb:1}), we employ a mixture of real and model generated rollouts, and perform a fixed number of interleaved actor-critic updates, following standard practice in model-based RL \cite{mbpo}. In contrast to standard MBPO-style implementations, in order to approximate the sampling-based nature of our method we retain all ensemble predictions during rollouts, effectively simulating multiple dynamics realizations and enforcing safety constraints across them.
To support this, we augment the cost critic to condition on the index $m$ of the dynamics sample, thereby approximating
\begin{equation*}
    J_c(\pi,  f_n^m)\approx \mathbb{E}_{\pi, x\sim\rho_0} \left[ Q_c(x, \pi(x), m) \right] .
\end{equation*}
Exploiting the structural similarity between $J_c$ and $J_{s_n}$, we adopt the same approach to estimate the cumulative uncertainty along trajectories
\begin{equation*}
    J_{s_n}(\pi,  f_n^m)\approx\mathbb{E}_{\pi, x\sim\rho_0} \left[ Q_\sigma(x, \pi(x), m) \right] .
\end{equation*}
This enables seamless integration of the exploration constraint within a standard actor-critic pipeline.

\paragraph{Design choices.}
Constraint satisfaction is enforced using an Augmented Lagrangian method~\cite{NoceWrig06}, which provides strong empirical performance, consistent with prior work~\cite{Li2021AugmentedLM,as2022constrained}, although other solvers could be used in principle.
Furthermore, while the theoretical analysis prescribes a sample size $M$ satisfying \Cref{lem: sampling} for GPs, we found that fixing $M = 5$ across all experiments (both on hardware and in simulation) was sufficient for the implementation using deep neural networks to capture uncertainty while maintaining computational efficiency. Finally, note that in this implementation both reward and cost functions are unknown and therefore estimated via learned critics, as is standard in actor-critic methods.
\paragraph{Hardware setup.}
We follow the experiment setup of \citet{wendl2026safe} and evaluate \algname{SBSRL} on a highly-dynamic remote-controlled race car operating at $60$ Hz. States are observed through a motion capture systems that tracks the vehicle trajectory and sends real-time measurements to the controller. While the underlying system is characterized by a $7$-dimensional state and a $2$-dimensional action space, we concatenate past observations and control inputs using a sliding window. This mitigates delays arising from both actuation and motion capture, resulting in a 45-dimensional state-action representation. Such a representation is challenging for GP-based methods and motivates the use of deep ensembles.
The task is to reach a target position while satisfying safety constraints, represented by obstacles that must be avoided by the vehicle (see Figure~\ref{fig:hardware_flagship}). Constraint violations are quantified using measurements from the motion capture system, and represent the kinetic energy dissipated during plastic collisions with the obstacles. The budget is set to $d = 15$, allowing for minor contacts with the tires while penalizing large collisions.
In order to initialize our algorithm with a prior model that approximately satisfies \Cref{assump:rkhs,assump:prior_knowledge}, we first collect 25K real-world transitions using a safe but suboptimal policy. This dataset is then used to empirically \emph{calibrate} the model (and critics) of \algname{SBSRL} offline. Finally, we deploy \algname{SBSRL} online on the real system. At each episode, a policy is rolled-out on the real system to collect more data, which is added to the replay buffer to update the model before the next episode. We repeat the experiment for $75$ episodes with three random seeds and report the mean and standard error of the cumulative rewards and costs after each iteration $n$ in Figure \ref{fig:hardware_experiment}. 
We compare \algname{SBSRL} against a version of the same algorithm that does not use pessimism to enforce safety, i.e. it only enforces the constraint on the mean critic rather than across the ensemble. To ensure a fair comparison, both methods are initialized with the same offline prior, which includes the dynamics model and critics that were trained on the offline dataset.

\section{Additional Experiments}\label{appendix:additional}
\subsection{Experiments in Simulated Environments}
We further evaluate \algname{SBSRL} on standard continuous control benchmarks, including SafetyGym~\citep{Ray2019} and RWRL~\citep{challengesrealworld}. In these experiments, we follow the setup of~\cite{as2025spidrsimpleapproachzeroshot,wendl2026safe}; we refer the reader to these works for a detailed description of the tasks and environments. For these experiments we use a single NVIDIA RTX 4090 GPU per run, with 5 CPU cores and 40 GB of RAM. Multiple random seeds are executed in parallel via a Slurm cluster. This setup enables us to train our algorithm in less than an hour and a half for each experiment.
\paragraph{Safe offline-to-online.}
Ensuring safety when training neural network dynamics models from scratch is challenging without prior knowledge, as the initial model is typically poorly calibrated~\cite{as2025actsafe}. Our theoretical framework assumes access to a sufficiently good prior (cf. \Cref{assump:rkhs,assump:prior_knowledge}), which is not directly satisfied in this setting.
To address this, we adopt an offline-to-online training pipeline. However, note that our theory is agnostic to how such a prior is obtained, so that alternative approaches such as sim-to-real could also be considered. Specifically, we assume access to an initial dataset collected by a suboptimal (and potentially unsafe) behavior policy. We first run our algorithm in an offline setting (i.e. using only the offline data) to learn a (empirically) calibrated dynamics model and associated critics. These are then used to initialize the online phase, allowing the agent to begin learning from a reliable model. The only modification required to run \algname{SBSRL} offline is to set $d_{\sigma}^n=0$, thereby deactivating the exploration constraint. This choice reflects the offline RL setting, where the agent should remain close to the data distribution rather than be encouraged to explore. If one instead wishes to impose explicit exploration penalties during the offline phase, as suggested by \citet{yu2020mopo}, our framework can naturally accommodate this by reversing the exploration constraint. In our experiments, however, this was not necessary, allowing us to retain a single practical algorithm for the full offline-to-online pipeline.
To fairly validate \algname{SBSRL} accounting for the variability in the offline initialization, we generate multiple priors by running $5$ independent offline training seeds on the same dataset. For each resulting prior, we then run $5$ independent online training seeds. This results in a total of $25$ runs per experiment, all of which are reported in our evaluation.

\paragraph{Benchmarks and baselines.}
We evaluate our method on several tasks from the RWRL benchmark~\cite{challengesrealworld} and SafetyGym~\cite{Ray2019}. 
In Figure \ref{fig:experiments_with_baselines}, we compare \algname{SBSRL} against two baselines from the literature. First, we consider \algname{MBPO}~\citep{mbpo}, which also serves as the base architecture for our implementation, as explained in the previous section. \algname{MBPO} is a model-based actor-critic method that uses a neural network ensemble to model the dynamics and generates rollouts via the TS1 scheme of~\citep{chua2018deep}, randomly selecting a different ensemble member at each step. Notably, \algname{MBPO} does not incorporate additional pessimism to enforce safety, which results in cost violations in some tasks. Second, we compare against \algname{SOOPER}~\citep{wendl2026safe}, a recent state-of-the-art safe RL algorithm. Its implementation is also based on an MBPO-like architecture, enabling a direct and fair comparison. In particular, we initialize both \algname{MBPO} and \algname{SOOPER} with a prior policy and backup critics that are constructed from the same offline run used to initialize \algname{SBSRL}.
Across all experiments, \algname{SBSRL} maintains safety at all episodes throughout training while improving rewards over the safe baselines. The plots report the mean and $95\%$ empirical percentile intervals over $25$ runs, generated from $5$ different offline priors as described above.
\begin{figure}
    \centering
    \includegraphics[width=0.8\linewidth]{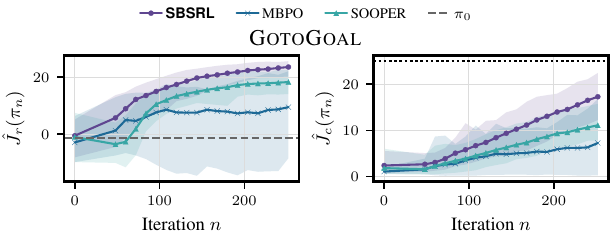}\\
\includegraphics[width=0.8\linewidth]{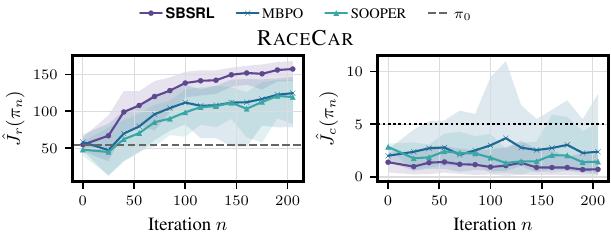}\\
\includegraphics[width=0.8\linewidth]{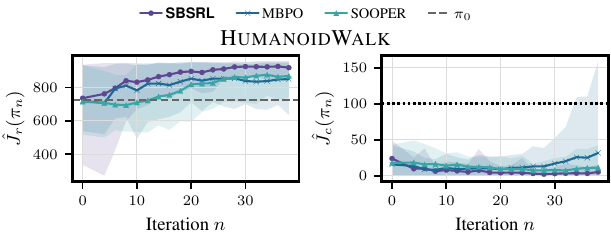}\\
\includegraphics[width=0.8\linewidth]{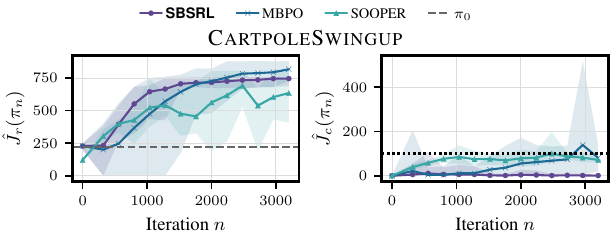}\\
\includegraphics[width=0.8\linewidth]{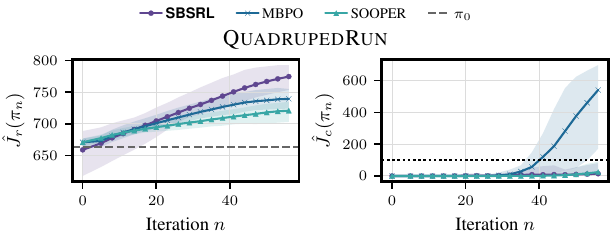}\\
    \caption{Learning curves comparing \algname{SBSRL} against standard baselines. The performance is evaluated in the environments \textsc{GoToGoal}, \textsc{RaceCar}, \textsc{Humanoid}, \textsc{Cartpole} and  \textsc{Quadruped} . We report the mean and $95$ percentile interval.
    }
    \label{fig:experiments_with_baselines}
\end{figure}

\paragraph{Constraint satisfaction through sampling.}
\looseness=-1In Figure \ref{fig:experiments_vertical} we provide a comparison of \algname{SBSRL} against a mean baseline that only uses the mean of the multiple critic predictions across ensemble members for planning, rather than accounting for each sampled dynamics model. The results show that \algname{SBSRL} achieves comparable task performance while consistently satisfying the safety constraints across all runs. We report the mean and $95\%$ empirical percentile intervals over all $25$ runs, highlighting robustness of \algname{SBSRL} to the offline prior. This explains the apparent high variance at initialization and provides a more informative measure of robustness than reporting standard errors alone.
\begin{figure}
    \centering
    \includegraphics[width=0.8\linewidth]{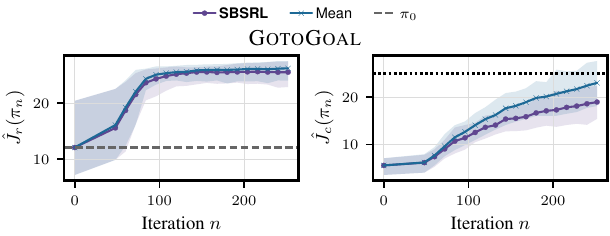}
    \includegraphics[width=0.8\linewidth]{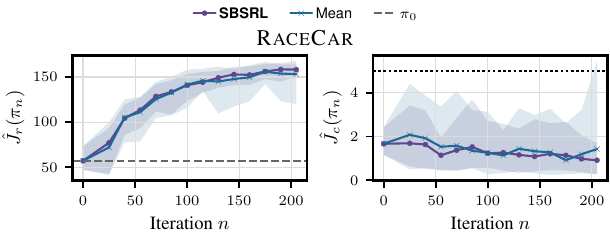}
    \includegraphics[width=0.8\linewidth]{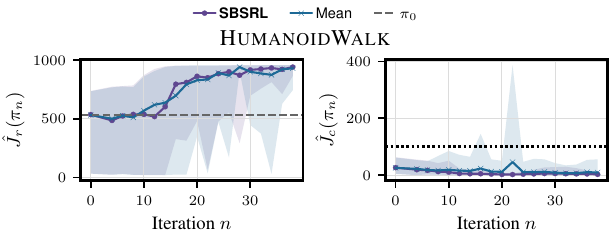}
    \includegraphics[width=0.8\linewidth]{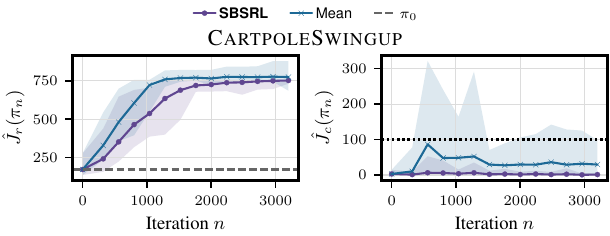}
    \includegraphics[width=0.8\linewidth]{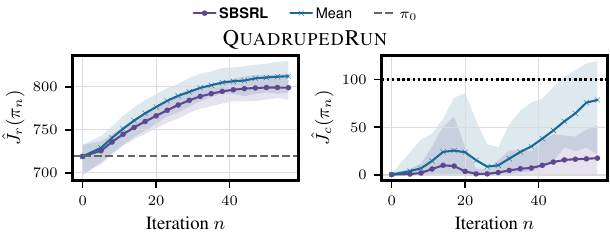}
    \caption{
    Learning curves comparing \algname{SBSRL} against the mean baseline. The performance is evaluated in the environments \textsc{GoToGoal}, \textsc{RaceCar}, \textsc{Humanoid}, \textsc{Cartpole} and  \textsc{Quadruped}. We report the mean and $95$ percentile interval.
    }
    \label{fig:experiments_vertical}
\end{figure}

%%%%%%%%%%%%%%%%%%%%%%%%%%%%%%%%%%%%%%%%%%%%%%%%%%%%%%%%%%%%

\newpage

\end{document}